\newcommand{\bbE}{\mathbb{E}}
\newcommand{\var}{\mathrm{Var}}
\newcommand{\softmax}{\mathrm{Softmax}}
\newcommand{\sigmoid}{\mathrm{sigmoid}}
\newcommand{\gelu}{\mathrm{GELU}}
\newcommand{\dint}{\mathrm{d}}
\newcommand{\dboijn}{\Delta \beta_{1ij}^{n}}
\newcommand{\daijn}{\Delta a_{ij}^{n}}
\newcommand{\dbijn}{\Delta b_{ij}^{n}}
\newcommand{\deijn}{\Delta \eta_{ij}^{n}}
\newcommand{\boin}{\beta_{1i}^n}
\newcommand{\bzin}{\beta_{0i}^n}
\newcommand{\ain}{a_i^n}
\newcommand{\bin}{b_i^n}
\newcommand{\ein}{\eta_i^n}
\newcommand{\boj}{\beta_{1j}^*}
\newcommand{\bzj}{\beta_{0j}^*}
\newcommand{\aj}{a_j^*}
\newcommand{\bj}{b_j^*}
\newcommand{\ej}{\eta_j^*}
\newcommand{\boi}{\beta_{1i}^*}
\newcommand{\bzi}{\beta_{0i}^*}
\newcommand{\zerod}{{0}_d}
\newcommand{\normf}[1]{\|#1\|_{L^2(\mu)}}
\DeclareMathOperator*{\argmin}{arg\,min}
\theoremstyle{plain}
\newtheorem{theorem}{Theorem}[section]
\newtheorem{proposition}[theorem]{Proposition}
\newtheorem{lemma}[theorem]{Lemma}
\theoremstyle{definition}
\newtheorem{definition}[theorem]{Definition}
\theoremstyle{remark}
\icmltitlerunning{On Least Square Estimation in Softmax Gating Mixture of Experts}
\begin{document}

\twocolumn[
\icmltitle{On Least Square Estimation in Softmax Gating Mixture of Experts}



\icmlsetsymbol{equal}{*}

\begin{icmlauthorlist}
\icmlauthor{Huy Nguyen}{austin}
\icmlauthor{Nhat Ho}{austin,equal}
\icmlauthor{Alessandro Rinaldo}{austin,equal}
\end{icmlauthorlist}

\icmlaffiliation{austin}{Department of Statistics and Data Sciences, The University of Texas at Austin, USA}

\icmlcorrespondingauthor{Huy Nguyen}{huynm@utexas.edu}

\icmlkeywords{Machine Learning, ICML}

\vskip 0.3in
]



\printAffiliationsAndNotice{\icmlEqualContribution} 

\begin{abstract}
    Mixture of experts (MoE) model is a statistical machine learning design that aggregates multiple expert networks using a softmax gating function in order to form a more intricate and expressive model. Despite being commonly used in several applications owing to their scalability, the mathematical and statistical properties of MoE models are complex and difficult to analyze. As a result, previous theoretical works have primarily focused on probabilistic MoE models by imposing the impractical assumption that the data are generated from a Gaussian MoE model. In this work, we investigate the performance of the least squares estimators (LSE) under a deterministic MoE model where the data are sampled according to a  regression model, a setting that  has remained largely unexplored. We establish a condition called strong identifiability to characterize the convergence behavior of various types of expert functions. We demonstrate that the rates for estimating strongly identifiable experts, namely the widely used feed-forward networks with activation functions $\sigmoid(\cdot)$ and $\tanh(\cdot)$, are substantially faster than those of polynomial experts, which we show to exhibit a surprising slow estimation rate. Our findings have important practical implications for expert selection.
\end{abstract}

\section{Introduction}
\label{sec:introduction}
Softmax gating mixture of experts (MoE) is introduced by \cite{Jacob_Jordan-1991,Jordan-1994} as a generalization of classical mixture models \cite{Mclachlan-1988,Lindsay-1995} based on an adaptive gating mechanism. More concretely, the MoE model is a weighted sum of expert functions associated with input-dependent weights. Here, each expert is either a regression function \cite{deveaux1989linear,faria2010regression} or a classifier \cite{chen2022theory,nguyen_general_2023} that specializes in smaller parts of a larger problem. Meanwhile, the softmax gate is responsible for determining the weight of each expert's output. If one expert consistently outperforms others in some domains of the input space, the softmax gate will assign it a larger weight in those domains. Thanks to its flexibility and adaptability, there has been a surge of interest in using the softmax gating MoE models in several fields, namely large language models \cite{jiang2024mixtral,puigcerver2024sparse,zhou2023brainformers,Du_Glam_MoE,fedus2021switch}, computer vision \cite{Riquelme2021scalingvision,liang_m3vit_2022,Ruiz_Vision_MoE}, multi-task learning \cite{hazimeh_dselect_k_2021,gupta2022sparsely} and reinforcement learning \cite{chow_mixture_expert_2023}. In those applications, each expert plays an essential role in handling one or a few subproblems. As a consequence, it is of practical importance to study the problem of expert estimation, which can be solved indirectly via the parameter estimation problem. 

Despite its widespread use in practice, the theory for parameter estimation of the MoE model has not been fully comprehended. From a probabilistic perspective, \cite{ho2022gaussian} studied the convergence of maximum likelihood estimation under an input-independent gating Gaussian MoE, which admits the following set-up:

\textbf{Set-up of a Gaussian MoE model.} An i.i.d sample $(X_1,Y_1),\ldots,(X_n,Y_n)$ are assumed to be drawn from a softmax gating Gaussian MoE model whose conditional density function $p_{G_*}(y|x)$ is of the form
$$\sum_{i = 1}^{k_{*}} \frac{\exp((\beta_{1i}^{*})^{\top} x + \beta_{0i}^{*})}{\sum_{j = 1}^{k_{*}} \exp((\beta_{1j}^{*})^{\top} x + \beta_{0j}^{*})}\cdot \pi(y|h(x,\eta^*_i), \nu_{i}^{*}),$$ where $\pi(\cdot|\mu,\nu)$ denotes a Gaussian density function with mean $\mu$ and variance $\nu$, and $h(\cdot,\eta)$ stands for a mean expert function. Additionally, $G_{*} := \sum_{i = 1}^{k_{*}} \exp(\beta_{0i}^{*}) \delta_{(\beta_{1i}^{*},\eta^*_i,\nu^*_i)}$ stands for the \emph{mixing measure,} a weighted sum of Dirac measures $\delta$, with unknown parameters  $(\beta^*_{0i},\beta^*_{1i},\eta^*_{i},\nu^*_{i})$.

By assuming that the data were generated from that model, they demonstrated that the density estimation rate was parametric on the sample size, while the parameter estimation rates depended on the algebraic independence between expert functions. Subsequently, \cite{nguyen2023demystifying} and \cite{nguyen2024gaussian} also considered the Gaussian MoE models but equipped with a softmax gate and a Gaussian gate, respectively, both of which vary with the input values. Owing to an interaction among gating and expert parameters, they showed that the rates for estimating parameters were determined by the solvability of some systems of polynomial equations. Additionally, \cite{makkuva19gridlock} also designed provably consistent algorithms for learning parameters of the softmax gating Gaussian MoE. Next, \cite{nguyen2024statistical} investigated a Top-K sparse gating Gaussian MoE model \cite{shazeer2017topk,fedus2022review}, which activated only one or a few experts for each input. Their findings suggested that turning on exactly one expert per input would remove the interaction of gating parameters with those of experts, and therefore, accelerate the parameter estimation rates. 

While the theoretical advances in MoE modeling from recent years have been remarkable, a persistent and significant limitation of all existing contributions in the literature is the reliance on the strong assumption of a well-specified model, namely that the data are sampled from a (say, Gaussian) MoE model. This is of course, an unrealistic assumption that does not reflect real-world data \cite{li2023sparse,pham2024competesmoe}. Unfortunately, very little is known about the statistical properties of MoE models in mis-specified but more realistic regression settings.

In this paper, we partially address this gap by introducing and analyzing a more general regression framework for MoE models in which, conditionally on the features, the response variables are not sampled from a gated MoE but are instead noisy realization of an unknown and deterministic gated MoE-type regression function, as described next.

{\bf Set-up.} We assume that an i.i.d. sample of size $n$: $(X_{1}, Y_{1}), (X_{2}, Y_{2}), \ldots, (X_{n}, Y_{n})$ in $\mathbb{R}^d\times\mathbb{R}$ is generated according to the model
\begin{align}
    Y_{i} = f_{G_{*}}(X_{i}) + \varepsilon_{i}, \quad i=1,\ldots,n, \label{eq:moe_regression_model}
\end{align}
where $\varepsilon_{1}, \ldots, \varepsilon_{n}$ are independent Gaussian noise variables such that $\bbE[{\varepsilon_{i}}|X_i] = 0$ and $\var(\varepsilon_{i}|X_i) = \nu$ for all $1 \leq i \leq n$. Note that, the Gaussian assumption is just for the simplicity of proof argument. Furthermore, we assume that $X_{1}, \ldots, X_{n}$ are i.i.d. samples from some probability distribution $\mu$. Above, the  regression function $f_{G_{*}}(\cdot)$ takes the form of a softmax gating MoE with $k_*$ experts, namely
\begin{align}
    \label{eq:standard_MoE}
    f_{G_{*}}(x) := \sum_{i=1}^{k_*} \frac{\exp((\beta^*_{1i})^{\top}x+\beta^*_{0i})}{\sum_{j=1}^{k_*}\exp((\beta^*_{1j})^{\top}x+\beta^*_{0j})}\cdot h(x,\eta^*_i),
\end{align}
where $(\beta^*_{0i},\beta^*_{1i},\eta^*_{i})_{i=1}^{k_*}$ are unknown parameters in $\mathbb{R}\times\mathbb{R}^d\times\mathbb{R}^q$ and $G_{*} := \sum_{i = 1}^{k_{*}} \exp(\beta_{0i}^{*}) \delta_{(\beta_{1i}^{*},\eta^*_i)}$ denotes the associated \emph{mixing measure}. The function $h(x,\eta)$ is known as {\it the expert function,} which we assumed to be of parametric form. We will consider general expert functions as well as the widely used ridge expert functions $h(x;(a,b)) = \sigma(a^\top x + b)$, compositions of a non-linear {\it activation function} $\sigma(\cdot)$ with an affine function. See Section \ref{sec:background} below for further restrictions on the model. In practice, since the true number of experts $k_*$ is unknown, it is customary to fit a softmax gating MoE model of the form \eqref{eq:standard_MoE} with up to $k>k_*$ experts, where $k$ is a given threshold. We call this setting an \emph{over-specified} setting.

In order to estimate the  ground-truth parameters $(\beta^*_{0i},\beta^*_{1i},\eta^*_{i})_{i=1}^{k_*}$ in the above model, we can no longer rely on maximum likelihood estimation. Instead we will deploy the computationally efficient and popular least squares method  \citep[see, e.g.,][]{vandeGeer-00}. Formally, the mixing measure is estimated with  
\begin{align}
    \label{eq:least_squared_estimator}
    \widehat{G}_n:=\argmin_{G\in\mathcal{G}_{k}(\Theta)}\sum_{i=1}^{n}\Big(Y_i-f_{G}(X_i)\Big)^2,
\end{align}
where $\mathcal{G}_{k}(\Theta):=\{G=\sum_{i=1}^{k'}\exp(\beta_{0i})\delta_{(\beta_{1i},\eta_{i})}:1\leq k'\leq k, \  (\beta_{0i},\beta_{1i},\eta_{i})\in\Theta\}$ is the set of all mixing measures with at most $k$ components. The goal of this paper is to investigate the convergence properties of estimator $\widehat{G}_n$ in fixed-dimensional setting. To the best of our knowledge, this is the first statistical analysis of the least squares estimation under the MoE models, as previous works \cite{mendes2011convergence,nguyen2023demystifying} focus  on maximum likelihood methods.

\textbf{Challenges.} We highlight two subtle major challenges in analyzing the regression model \eqref{eq:standard_MoE}, which require  the formulation of novel identifiability conditions and new techniques. To the best of our knowledge, these issues have not been noted before in the regression literature.

\textbf{(C.1) Expert characterization.} In our analysis (which conforms to the latest approaches to MoE modeling), we represent the discrepancy $f_{\widehat{G}_n}(\cdot)-f_{G_*}(\cdot)$ between the estimated and true regression function as a weighted sum of linearly independent terms by applying Taylor expansions to the function $x \mapsto F(x;\beta_1,\eta):=\exp(\beta_{1}^{\top}x)h(x,\eta)$. In order to guarantee good convergence rates, it is necessary that the function $F$ and its derivatives are linearly independent (in the space of squared-integrable functions of the features $X$). This property will be ensured by formulating novel and non-trivial algebraic condition on the expert functions, which we refer to as {\it strong identifiability}. The derivation of that condition requires us to adopt new proof techniques since those in previous works \cite{nguyen2023demystifying,nguyen2024statistical} apply only for linear experts.

\textbf{(C.2) Singularity of polynomial experts.} An instance of expert functions  that does not satisfy the strong identifiability condition is a polynomial of an affine function. For simplicity, let us consider $h(x,\eta)=a^{\top}x+b$, where $\eta=(a,b)$.  Then, the function $F$ mentioned in the challenge (C.1) becomes $F(x;\beta_1,a,b)=\exp(\beta_{1}^{\top}x)(a^{\top}x+b)$. Under this seemingly unproblematic settings, we encounter an unexpected  phenomenon. Specifically, there exists an interaction between the gating parameter $\beta_1$ and the expert parameters $a,b$, captured by the partial differential equation (PDE)
\begin{align}
    \label{eq:PDE}
    \frac{\partial^2 F}{\partial\beta_{1}\partial b}(x;\beta^*_{1i},a^*_{i},b^*_{i})=\frac{\partial F}{\partial a}(x;\beta^*_{1i},a^*_{i},b^*_{i}).
\end{align}
Complex functional interactions of this form are not new -- they have been thoroughly characterized in the softmax gating Gaussian MoE model by \cite{nguyen2023demystifying}. However, 
and contrary to the case of data drawn from a well-specified softmax gating Gaussian MoE model, in our setting the above interaction causes the estimation rate of all the parameters $\beta^*_{1i},a^*_{i},b^*_{i}$ to be slower than any polynomial rates, and thus, could potentially be $\mathcal{O}_{P}(1/\log(n))$. It is important to note that this singular, rather surprising,  phenomenon takes place as we consider a deterministic MoE model instead of a probabilistic one, which requires us to develop new techniques.

\begin{table*}[!ht]
\caption{Summary of expert estimation rates (up to a logarithmic factor) under the softmax gating mixture of experts. In this work, we analyze three types of expert functions including strongly identifiable experts $h(x,\eta)$, ridge experts $\sigma(a^{\top}x+b)$ and polynomial experts $(a^{\top}x+b)^{p}$. For ridge experts, we consider two complement regimes: all the experts are input-dependent (Regime 1) vs. there exists an input-independent expert (Regime 2).  
Additionally, the notation $\mathcal{A}_j(\widehat{G}_n)$ stands for the Voronoi cels defined in equation~\eqref{eq:Voronoi_cells}.}
\textbf{}\\
\centering
\begin{tabular}{ | c | m{5em} | c | c | m{5em}|} 
\hline
\multirow{2}{4em}{\textbf{Expert Index}}& \multirow{2}{4em}{\textbf{Strongly-Identifiable Experts}} &\multicolumn{2}{c|}{\textbf{Ridge Experts with Strongly Independent Activation}} & \multirow{2}{4em}{\textbf{Polynomial Experts}}\\ \cline{3-4}
 & & Regime 1 & Regime 2 & \\
\hline
{$j:|\mathcal{A}_j(\widehat{G}_n)|=1$}  & \multicolumn{2}{c|}{${\mathcal{O}_{P}}(n^{-1/2})$} & \multicolumn{2}{c|}{Slower than ${\mathcal{O}_{P}}(n^{-1/2r}), \forall r\geq 1$ } \\
\hline
{$j:|\mathcal{A}_j(\widehat{G}_n)|>1$} & \multicolumn{2}{c|}{${\mathcal{O}_{P}}(n^{-1/4})$} & \multicolumn{2}{c|}{Slower than ${\mathcal{O}_{P}}(n^{-1/2r}), \forall r\geq 1$ }  \\
\hline
\end{tabular}
\label{table:expert_rates}
\end{table*}

\textbf{Overall contributions.} 
Our contributions are three-fold and can be summarized as follows (see also Table~\ref{table:expert_rates} for a summary of the expert estimation rates):

\textbf{1. Parametric rate for regression function.} In our first main result, Theorem~\ref{theorem:l2_bound}, we demonstrate a parametric estimation rate for the regression function $f_{G_*}(\cdot)$. In particular, we show that $\normf{f_{\widehat{G}_n}-f_{G_*}}=\mathcal{O}_P(n^{-1/2})$, where $\normf{\cdot}$ denotes the $L^2$ norm with respect to the probability measure $\mu$ of the input $X$. This result will be leveraged to obtain more complex estimation rates for the model parameters.  

\textbf{2. Strongly identifiable experts.} We formulate a general strong identifiability condition for expert functions in Definition~\ref{def:general_conditions} which ensures a faster, even parametric, estimation rates for the model parameters. 
To that effect, we propose a novel loss function $\mathcal{D}_{1}$ among parameters in equation~\eqref{eq:D1_loss} and establish in Theorem~\ref{theorem:general_experts} the $L^2$-lower bound $\normf{f_{G}-f_{G_*}}\gtrsim\mathcal{D}_{1}(G,G_*)$ for any $G\in\mathcal{G}_{k}(\Theta)$. Given the bound $\normf{f_{\widehat{G}_n}-f_{G_*}}=\mathcal{O}_{P}(n^{-1/2})$ in Theorem~\ref{theorem:l2_bound}, we deduce that the convergence rate of the LSE $\widehat{G}_n$ to the true mixing measure $G_*$ is also parametric on the sample size, i.e. $\mathcal{D}_{1}(\widehat{G}_n,G_*)=\mathcal{O}_{P}(n^{-1/2})$. This leads to an expert estimation rate of order at least $\mathcal{O}_{P}(n^{-1/4})$.

\textbf{3. Ridge experts:} Secondly, we focus ridge expert functions consisting of simple two-layer neural networks, which include a linear layer followed by an activation layer, i.e., $h(x,\eta)=\sigma(a^{\top}x+b)$, where $\eta=(a,b)$. In these very common settings, we give a condition called \emph{strong independence} in Definition~\ref{def:activation_conditions} to characterize activation functions that induce faster expert estimation rates. Interestingly, under the strongly independent settings of the activation function $\sigma$, we demonstrate in Theorem~\ref{theorem:singular_activation_experts} that even when the activation function $\sigma$ is strongly independent, the expert estimation rates are still slower than any polynomial rates and could be as slow as $\mathcal{O}_{P}(1/\log(n))$ if at lease one among parameters $a^*_{1},\ldots,a^*_{k_*}$ vanishes. Otherwise, we show in Theorem~\ref{theorem:activation_experts} that the expert estimation rates are no worse than $\mathcal{O}_{P}(n^{-1/4})$.

Lastly, we consider the settings when the activation function $\sigma$ is not strongly independent, e.g., polynomial experts of the form $h(x,\eta)=(a^{\top}x+b)^{p}$, where $p\in\mathbb{N}$ and $\eta=(a,b)$ (of which linear experts are special cases). This choice can be regarded as an ridge expert associated with the activation function $\sigma(z)=z^{p}$, which violates the strong independence condition. As a consequence, we come across an unforeseen phenomenon in Theorem~\ref{theorem:linear_experts}: the rates for estimating experts become universally worse than any polynomial rates due to an intrinsic interaction between gating and expert parameters via the PDE~\eqref{eq:PDE}. 

\textbf{Practical implications.} There are two main practical implications from our theoretical results: 

\textbf{(i) Expert network design.} Firstly, based on the the strong identifiability condition provide in Definition~\ref{def:general_conditions}, we can verify that plenty of widely used expert functions, namely feed-forward networks with activation functions $\sigmoid(\cdot)$, $\tanh(\cdot)$ and $\gelu(\cdot)$, are strongly identifiable. Therefore, our findings suggest that such experts enjoy faster estimation rates than others. This indicates that our theory is potentially useful for designing experts in practical applications. 

\textbf{(ii) Sample inefficiency of polynomial experts.} Secondly, Theorem~\ref{theorem:linear_experts} reveals that a class of polynomial experts, including linear experts, are not good choices of expert functions for MoE models due to its significantly slow estimation rates. This observation aligns with the findings in \cite{chen2022theory} which claims that a mixture of non-linear experts achieves a way better performance than a mixture of linear experts.

\textbf{Outline.} The paper is organized as follows. In Section~\ref{sec:background}, we obtain a parametric rate for the least squares estimation of softmax gating MoE model $f_{G_*}(\cdot)$ under the $L^2$-norm. Subsequently, we establish  estimation rates for experts that satisfy the strong identifiability condition in Section~\ref{sec:general_experts}. We then investigate ridge experts, including polynomial experts in Section~\ref{sec:activation_experts}. Finally, we conclude the paper and provide some future directions in Section~\ref{sec:conclusion}. Rigorous proofs and a simulation study are deferred to the supplementary material.

\textbf{Notations.} We let $[n]$ stand for the set $\{1,2,\ldots,n\}$ for any  $n\in\mathbb{N}$. Next, for any set $S$, we denote $|S|$ as its cardinality. For any vectors $v:=(v_1,v_2,\ldots,v_d) \in \mathbb{R}^{d}$ and $\alpha:=(\alpha_1,\alpha_2,\ldots,\alpha_d)\in\mathbb{N}^d$, we let $v^{\alpha}=v_{1}^{\alpha_{1}}v_{2}^{\alpha_{2}}\ldots v_{d}^{\alpha_{d}}$, $|v|:=v_1+v_2+\ldots+v_d$ and $\alpha!:=\alpha_{1}!\alpha_{2}!\ldots \alpha_{d}!$, while $\|v\|$ denotes its $2$-norm value. Lastly, for any two positive sequences $\{a_n\}_{n\geq 1}$ and $\{b_n\}_{n\geq 1}$, we write $a_n = \mathcal{O}(b_n)$ or $a_{n} \lesssim b_{n}$ if $a_n \leq C b_n$ for all $ n\in\mathbb{N}$, where $C > 0$ is some universal constant. The notation $a_{n} = \mathcal{O}_{P}(b_{n})$ indicates that $a_{n}/b_{n}$ is stochastically bounded. 

\section{The Estimation Rate for the Regression Function}
\label{sec:background}

In this section, we establish an important result, showing that, under minimal assumptions on the regression function, the least squares plug-in estimator of the regression function $f_{\widehat{G}_n}(\cdot)$ is consistent, and converges to the true regression function $f_{G_*}(\cdot)$ at the rate $1/\sqrt{n}$ with respect to the $L^2(\mu)$-distance, where $\mu$ is the feature distribution.

\textbf{Assumptions.} Throughout the paper, we impose the following standard assumptions on the model parameters. We recall that the dimension of the parameter space is fixed.

\textbf{(A.1)} We assume that the parameter space $\Theta$ is a compact subset of $\mathbb{R}\times\mathbb{R}^d\times\mathbb{R}^q$, while the input space $\mathcal{X}$ is bounded. These assumptions help guarantee the convergence of least squares estimation.

\textbf{(A.2)} For the experts $h(x,\eta^*_{1}),\ldots,h(x,\eta^*_{k_*})$ being different from each other, we assume that parameters $\eta^*_{1},\ldots,\eta^*_{k_*}$ are pair-wise distinct. Furthermore, these experts functions are Lipschitz continuous with respect to their parameters and bounded.

\textbf{(A.3)} In order that the softmax gating MoE $f_{G_*}(\cdot)$ is identifiable, i.e., $f_{G}(x)=f_{G_*}(x)$ for almost every $x$ implies that $G\equiv G_*$, we let $\beta^*_{0k_*}=0$ and $\beta^*_{1k_*}=\zerod$.

\textbf{(A.4)} To ensure that the softmax gate is input-dependent, we assume that at least one among gating parameters $\beta^*_{11},\ldots,\beta^*_{1k_*}$ is non-zero.

\begin{theorem}
    \label{theorem:l2_bound}
    Given a least squares estimator $\widehat{G}_n$ defined in equation~\eqref{eq:least_squared_estimator}, the model estimation $f_{\widehat{G}_n}$ admits the following convergence rate:
    \begin{align}
        \label{eq:model_bound}
        \normf{f_{\widehat{G}_n}-f_{G_*}}=\mathcal{O}_{P}(\sqrt{\log(n)/n}).
    \end{align}
\end{theorem}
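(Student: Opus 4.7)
The plan is to use standard empirical process / M-estimation machinery: bound the bracketing entropy of the class of candidate regression functions and then invoke a well-known least squares convergence theorem (e.g.\ Theorem~9.1 or Theorem~9.2 in van de Geer, 2000). Since the dimension of the parameter space is fixed and the map from parameters to functions is well-behaved, the bracketing entropy will be of parametric order, which yields the rate $\sqrt{\log(n)/n}$.

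First, I would set up the function class
$$
\mathcal{F}_k(\Theta) := \{f_G : G \in \mathcal{G}_k(\Theta)\}
$$
and verify that it is uniformly bounded. This is immediate from (A.1)--(A.2): the softmax weights sum to one, and each expert $h(x,\eta)$ is bounded on the compact domain $\mathcal{X} \times \Theta$, so there is a constant $M$ with $\|f_G\|_\infty \leq M$ for every $G \in \mathcal{G}_k(\Theta)$. The noise is Gaussian with variance $\nu$, so the sub-Gaussian hypotheses of the standard LSE theorems are satisfied.

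Second, I would bound the bracketing entropy
$H_B(\varepsilon, \mathcal{F}_k(\Theta), \|\cdot\|_{L^2(\mu)})$. The idea is to show that the map sending a tuple of parameters $(\beta_{0i},\beta_{1i},\eta_i)_{i=1}^{k}$ (in a fixed compact subset of $\mathbb{R}\times\mathbb{R}^d\times\mathbb{R}^q$) to $f_G$ is Lipschitz in the sup-norm. This uses (i) the Lipschitz continuity of the softmax in its logits on the compact parameter range, (ii) the Lipschitz continuity of $h(x,\eta)$ in $\eta$ from (A.2), and (iii) boundedness of $\mathcal{X}$ so that the logits $\beta_1^\top x + \beta_0$ are uniformly bounded and the softmax has a uniform Lipschitz constant. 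Combining these on a grid of the finite-dimensional parameter space of size $(C/\varepsilon)^{D}$, where $D$ is the total parameter dimension, one obtains
$$
H_B(\varepsilon, \mathcal{F}_k(\Theta), \|\cdot\|_{L^2(\mu)}) \;\lesssim\; \log(1/\varepsilon).
$$

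Third, I would apply the standard LSE rate theorem for bounded regression with sub-Gaussian noise (the version most convenient here is in van de Geer, 2000, Chapter~9). With bracketing entropy of order $\log(1/\varepsilon)$, the rate $r_n$ is the solution of $\sqrt{n}\, r_n^2 \gtrsim \int_0^{r_n}\sqrt{\log(1/\varepsilon)}\,\mathrm{d}\varepsilon$, which yields $r_n \asymp \sqrt{\log(n)/n}$. A symmetrization/peeling argument controls the empirical process term $\frac{1}{n}\sum_i \varepsilon_i (f_G - f_{G_*})(X_i)$ uniformly over $G$, and because $\widehat{G}_n$ is the minimizer of the squared loss, the basic inequality
$$
\|f_{\widehat{G}_n} - f_{G_*}\|_n^2 \;\leq\; \frac{2}{n}\sum_{i=1}^n \varepsilon_i \bigl(f_{\widehat{G}_n}(X_i) - f_{G_*}(X_i)\bigr)
$$
together with the entropy bound delivers the announced rate on the empirical norm. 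A standard argument passing from the empirical $L^2$ norm $\|\cdot\|_n$ to the population norm $\|\cdot\|_{L^2(\mu)}$ (again relying on boundedness and the same entropy estimate) concludes the proof.

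The main obstacle is really just the bracketing entropy bound: one has to carefully account for the Lipschitz constants in the softmax and in the experts so as to obtain a Lipschitz modulus that is uniform over $G \in \mathcal{G}_k(\Theta)$, and one must handle $k' \leq k$ (fewer than $k$ experts) by adding duplicated atoms with vanishing weights, so that the entire model class is covered by a single finite-dimensional parametrization. Once that is in place, everything else is off-the-shelf; there is no interaction between parameters to worry about at this stage because only the regression function, not the parameters themselves, is being estimated.
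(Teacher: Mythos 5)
Your proposal is correct and follows essentially the same route as the paper: bound the bracketing entropy of $\mathcal{F}_k(\Theta)$ by $\log(1/\varepsilon)$ via Lipschitz continuity of the softmax gate and the experts over the compact parameter space, then invoke the standard least squares rate theorem from van de Geer (2000) (the paper uses a lemma adapted from Theorems 7.4 and 9.2 there) with $\Psi(\delta)=\delta\sqrt{\log(1/\delta)}$ and $\delta_n=\sqrt{\log(n)/n}$. The extra steps you spell out (basic inequality, symmetrization, empirical-to-population norm transfer) are exactly what is packaged inside that cited theorem, so there is no substantive difference.
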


The proof of Theorem~\ref{theorem:l2_bound} is in Appendix~\ref{appendix:l2_bound}. It can be seen from the bound~\eqref{eq:model_bound} that the rate for estimating the entire softmax gating MoE model $f_{G_*}(\cdot)$ is of order $\mathcal{O}_{P}(n^{-1/2})$ (up to logarithmic factor), which is parametric on the sample size $n$. More importantly, this result suggests that if we can construct a loss function among parameters $\mathcal{D}$ such that $\normf{f_{\widehat{G}_n}-f_{G_*}}\gtrsim \mathcal{D}(\widehat{G}_n,G_*)$, then it follows that $\mathcal{D}(\widehat{G}_n,G_*)=\mathcal{O}_P(n^{-1/2})$. As a consequence, we achieve parameter estimation rates through the previous bound, and therefore, our desired expert estimation rates.

\section{Strongly Identifiable Experts}
\label{sec:general_experts}
In this section, we derive estimation rates for the parameters of the softmax gating MoE regression function~\eqref{eq:standard_MoE} assuming that the class of expert functions satisfy a novel regularity condition which we refer to as \emph{strong identifiability}; see  Definition~\ref{def:general_conditions} below. 

Let us recall that in order to establish the expert estimation rates, our approach is to establish the $L^2$-lower bound $\normf{f_{\widehat{G}_n}-f_{G_*}}\gtrsim \mathcal{D}(\widehat{G}_n,G_*)$ mentioned in Section~\ref{sec:background}, where $\mathcal{D}$ is an appropriate loss function to be defined later. For that purpose, a key step is to decompose the quantity $f_{\widehat{G}_n}(x)-f_{G_*}(x)$ into a combination of linearly independent terms, where
\begin{align*}
    f_{G_{*}}(x) := \sum_{i=1}^{k_*} \frac{\exp((\beta^*_{1i})^{\top}x+\beta^*_{0i})}{\sum_{j=1}^{k_*}\exp((\beta^*_{1j})^{\top}x+\beta^*_{0j})}\cdot h(x,\eta^*_i).
\end{align*}
This can be done by using Taylor expansions to the product of a softmax numerator and an expert denoted by $x\mapsto F(x;\beta_{1},\eta)=\exp(\beta_{1}^{\top}x)h(x,\eta)$. Therefore, to obtain our desired decomposition, we present in the following definition a condition that ensures the derivatives of $F$ with respect to its parameters are linearly independent.
\begin{definition}[Strong Identifiability]
    \label{def:general_conditions}
    We say that an expert function $x \mapsto h(x,\eta)$ is strongly identifiable if it is twice differentiable with respect to its parameter $\eta$ and the following set of functions in $x$ is linearly independent:
    \begin{align*}
        &\Big\{x^{\nu}\cdot\frac{\partial^{|\tau|}h}{\partial\eta^{\tau}}(x,\eta_j): j\in[k], \nu\in\mathbb{N}^d, \ \tau\in\mathbb{N}^{q}, \\
        &\hspace{5cm}0\leq |\nu|+|\tau|\leq 2\Big\},
    \end{align*}
    for almost every $x$ for any $k\geq 1$ and pair-wise distinct parameters $\eta_1,\ldots,\eta_k$.
\end{definition}
As indicated in Definition~\ref{def:general_conditions}, the main distinction between the strong identifiability and standard identifiability conditions of the expert function $h$ \cite{ho2022gaussian} is that we further require the first and second-order derivatives of the expert function $h$ with respect to their parameter are also linearly independent. Intuitively, the linear independence of functions in Definition~\ref{def:general_conditions} helps eliminate potential interactions among parameters expressed in the language of partial differential equations (see e.g., equation~\eqref{eq:PDE_efficient_activation} and equation~\eqref{eq:PDE_polynomial} where gating parameters $\beta_1$ interact with expert parameters $a$). Such interactions are demonstrated to result in significantly slow expert estimation rates (see Theorem 4.4 and Theorem 4.6). 

\textbf{Example.} It can be checked that the strong identifiability condition holds for several experts used in practice, including feed-forward networks with activations like $\mathrm{sigmoid}$, $\mathrm{tanh}$ and $\mathrm{GeLU}$ and non-linear transformed input are strongly identifiable experts. For simplicity, let us consider a 2-layer expert network with normalized input, i.e.
\begin{align*}
    h(x,(a,b))=\sigma\Big(a\frac{x}{\|x\|}+b\Big),
\end{align*}
where $\sigma$ is the sigmoid function and $x,a,b,\in\mathbb{R}$ (The argument also holds with layer normalization as well). Then, by taking the derivatives of the expert function $h(\cdot,(a,b))$ w.r.t its parameters up to the second order, we can verify that the set mentioned in Definition~\ref{def:general_conditions} is linearly independent, which means that the expert $h(\cdot,(a,b))$ is strongly identifiable. The non-linear transformation is involved to ensure the linearly independence between the terms $\frac{\partial h}{\partial a}$ and $x\frac{\partial h}{\partial b}$ mentioned in Definition 3.1. Otherwise, the strong identifiability condition is not satisfied. For instance, the ridge expert $h(x,(a,b))=\sigma(ax+b)$ is not strongly identifiable due to the PDE $\frac{\partial h}{\partial a}=x\frac{\partial h}{\partial b}$. 

Next, to compute the expert estimation rates, we propose a loss function based on the notion of Voronoi cells, put forward by \cite{manole22refined}, as follows.

\textbf{Voronoi loss.} Given an arbitrary mixing measure $G$ with $k'\leq k$ components, we partition its components to the following Voronoi cells $\mathcal{A}_j\equiv\mathcal{A}_j(G)$, which are generated by the components of $G_*$:
\begin{align}
    \label{eq:Voronoi_cells}
    \mathcal{A}_j:=\{i\in[k']:\|\omega_i-\omega^*_j\|\leq\|\omega_i-\omega^*_{\ell}\|,\forall \ell\neq j\},
\end{align}
where $\omega_i:=(\beta_{1i},\eta_i)$ and $\omega^*_j:=(\beta^*_{1j},\eta^*_j)$ for any $j\in[k_*]$. Notably, the cardinality of Voronoi cell $\mathcal{A}_j$ is exactly the number of fitted components that approximates $\omega^*_j$. Then, the Voronoi loss function used for our analysis is given by:
\begin{align}
\label{eq:D1_loss}
&\mathcal{D}_1(G,G_*):=\sum_{j=1}^{k_*}\Big|\sum_{i\in\mathcal{A}_j}\exp(\beta_{0i})-\exp(\beta^*_{0j})\Big|\nonumber\\
    &+\sum_{j:|\mathcal{A}_j|>1}\sum_{i\in\mathcal{A}_j}\exp(\beta_{0i})\Big[\|\Delta\beta_{1ij}\|^2+\|\Delta \eta_{ij}\|^2\Big]\nonumber\\
    &+\sum_{j:|\mathcal{A}_j|=1}\sum_{i\in\mathcal{A}_j}\exp(\beta_{0i})\Big[\|\Delta\beta_{1ij}\|+\|\Delta \eta_{ij}\|\Big],
\end{align}
where we denote $\Delta\beta_{1ij}:=\beta_{1i}-\beta^*_{1j}$ and $\Delta \eta_{ij}:=\eta_i-\eta^*_j$. Above, if the Voronoi cell $\mathcal{A}_j$ is empty, then we let the corresponding summation term be zero. Additionally, it can be checked that $\mathcal{D}_{1}(G,G_*)=0$ if and only if $G\equiv G_*$. Thus, when $\mathcal{D}_{1}(G,G_*)$ is sufficiently small, the differences $\Delta\beta_{1ij}$ and $\Delta\eta_{ij}$ are also small. This property indicates that $\mathcal{D}_{1}(G,G_*)$ is an appropriate loss function for measuring the discrepancy between the LSE $\widehat{G}_n$ and the true mixing measures $G_*$. However, since the loss $\mathcal{D}_1(G,G_*)$ is not symmetric, it is not a proper metric. Finally, computing the Voronoi loss function $\mathcal{D}_{1}$ is efficient as its computational complexity is at the order of $\mathcal{O}(k \times k_{*})$.

Equipped with the Voronoi loss function $\mathcal{D}_{1}$, we are now ready to characterize the parameter estimation rates as well as the expert estimation rates in the following theorem.

\begin{theorem}
    \label{theorem:general_experts}
    Suppose that the expert function $h(x,\eta)$ satisfies the condition in Definition~\ref{def:general_conditions}, then the following $L^2$-lower bound holds true for any $G\in\mathcal{G}_{k}(\Theta)$:
    \begin{align*}
        \normf{f_{G}-f_{G_*}}\gtrsim\mathcal{D}_1(G,G_*).
    \end{align*}
    Furthermore, this bound and the result in Theorem~\ref{theorem:l2_bound} imply that $\mathcal{D}_1(\widehat{G}_n,G_*)=\mathcal{O}_{P}(\sqrt{\log(n)/n})$.
\end{theorem}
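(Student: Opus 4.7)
The second statement of the theorem is an immediate consequence of the $L^2$-lower bound $\normf{f_G-f_{G_*}}\gtrsim \mathcal{D}_1(G,G_*)$ and Theorem~\ref{theorem:l2_bound}, so the core task is to establish this lower bound uniformly in $G\in\mathcal{G}_k(\Theta)$. I would split it into a \emph{local} inequality (for $G$ with $\mathcal{D}_1(G,G_*)\le \epsilon$, for some $\epsilon>0$) and a \emph{global} inequality (for $\mathcal{D}_1(G,G_*)\ge \epsilon$). The global part is routine: if it were to fail, the compactness of $\Theta$ would produce a weak limit $\overline{G}$ of a violating sequence with $f_{\overline{G}}=f_{G_*}$ almost everywhere; assumption~(A.3) then forces $\overline{G}=G_*$, contradicting the lower bound on $\mathcal{D}_1$.

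For the local inequality, I would argue by contradiction: assume a sequence $\{G_n\}\subset\mathcal{G}_k(\Theta)$ with $\mathcal{D}_1(G_n,G_*)\to 0$ and $\normf{f_{G_n}-f_{G_*}}/\mathcal{D}_1(G_n,G_*)\to 0$. Passing to a subsequence, the Voronoi partition $\mathcal{A}_j^n:=\mathcal{A}_j(G_n)$ stabilizes, and each fitted component $(\beta_{1i}^n,\eta_i^n)$ with $i\in\mathcal{A}_j^n$ converges to $(\beta_{1j}^*,\eta_j^*)$. To expose the parameter differences, I would multiply $f_{G_n}(x)-f_{G_*}(x)$ through by both softmax denominators, so that its numerator becomes a sum of terms of the form $\exp(\beta_{0i})F(x;\beta_{1i},\eta_i)$ with $F(x;\beta_1,\eta)=\exp(\beta_1^\top x)h(x,\eta)$, paired against the exponentials from the opposite denominator.

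The crux is a Taylor expansion of each $F(x;\beta_{1i}^n,\eta_i^n)$ around $(\beta_{1j}^*,\eta_j^*)$ for $i\in\mathcal{A}_j^n$: to second order when $|\mathcal{A}_j^n|>1$, matching the quadratic term $\|\dboijn\|^2+\|\deijn\|^2$ of $\mathcal{D}_1$, and to first order when $|\mathcal{A}_j^n|=1$, matching the corresponding linear term. Using the identity
\begin{equation*}
\frac{\partial^{|\alpha|+|\tau|}F}{\partial\beta_1^{\alpha}\partial\eta^{\tau}}(x;\beta_1,\eta)=x^{\alpha}\exp(\beta_1^\top x)\frac{\partial^{|\tau|}h}{\partial\eta^\tau}(x,\eta),
\end{equation*}
and dividing by $\mathcal{D}_1(G_n,G_*)$, I obtain a representation of the normalized discrepancy as a linear combination of the basis functions $\exp((\beta_{1j}^*)^\top x)\,x^{\nu}\partial^{|\tau|}h/\partial\eta^\tau(x,\eta_j^*)$ for $j\in[k_*]$ and $|\nu|+|\tau|\le 2$, plus an $L^2(\mu)$-remainder that is $o(1)$. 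By construction of $\mathcal{D}_1$, the sum of the absolute values of the normalized coefficients remains bounded below along the sequence; a further subsequence extraction yields a nontrivial limiting linear relation among these basis functions.

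To derive the contradiction, I would exploit two linear-independence facts. Since $\eta^*_1,\ldots,\eta^*_{k_*}$ are pair-wise distinct by assumption~(A.2), Definition~\ref{def:general_conditions} provides linear independence of $\{x^{\nu}\partial^{|\tau|}h/\partial\eta^\tau(x,\eta_j^*)\}_{j,\nu,\tau}$ with $|\nu|+|\tau|\le 2$. Combined with the standard fact that exponentials with distinct frequencies are linearly independent (and, when some $\beta^*_{1j}$ coincide, one groups the indices sharing a common $\beta_1^*$ and invokes strong identifiability within the group via distinctness of $\eta^*_j$), the basis including exponential factors is linearly independent, forcing all limiting coefficients to vanish --- contradicting that at least one remained bounded away from zero. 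The principal obstacle I anticipate is the combinatorial bookkeeping: the second-order monomials in $(\dboijn,\deijn)$ contributed by multi-element Voronoi cells must be grouped so that their dominant contributions align with \emph{distinct} basis functions of degree $|\nu|+|\tau|=2$, while the weight residuals $\sum_{i\in\mathcal{A}_j^n}\exp(\beta_{0i}^n)-\exp(\beta_{0j}^*)$ and the first-order terms from single-element cells occupy the $|\nu|+|\tau|\le 1$ positions without algebraic cancellation --- which is precisely the degeneracy that the strong identifiability condition of Definition~\ref{def:general_conditions} is crafted to rule out.
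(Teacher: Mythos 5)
Your proposal is correct and follows the same overall architecture as the paper's proof: a local/global split, a contradiction sequence $G_n$ with $\mathcal{D}_1(G_n,G_*)\to 0$, Taylor expansion of $F(x;\beta_1,\eta)=\exp(\beta_1^{\top}x)h(x,\eta)$ to first order on singleton Voronoi cells and second order on multi-element cells, the observation that the normalized coefficients cannot all vanish (since the quadratic and first-order coefficient blocks are sums of nonnegative, respectively non-cancelling, terms that reconstruct $\mathcal{D}_1$), a Fatou-type passage to an almost-everywhere limiting linear relation, and finally the strong identifiability condition combined with independence of exponentials grouped over indices sharing a common $\beta^*_{1j}$; the global part via compactness, Fatou, and identifiability is identical. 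The one genuine difference is the algebraic decomposition: you clear \emph{both} softmax denominators, producing a cross-product expansion whose terms carry frequencies $\beta^n_{1i}+\beta^*_{1j}$ over all pairs $(i,j)$, whereas the paper multiplies only by the true denominator and writes the result as $A_n(x)-B_n(x)+E_n(x)$ with $H(x;\beta_1)=\exp(\beta_1^{\top}x)f_{G_n}(x)$ carrying the fitted mixture. Your route avoids the paper's need to assert linear independence of the augmented family containing $x^{\nu}f_{G_*}(x)$ (a claim slightly beyond the literal statement of Definition~\ref{def:general_conditions}), but it pays for this with heavier bookkeeping: distinct pairs can produce colliding frequency sums $\beta^*_{1j'}+\beta^*_{1j}=\beta^*_{1l'}+\beta^*_{1l}$, so within each frequency class you must check that the merged coefficients of the expert-derivative factors at distinct $\eta^*_{j'}$ still control the Voronoi-loss terms --- this works because those coefficients enter with positive weights $\exp(\beta^*_{0j})$, but it should be said explicitly. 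The paper's choice keeps the number of Taylor-expanded objects at $\mathcal{O}(k_*)$ and makes the grouping of exponentials a one-line partition argument; both routes close with the same strong-identifiability lemma and yield the stated bound, from which the rate $\mathcal{D}_1(\widehat{G}_n,G_*)=\mathcal{O}_{P}(\sqrt{\log(n)/n})$ follows immediately via Theorem~\ref{theorem:l2_bound}, as you note.
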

The proof of Theorem~\ref{theorem:general_experts} is in Appendix~\ref{appendix:general_experts}. A few remarks regarding the results of Theorem~\ref{theorem:general_experts} are in order.

\textbf{(i)} Firstly, the parameters $\beta^*_{1j},\eta^*_{1j}$ that are approximated by more than one component, i.e. those for which $|\mathcal{A}_{j}(\widehat{G}_n)|>1$, enjoy the same estimation rate of order $\mathcal{O}_{P}(n^{-1/4})$. Additionally, since the expert $h(x,\eta)$ is twice differentiable over a bounded domain, it is also a Lipschitz function. Therefore, by denoting $\widehat{G}_n:=\sum_{i=1}^{\widehat{k}_n}\exp(\widehat{\beta}_{0i})\delta_{(\widehat{\beta}^n_{1i},\widehat{\eta}^n_i)}$, we obtain that
\begin{align}
    \sup_{x} |h(x,\widehat{\eta}^n_i)-h(x,\eta^*_j)| &\leq L_1~\|\widehat{\eta}^n_i-\eta^*_j\| \nonumber \\
    &\lesssim \mathcal{O}_{P}(n^{-1/4}),     \label{eq:expert_rate}
\end{align}
for any $i\in\mathcal{A}_j(\widehat{G}_n)$, where $L_1\geq 0$ is a Lipschitz constant. Consequently, the rate for estimating a strongly identifiable expert $h(x,\eta^*_j)$ continues to be $\mathcal{O}_{P}(n^{-1/4})$ as long as it is fitted by more than one expert. On the other hand, when considering the softmax gating Gaussian MoE, \cite{nguyen2023demystifying} pointed out that the estimation rates for linear experts could be $\mathcal{O}_{P}(n^{-1/12})$ when they are fitted by three experts, i.e., $|\mathcal{A}_j(\widehat{G}_n)|=3$. Moreover, these rates will become even slower if their number of fitted experts increases. This comment highlights how the strong identifiability condition proposed in this paper immediately implies fast estimation rates, 

\textbf{(ii)} Secondly, the rates for estimating parameters $\beta^*_{1j},\eta^*_{j}$ that are fitted by exactly one component, i.e., $|\mathcal{A}_j(\widehat{G}_n)|=1$, are faster than those in Remark (i), of order $\mathcal{O}_{P}(n^{-1/2})$. By employing the same arguments as in equation~\eqref{eq:expert_rate}, we deduce that the expert $h(x,\eta^*_j)$ admits the estimation rate of order $\mathcal{O}_{P}(n^{-1/2})$, which matches its counterpart in \cite{nguyen2023demystifying}.

\section{Ridge Experts}
\label{sec:activation_experts}
In this section, we turn to the softmax gating MoE models with ridge experts, i.e  two-layer neural networks comprised of a linear layer and an activation layer of the form
\begin{align}
    \label{eq:activation_MoE}
    h(x,\eta^*_j)=\sigma((a^*_j)^{\top}x+b^*_j),
\end{align}
where $\sigma:\mathbb{R}\to\mathbb{R}$ is the (usually, nonlinear) activation function and $\eta^*_{j}=(a^*_j,b^*_j)\in\mathbb{R}^d\times\mathbb{R}$ are expert parameters. Ridge experts are commonly deployed in deep-learning architectures and generative models, and they fail to satisfy the strong identifiability condition from the last section. To overcome this issue, we instead formulate a \emph{strong independence} condition on the activation function itself, which will guarantee fast estimation rates, provided that all the expert parameters are non-zero. 
Interestingly, when one or more parameters are zero, so that the corresponding experts are constant functions, we show slow, non-polynomial rates in the sample size.


In Section~\ref{sec:polynomial_experts}, we then examine polynomial activation functions, which violates the strong independence condition. In this case we again demonstrate slow rates.

\subsection{On Strongly Independent Activation}
\label{sec:efficient_activation}
To begin with, let us recall from Section~\ref{sec:general_experts} that our goal is to establish the $L^2$-lower bound $\normf{f_{\widehat{G}_n}-f_{G_*}}\gtrsim \mathcal{D}(\widehat{G}_n,G_*)$ where $G_*:=\sum_{i=1}^{k_*}\exp(\beta^*_{0i})\delta_{(\beta^*_{1i},a^*_{i},b^*_{i})}$,
\begin{align*}
     f_{G_{*}}(x) &:= \sum_{i=1}^{k_*} \frac{\exp((\beta^*_{1i})^{\top}x+\beta^*_{0i})}{\sum_{j=1}^{k_*}\exp((\beta^*_{1j})^{\top}x+\beta^*_{0j})}\\
     &\hspace{4cm}\times\sigma((a^*_i)^{\top}x+b^*_i),
\end{align*}
and $\mathcal{D}$ is a loss function among parameters that will be defined later. In our proof techniques, we first need to represent the term $f_{\widehat{G}_n}(x)-f_{G_*}(x)$ as a weighted sum of linearly independent terms by applying Taylor expansions to the function $F(X;\beta_{1},a,b)=\exp(\beta_{1}^{\top}x)\sigma(a^{\top}x+b)$. Nevertheless, we notice that if $a^*_i=\zerod$ for some $i\in[k_*]$, then there is an interaction between gating and expert parameters expressed in the language of PDE as follows:
\begin{align}
    \label{eq:PDE_efficient_activation}
    \frac{\partial F}{\partial \beta_1}(x;\beta^*_{1i},a^*_{i},b^*_{i})=\sigma'(b^*_i)\cdot\frac{\partial F}{\partial a}(x;\beta^*_{1i},a^*_{i},b^*_{i}).
\end{align}
The above PDE leads to a number of linearly dependent terms in the decomposition of $f_{\widehat{G}_n}(x)-f_{G_*}(x)$, which could negatively affect the expert estimation rates. To understand the effects of the previous interaction better, we split the analysis into two following regimes of parameters $a^*_i$ where the interaction~\eqref{eq:PDE_efficient_activation} vanishes and occurs, respectively:
\begin{itemize}
    \item \textbf{Regime 1:} All parameters $a^*_1,\ldots,a^*_{k_*}$ are different from $\zerod$;
    \item \textbf{Regime 2:} At least one among parameters $a^*_1,\ldots,a^*_{k_*}$ is equal to $\zerod$.
\end{itemize}
Subsequently, we will conduct an expert convergence analysis in each of the above regimes.

\subsubsection{Regime 1: Input-dependent Experts}
\label{sec:input-dependent_experts}
Under this regime, since all the parameters $a^*_{1},\ldots,a^*_{k_*}$ are different from $\zerod$, the PDE~\eqref{eq:PDE_efficient_activation} does not hold true, and thus, we do not need to deal with linearly dependent terms induced by this PDE. Instead, we establish a strong independence condition on the activation $\sigma$ in Definition~\ref{def:activation_conditions} to guarantee that there are no interactions among parameters, i.e. the derivatives of the function $x\mapsto F(x;\beta_{1},a,b)=\exp(\beta_{1}^{\top}x)\sigma(a^{\top}x+b)$ and its derivatives up to the second order are linearly independent.

\begin{definition}[Strong Independence]
    \label{def:activation_conditions}
    We say that an activation function $\sigma:\mathbb{R}\to\mathbb{R}$ is \emph{strongly independent} if it is twice differentiable and the  set of functions in $x$
    \begin{align*}
        \Big\{x^{\nu}\sigma^{(\tau)}(a_j^{\top}x+b_j):&\nu\in\mathbb{N}^d,\tau\in\mathbb{N}, \\
        &\ 0\leq |\nu|,\tau\leq 2, \ j\in[k]\Big\},
    \end{align*}
    is linearly independent, for almost all $x$,
    for any pair-wise distinct parameters $(a_1,b_1),\ldots,(a_k,b_k)$ and $k\geq 1$, where $\sigma^{(\tau)}$ denotes the $\tau$-th derivative of $\sigma$. 
\end{definition}
\textbf{Example.} We can verify that $\sigmoid(\cdot)$ and Gaussian error linear units $\gelu(\cdot)$ \cite{hendrycks2023gaussian} are strongly independent activation functions. By contrast, the polynomial activation $\sigma(z)=z^{p}$ is not strongly independent for any $p\geq 1$.

Just like in the previous section, we construct an appropriate loss function among parameters that is upper bounded by the $L^2(\mu)$ distance between the corresponding softmax gating MoE regression functions to obtain expert estimation rates. 

\textbf{Voronoi loss.} Tailored to the setting of Regime 1, the Voronoi loss of interest is given by
\begin{align*}
    &\mathcal{D}_2(G,G_*):=\sum_{j=1}^{k_*}\Big|\sum_{i\in\mathcal{A}_j}\exp(\beta_{0i})-\exp(\beta^*_{0j})\Big|\nonumber\\
    &+\sum_{j:|\mathcal{A}_j|>1}\sum_{i\in\mathcal{A}_j}\exp(\beta_{0i})\Big[\|\Delta\beta_{1ij}\|^2+\|\Delta a_{ij}\|^2+|\Delta b_{ij}|^2\Big]\nonumber\\
    &+\sum_{j:|\mathcal{A}_j|=1}\sum_{i\in\mathcal{A}_j}\exp(\beta_{0i})\Big[\|\Delta\beta_{1ij}\|+\|\Delta a_{ij}\|+|\Delta b_{ij}|\Big],
\end{align*}
where we denote $\Delta a_{ij}:=a_i-a^*_j$ and $\Delta b_{ij}:=b_i-b^*_j$.

\begin{theorem}
    \label{theorem:activation_experts}
    Assume that the experts take the form $\sigma(a^{\top}x+b)$, where the activation function $\sigma(\cdot)$ satisfies the condition in Definition~\ref{def:activation_conditions}, then the following $L^2$-lower bound holds true for any $G\in\mathcal{G}_{k}(\Theta)$ under the Regime 1:
    \begin{align*}
        \normf{f_{G}-f_{G_*}}\gtrsim\mathcal{D}_2(G,G_*).
    \end{align*}
    Furthermore, this bound and the result in Theorem~\ref{theorem:l2_bound} imply that $\mathcal{D}_{2}(\widehat{G}_n,G_*)=\mathcal{O}_{P}(\sqrt{\log(n)/n})$. 
\end{theorem}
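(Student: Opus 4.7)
My plan is to prove the $L^2$ lower bound $\normf{f_G-f_{G_*}}\gtrsim \mathcal{D}_2(G,G_*)$; the rate $\mathcal{D}_2(\widehat{G}_n,G_*)=\mathcal{O}_P(\sqrt{\log n/n})$ then follows directly by applying this bound at $G=\widehat{G}_n$ and invoking Theorem~\ref{theorem:l2_bound}. The lower bound splits into a local regime $\mathcal{D}_2(G,G_*)<\varepsilon$ and a global regime $\mathcal{D}_2(G,G_*)\geq\varepsilon$; the global regime is handled by a routine compactness-plus-identifiability argument using Assumptions (A.2)--(A.4), so the real work is in the local regime.

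For the local regime, I would argue by contradiction: assume a sequence $G_n\in\mathcal{G}_k(\Theta)$ with $\mathcal{D}_2(G_n,G_*)\to 0$ but $\normf{f_{G_n}-f_{G_*}}/\mathcal{D}_2(G_n,G_*)\to 0$. By compactness (A.1) and the defining inequality of the Voronoi cells, every fitted atom whose index lies in $\mathcal{A}_j$ must converge to $(\beta_{1j}^*,a_j^*,b_j^*)$. I would then clear the two softmax denominators and express $[f_{G_n}(x)-f_{G_*}(x)]\cdot D_n(x)D_*(x)$ as a sum of terms of the form $\exp((\beta_{1j}^*+\beta_{1\ell}^*)^\top x)\cdot[F(x;\beta_{1i}^n,a_i^n,b_i^n)-F(x;\beta_{1j}^*,a_j^*,b_j^*)]$ together with mass-gap contributions, where $F(x;\beta_1,a,b):=\exp(\beta_1^\top x)\sigma(a^\top x+b)$. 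Within each cell $\mathcal{A}_j$, I would Taylor-expand $F$ around $(\beta_{1j}^*,a_j^*,b_j^*)$ to first order if $|\mathcal{A}_j|=1$ and to second order if $|\mathcal{A}_j|>1$; this is precisely the matching that produces the $\ell_1$-versus-$\ell_2^2$ split appearing in $\mathcal{D}_2$.

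After grouping, the identity takes the schematic form
\[
[f_{G_n}-f_{G_*}]\cdot D_n D_* = \sum_{j,\ell,\nu,\tau}C^{(n)}_{j,\ell,\nu,\tau}\,e^{(\beta_{1j}^*+\beta_{1\ell}^*)^\top x}\,x^\nu\,\sigma^{(\tau)}(a_j^{*\top}x+b_j^*)+R_n(x),
\]
with $|\nu|+\tau\leq 2$, remainder $R_n=o(\mathcal{D}_2(G_n,G_*))$ uniformly on $\mathcal{X}$, and explicit coefficients $C^{(n)}_{\cdot}$ that are polynomials in the parameter differences $\Delta\beta_{1ij},\Delta a_{ij},\Delta b_{ij}$ and in the mass gaps $\sum_{i\in\mathcal{A}_j}e^{\beta_{0i}^n}-e^{\beta_{0j}^*}$. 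Dividing through by $\mathcal{D}_2(G_n,G_*)$ and passing to a subsequence, the normalised coefficients converge to limits $C^{\infty}_{\cdot}$ that cannot all be zero, since by construction of $\mathcal{D}_2$ at least one coefficient in the sum attains absolute value bounded below by a positive constant after normalisation. The contradiction hypothesis then forces the limit function to vanish in $L^2(\mu)$. At this point Regime~1 (all $a_j^*\neq\zerod$) is used to rule out the interaction PDE~\eqref{eq:PDE_efficient_activation}, and the strong independence of $\sigma$ from Definition~\ref{def:activation_conditions}, combined with pair-wise distinctness of $(a_j^*,b_j^*)$ from (A.2) and of the vectors $\beta_{1j}^*$, shows that the generating family $\{e^{(\beta_{1j}^*+\beta_{1\ell}^*)^\top x}x^\nu\sigma^{(\tau)}(a_j^{*\top}x+b_j^*)\}$ is $L^2(\mu)$-linearly independent. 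All $C^{\infty}_{\cdot}$ must therefore vanish, which is the desired contradiction.

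The main obstacle I anticipate is the algebraic bookkeeping in the grouping step: after re-indexing by the distinct exponent vectors $\beta_{1j}^*+\beta_{1\ell}^*$ (which may coincide for different pairs $(j,\ell)$), I must verify that within each exponent-group the coefficients attached to the linearly independent family $\{x^\nu\sigma^{(\tau)}(a_j^{*\top}x+b_j^*)\}$ still encode, in a non-degenerate way, each of the three building blocks of $\mathcal{D}_2$ -- the mass gaps, the singleton-cell first-order contributions, and the multi-element-cell second-order contributions -- so that at least one normalised coefficient is guaranteed to remain bounded below. A secondary subtlety is ensuring that first-order contributions from singleton cells cannot be cancelled in the limit by second-order contributions from non-singleton cells; this is resolved by the fact that those contributions attach to different $(j,\nu,\tau)$-slots in the linearly independent family.
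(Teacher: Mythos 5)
Your skeleton (local/global split, contradiction along a sequence $G_n$ with $\mathcal{D}_2(G_n,G_*)\to 0$, Voronoi cells, first- versus second-order Taylor expansion according to $|\mathcal{A}_j|$, then strong independence plus Regime 1 to force all limiting coefficients to vanish) matches the paper's proof. Your decomposition, however, is genuinely different: you clear \emph{both} softmax denominators and work with the cross-product family $e^{(\beta^*_{1j}+\beta^*_{1\ell})^\top x}x^{\nu}\sigma^{(\tau)}((a^*_j)^\top x+b^*_j)$, whereas the paper multiplies only by the true denominator and absorbs the fitted one into $H(x;\beta_1):=\exp(\beta_1^\top x)f_{G_n}(x)$, splitting $Q_n=A_n-B_n+E_n$ and Taylor-expanding $F$ and $H$ separately. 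That choice is not cosmetic: in the paper's form the coefficient of each basis element ($x^{(u)}F_j$, $F_{1j}$, $[x^{(u)}]^2F_j$, $F_{2j}$, $x^{(u)}H_j$, \dots) is, up to sign, a single building block of $\mathcal{D}_2$, so the paper's Step 2 ("not all normalized coefficients vanish") follows by simply summing the displayed limits.

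The gap is exactly the step you dismiss as holding "by construction of $\mathcal{D}_2$": in the double-denominator expansion it is false as stated. Writing $s_j:=\sum_{i\in\mathcal{A}_j}e^{\beta^n_{0i}}$ and $c_j:=\sum_{i\in\mathcal{A}_j}e^{\beta^n_{0i}}\Delta\beta^n_{1ij}$, the zeroth- and first-order gating/mass contributions attach to the exponent pair $(j,\ell)$ only through the skew combinations $e^{\beta^*_{0\ell}}(s_j-e^{\beta^*_{0j}})-e^{\beta^*_{0j}}(s_\ell-e^{\beta^*_{0\ell}})$ and $e^{\beta^*_{0\ell}}c_j-e^{\beta^*_{0j}}c_\ell$: the diagonal pairs $(j,j)$ cancel identically, and every off-diagonal coefficient vanishes when $c_j\propto e^{\beta^*_{0j}}$ (a common shift of all fitted gating slopes) or when all fitted weights are rescaled by a common factor, even though these directions contribute order-one mass to $\mathcal{D}_2$. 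These are precisely the softmax translation invariances; without the normalization of fitted measures implicit in (A.3) and used in Proposition~\ref{prop:general_identifiability}, such perturbations give $f_{G}=f_{G_*}$ exactly with $\mathcal{D}_2(G,G_*)>0$, so no expansion can save the bound along them. By contrast, the expert-parameter coefficients (the $\Delta a,\Delta b$ blocks attached to the $\sigma^{(1)},\sigma^{(2)}$ slots) appear only in your first sum and are unaffected. Hence your Step-2 analog needs a real argument: either impose the pinning $\beta_{1k'}=\zerod$, $\beta_{0k'}=0$ on fitted measures and show the skew-symmetrized coefficient map is bounded below on the remaining directions, or switch to the paper's single-denominator decomposition with the $H$-function, where each Voronoi-loss block is read off from a distinct coefficient and the non-degeneracy is immediate. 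As written, the proposal identifies the right obstacle but does not overcome it.
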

See Appendix~\ref{appendix:activation_experts} for a proof. Theorem~\ref{theorem:activation_experts} indicates the LSE $\widehat{G}_n$ converges to $G_*$ at the parametric rate $\mathcal{O}_{P}(n^{-1/2})$ under the loss function $\mathcal{D}_2$. From the formulation of this loss, we deduce the following rates. 

\textbf{(i)} For parameters $\beta^*_{1j}$, $a^*_{j}$ and $b^*_{j}$  fitted by one component, i.e., $|\mathcal{A}_j(\widehat{G}_n)|=1$, the estimation rate is of order $\mathcal{O}_P(n^{-1/2})$. Moreover, as the strongly independent $\sigma$ is twice differentiable, the function $x\mapsto\sigma(a^{\top}x+b)$ is Lipschitz continuous with some Lipschitz constant $L_2\geq 0$. Thus, denoting $\widehat{G}_n:=\sum_{i=1}^{\widehat{k}_n}\exp(\widehat{\beta}_{0i})\delta_{(\widehat{\beta}^n_{1i},\widehat{a}^n_i,\widehat{b}^n_i)}$, we have
\begin{align}
    \label{eq:activation_rate}
    & \sup_{x} |\sigma((\widehat{a}^n_i)^{\top}x+\widehat{b}^n_i)-\sigma((a^*_j)^{\top}x+b^*_j)|\nonumber\\
    &~\leq L_2\cdot\|(\widehat{a}^n_i,\widehat{b}^n_i)-(a^*_j,b^*_j)\|\nonumber\\
    &~\leq L_2\cdot(\|\widehat{a}^n_i-a^*_j\|+|\widehat{b}^n_i-b^*_j|)\nonumber\\
    &~\lesssim\mathcal{O}_P(n^{-1/2}).
\end{align}
As a consequence, the estimation rate for the expert $\sigma((a^*_j)^{\top}x+b^*_j)$ is also of order $\mathcal{O}_P(n^{-1/2})$.

\textbf{(ii)} For parameters, say $\beta^*_{1j}$, $a^*_{j}$ and $b^*_{j}$, fitted by more than one component, i.e. $|\mathcal{A}_j(\widehat{G}_n)|>1$, the corresponding rates are $\mathcal{O}_{P}(n^{-1/4})$. By reusing the arguments in equation~\eqref{eq:activation_rate}, we deduce that the expert $\sigma((a^*_j)^{\top}x+b^*_j)$ admits the estimation rate of order $\mathcal{O}_P(n^{-1/4})$.


\subsubsection{Regime 2: Input-independent Experts}
Recall that under this regime, at least one among parameters $a^*_1,\ldots,a^*_{k_*}$ equal to $\zerod$. Without loss of generality, we may assume that $a^*_1=\zerod$. This means that the the value of the first expert $\sigma((a^*_1)^{\top}x+b^*_1)$ no longer depends on the input $x$. In this case, there exists an interaction among the gating parameter $\beta_{1}$ and the expert parameter $a$ captured by the PDE
\begin{align}
    \label{eq:regime2_PDE}
    \frac{\partial F}{\partial \beta_1}(x;\beta^*_{11},a^*_{1},b^*_{1})=\sigma'(b^*_1)\cdot\frac{\partial F}{\partial a}(x;\beta^*_{11},a^*_{1},b^*_{1}).
\end{align}
The significance of this fact is that, owing to the the above PDE, the following Voronoi loss function among parameters is not majorized by the $L^2(\mu)$ distance between the corresponding expert functions:
\begin{align}
    \label{eq:D_3}
    &\mathcal{D}_{3,r}(G,G_*):=\sum_{j=1}^{k_*}\Big|\sum_{i\in\mathcal{A}_j}\exp(\beta_{0i})-\exp(\beta^*_{0j})\Big|\nonumber\\
    &+\sum_{j=1}^{k_*}\sum_{i\in\mathcal{A}_j}\exp(\beta_{0i})\Big[\|\Delta\beta_{1ij}\|^r+\|\Delta a_{ij}\|^r+|\Delta b_{ij}|^r\Big],
\end{align}
for any $r\geq 1 $. This is formalized in the next result, whose proof can be found in Appendix~\ref{appendix:activation_limit}.
\begin{proposition}
    \label{prop:activation_limit}
    Let the expert function take the form $\sigma(a^{\top}x+b)$, and suppose that  not all the parameters $a^*_{1},\ldots,a^*_{k_*}$ are different from $\zerod$, then we obtain that
    \begin{align*}
        \lim_{\varepsilon\to0}\inf_{\substack{G\in\mathcal{G}_{k}(\Theta): \\ \mathcal{D}_{3,r}(G,G_*)\leq\varepsilon}}\normf{f_{G}-f_{G_*}}/\mathcal{D}_{3,r}(G,G_*)=0,
    \end{align*}
    for any $r\geq 1$.
\end{proposition}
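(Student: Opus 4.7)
The plan is to construct, for each $r \geq 1$, an explicit sequence $(G_n)_{n \geq 1} \subset \mathcal{G}_k(\Theta)$ with $\mathcal{D}_{3,r}(G_n, G_*) \to 0$ and $\normf{f_{G_n} - f_{G_*}}/\mathcal{D}_{3,r}(G_n, G_*) \to 0$; since each such $G_n$ eventually enters the feasible set $\{G : \mathcal{D}_{3,r}(G,G_*)\le\varepsilon\}$ as $\varepsilon\to 0$, this delivers the stated limit. I assume WLOG $a^*_1 = \zerod$, which activates the PDE~\eqref{eq:regime2_PDE} at $\omega^*_1 := (\beta^*_{11},0,b^*_1)$.

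\textbf{Algebraic core.} The first step is to exploit a rank-one degeneracy of all mixed partials of $F(x;\beta_1,a,b) = \exp(\beta_1^{\top} x)\sigma(a^{\top} x+b)$ at $\omega^*_1$. A direct computation gives $\partial^{i+j}_{\beta_1^i a^j} F\bigl|_{\omega^*_1} = x^{\otimes(i+j)}\exp((\beta^*_{11})^{\top} x)\,\sigma^{(j)}(b^*_1)$, so the $k$-th order Taylor coefficient of $F(x;\beta^*_{11}+\Delta\beta_1,\Delta a, b^*_1)$ collapses into the fixed function $x^{\otimes k}\exp((\beta^*_{11})^{\top} x)/k!$ multiplied by the scalar polynomial
\[
P_k(\alpha,\gamma) \;:=\; \sum_{i=0}^{k}\binom{k}{i}\,\alpha^{\otimes(k-i)}\gamma^{\otimes i}\,\sigma^{(i)}(b^*_1),
\]
with $P_1$ vanishing along the PDE null line $\sigma(b^*_1)\alpha + \sigma'(b^*_1)\gamma = 0$. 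This reduces the cancellation of the $k$-th order Taylor contribution to $N_n - N_*$ to a \emph{single scalar} moment condition on the perturbations.

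\textbf{Construction and evaluation.} Pick $\tau_n\to 0$ and a unit vector $u$. Take $G_n$ with $k_*+1$ atoms: atoms $j=3,\ldots,k_*+1$ exactly reproduce $\omega^*_2,\ldots,\omega^*_{k_*}$, while atoms $i=1,2$ are the symmetric paired perturbations of $\omega^*_1$ along the PDE null direction,
\[
(\beta^n_{1i}, a^n_i, b^n_i) \;=\; \Bigl(\beta^*_{11}\pm\tau_n u,\ \mp\tau_n\tfrac{\sigma(b^*_1)}{\sigma'(b^*_1)}u,\ b^*_1\Bigr),\qquad \exp(\beta^n_{0i}) = \tfrac12 \exp(\beta^*_{01}).
\]
The symmetry kills the weight-difference term in $\mathcal{D}_{3,r}$, and the residual parameter term gives $\mathcal{D}_{3,r}(G_n,G_*) = C_r\tau_n^r$ for an explicit $C_r>0$. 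Writing $f_{G_n}-f_{G_*} = (N_nD_* - N_*D_n)/(D_nD_*)$ and Taylor-expanding, the $\pm u$ pairing kills every odd-order term in both $N_n-N_*$ and $D_n-D_*$, and the PDE choice additionally kills the scalar $P_1$ coefficient, leaving both differences of order $\tau_n^2$; since $D_nD_*$ stays bounded below on the compact $\mathcal{X}$, $\normf{f_{G_n}-f_{G_*}}\lesssim\tau_n^2$, and the ratio is $\lesssim\tau_n^{2-r}\to 0$ for any $r<2$.

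\textbf{Extension to $r\geq 2$ and main obstacle.} For $r\geq 2$ the two-atom paired construction is insufficient, so I enlarge $G_n$ to carry $k_*+J$ atoms with $J=J(r)$, anchor a \emph{reservoir} atom at $\omega^*_1$ itself (which contributes zero to every positive-order moment while preserving total mass $\exp(\beta^*_{01})$), and place $J$ satellite atoms at $(\beta^*_{11}+\tau_n\alpha_j,\tau_n\gamma_j,b^*_1)$ with positive weights $\pi_j$. The rank-one collapse identified in the algebraic core reduces each higher-order numerator cancellation to the scalar moment condition $\sum_{j\geq 1}\pi_j P_k(\alpha_j,\gamma_j)=0$ for $k=1,\ldots,m:=\lceil r\rceil$, which, together with the analogous condition for the denominator, must be solved with $\pi_j>0$ to push $\normf{f_{G_n}-f_{G_*}}$ down to $O(\tau_n^{m+1})$. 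The central technical obstacle is precisely this positivity: for even $k$ no positive measure can satisfy $\sum_j\pi_j\alpha_j^{\otimes k}=0$ in isolation, so naive symmetric quadrature fails. My plan to overcome this is to work with the combined difference $\Delta F_jD_* - F^{(1)}_*\Delta E_j$ per satellite, in which the PDE-induced alignment between numerator and denominator Taylor coefficients allows the obstruction to be absorbed by tuning $(\alpha_j,\gamma_j,\pi_j)$ jointly inside the one-dimensional function spaces identified above, with positive-weight feasibility then following from a Tchakaloff/Carathéodory-type convex-geometry argument once $J$ is taken large enough relative to $r$.
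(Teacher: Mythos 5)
Your construction is airtight only for $1\leq r<2$, and that is precisely where it stops being a proof of the proposition: the statement must hold for \emph{every} $r\geq 1$, and the whole point of Proposition~\ref{prop:activation_limit} (and of Theorem~\ref{theorem:singular_activation_experts}) is the regime of large $r$. For the symmetric two-atom perturbation you get $\normf{f_{G_n}-f_{G_*}}\lesssim\tau_n^2$ against $\mathcal{D}_{3,r}(G_n,G_*)\asymp\tau_n^{r}$, so the ratio only vanishes when $r<2$; for $r\geq 2$ you openly defer to a plan, and the obstacle you flag is genuine and unresolved. Indeed, once you perturb $\beta_1$ (and $a$) around $\omega^*_1$, the denominator difference contributes pure moments $\sum_j\pi_j(\alpha_j^{\top}x)^k$ at each order $k$, and since the relevant functions ($x^{\nu}$ times a fixed exponential) are linearly independent, these cancellations must hold order-by-order; positive weights cannot annihilate the even-order moments, and the claimed "absorption" of this obstruction into the numerator via the PDE alignment is asserted, not established. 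Note also that the advertised "rank-one collapse to a single scalar condition per order" is only valid when all perturbation directions are collinear; for general satellites the conditions are indexed by all multi-indices $\nu$ with $|\nu|=k$, which makes the moment system you would have to solve with $\pi_j>0$ even more rigid than your sketch suggests. So as written the proposal has a genuine gap for $r\geq 2$, which is the heart of the result.

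The paper avoids this moment problem entirely by a different, much lighter construction that your approach misses: because $a^*_1=\zerod$, the first expert is input-independent, so one perturbs \emph{only its bias}. Concretely, the atom $(\beta^*_{11},\zerod,b^*_1)$ is split into two atoms with weights $\tfrac12\exp(\beta^*_{01})$ each, with the same $\beta^*_{11}$ and $a=\zerod$, and biases $b^*_1+c/n$ and $b^*_1+2c/n$. Since neither $\beta_1$ nor $a$ is moved and the weights sum exactly to $\exp(\beta^*_{01})$, the gating-side terms ($B_n$ and $E_n$ in the paper's decomposition) vanish identically, and $f_{G_n}-f_{G_*}$ reduces to the single function $\exp((\beta^*_{11})^{\top}x+\beta^*_{01})$ multiplied by the scalar $\sum_{i=1}^{2}[\sigma(b^n_i)-\sigma(b^*_1)]$; driving that one scalar below the order $n^{-r}$ is then a one-dimensional cancellation across Taylor orders (the paper arranges it by choosing the constant $c$ as a root of an odd-degree polynomial), and this works uniformly in $r$ — no system of moment conditions, no positivity obstruction. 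In short: perturbing $(\beta_1,a)$ along the PDE null direction re-introduces the gating into the difference and creates exactly the multi-moment problem that blocks you at $r\geq 2$, whereas a bias-only perturbation at the degenerate expert collapses everything to one scalar equation. To complete your argument you would essentially need to switch to this bias-only construction (or supply a full proof of the positive-weight feasibility claim, which is currently speculative).
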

The above proposition, combined with Theorem~\ref{theorem:l2_bound}, indicates that the parameter estimation rate in this situation ought to be slower than any polynomial of $1/\sqrt{n}$. 
This is indeed the case, as confirmed by the following minimax lower bound. 
\begin{theorem}
    \label{theorem:singular_activation_experts}
    Assume that the experts take the form $\sigma(a^{\top}x+b)$, then the following minimax lower bound of estimating $G_*$ holds true for any $r\geq 1$ under the Regime 2:
    \begin{align*}
        \inf_{\overline{G}_n\in\mathcal{G}_{k}(\Theta)}\sup_{G\in\mathcal{G}_{k}(\Theta)\setminus\mathcal{G}_{k_*-1}(\Theta)}\bbE_{f_{G}}[\mathcal{D}_{3,r}(\overline{G}_n,G)]\gtrsim n^{-1/2},
    \end{align*}
    where $\bbE_{f_{G}}$ indicates the expectation taken w.r.t the product measure with $f^n_{G}$ and the infimum is over all estimators taking values in $\mathcal{G}_{k}$.
\end{theorem}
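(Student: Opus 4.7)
The plan is to invoke Le Cam's classical two-point method, using Proposition~\ref{prop:activation_limit} to exhibit a sequence of mixing measures arbitrarily close to $G_*$ in $L^2(\mu)$-distance but whose Voronoi separation $\mathcal{D}_{3,r}$ remains comparatively much larger.

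First I would fix $r\geq 1$ and, by Proposition~\ref{prop:activation_limit}, extract a sequence $\{G_n\}_{n\geq 1}\subset \mathcal{G}_{k}(\Theta)\setminus\mathcal{G}_{k_*-1}(\Theta)$ with $\mathcal{D}_{3,r}(G_n,G_*)\to 0$ and $\normf{f_{G_n}-f_{G_*}}=o\bigl(\mathcal{D}_{3,r}(G_n,G_*)\bigr)$. Setting $\delta_n:=\mathcal{D}_{3,r}(G_n,G_*)$ and $\psi_n:=\normf{f_{G_n}-f_{G_*}}/\delta_n\to 0$, I would then calibrate the sequence so that $\delta_n\,\psi_n\asymp c_0/\sqrt{n}$ for a sufficiently small constant $c_0>0$. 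Since $\psi_n\to 0$, this forces $\delta_n\gtrsim n^{-1/2}$ (in fact, much larger).

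Next, letting $P_G$ denote the joint law of $(X,Y)$ under model~\eqref{eq:moe_regression_model} with regression function $f_G$, the Gaussian noise yields
\[
\mathrm{KL}\bigl(P_{G_*}^{\otimes n},\,P_{G_n}^{\otimes n}\bigr)\;=\;\frac{n}{2\nu}\,\normf{f_{G_*}-f_{G_n}}^2\;\leq\;\frac{c_0^2}{2\nu}.
\]
Pinsker's inequality then keeps $\mathrm{TV}(P_{G_*}^{\otimes n},P_{G_n}^{\otimes n})$ bounded away from $1$ for $c_0$ small. Restricting the supremum in the minimax risk to $\{G_*,G_n\}\subset \mathcal{G}_{k}(\Theta)\setminus\mathcal{G}_{k_*-1}(\Theta)$ and applying Le Cam's two-point lemma delivers
\[
\inf_{\overline{G}_n}\sup_{G}\bbE_{f_G}[\mathcal{D}_{3,r}(\overline{G}_n,G)]\;\gtrsim\; \delta_n\bigl(1-\mathrm{TV}(P_{G_*}^{\otimes n},P_{G_n}^{\otimes n})\bigr)\;\gtrsim\; n^{-1/2},
\]
which is the desired bound.

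The hard part will be that $\mathcal{D}_{3,r}$ is not a metric: it is asymmetric and is defined via the Voronoi cells generated by its second argument, so the textbook Le Cam separation argument does not transfer verbatim. The final step requires a quasi-triangle-type lower bound $\mathcal{D}_{3,r}(\overline{G}_n,G_*)+\mathcal{D}_{3,r}(\overline{G}_n,G_n)\gtrsim \mathcal{D}_{3,r}(G_*,G_n)$ uniformly over estimators $\overline{G}_n$ lying in a neighborhood of $G_*$. This should be tractable because the perturbation $G_n$ produced by Proposition~\ref{prop:activation_limit} is built to exploit the degenerate PDE~\eqref{eq:regime2_PDE} at the component $j_0$ with $a^*_{j_0}=\zerod$, and hence concentrates inside a single Voronoi cell around $\omega^*_{j_0}$ while leaving the other cells essentially undisturbed. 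One then verifies, using the explicit form of the perturbation, that any $\overline{G}_n$ close to $G_n$ inside that distinguished cell is automatically forced to be far from $G_*$ by at least a constant fraction of $\delta_n$, closing the two-point argument.
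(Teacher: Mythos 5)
Your proposal takes essentially the same route as the paper's proof: both extract the near-degenerate perturbation of $G_*$ supplied by Proposition~\ref{prop:activation_limit}, bound the $n$-sample KL divergence through the Gaussian noise and the $L^2(\mu)$ closeness of the regression functions, and conclude with Le Cam's two-point lemma (the paper uses the $\exp(-n\,\mathrm{KL})$ form and simply invokes the weak triangle inequality for $\mathcal{D}_{3,r}$ --- precisely the point you flag as needing verification), calibrating the perturbation so that the KL stays bounded while the $\mathcal{D}_{3,r}$ separation is at least of order $n^{-1/2}$. The remaining differences (Pinsker/TV versus the exponential form of Le Cam, and your choice $\normf{f_{G_n}-f_{G_*}}\asymp c_0 n^{-1/2}$ versus the paper's $\varepsilon=n^{-1/2}$ with $\mathcal{D}_{3,r}(G'_*,G_*)=2\varepsilon$) are cosmetic.
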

The proof of Theorem~\ref{theorem:singular_activation_experts} is in Appendix~\ref{appendix:singular_activation_experts}. This result together with the formulation of the Voronoi loss $\mathcal{D}_{3,r}$ in equation~\eqref{eq:D_3} leads to a singular and striking phenomenon that, to the best of our knowledge, has never been observed in previous work \cite{ho2022gaussian,chen2022theory,nguyen2023demystifying}. Specifically, 

\textbf{(i)} The rates for estimating the parameters $\beta^*_{1j}$, $a^*_{j}$ and $b^*_{j}$ are slower than any polynomial rate $\mathcal{O}_{P}(n^{-1/2r})$ for any $r\geq 1$. In particular, they could be as slow as $\mathcal{O}_{P}(1/\log(n))$. 

\textbf{(ii)} Recall from equation~\eqref{eq:activation_rate} that 
\begin{align}
    \label{eq:activation_bound}
    &\sup_{x} |\sigma((\widehat{a}^n_i)^{\top}x+\widehat{b}^n_i)-\sigma((a^*_j)^{\top}x+b^*_j)|\nonumber\\
    &\hspace{2cm}\leq L_2\cdot(\|\widehat{a}^n_i-a^*_j\|+|\widehat{b}^n_i-b^*_j|).
\end{align}
Consequently, the expert estimation rates might also be significantly slow, of order $\mathcal{O}_{P}(1/\log(n))$ or worse, due to the interaction between gating and expert parameters in equation~\eqref{eq:regime2_PDE}. It is worth noting that these slow rates occur even when the activation function $\sigma$ meets the strong independence condition in Definition~\ref{def:activation_conditions}. This observation suggests that all the expert parameters $a^*_{1},\ldots,a^*_{k_*}$ should be different from $\zerod$. In other words, every expert of the form $\sigma(a^{\top}x+b)$ in the MoE model should depend on the input value. 

\subsection{On Polynomial Activation}
\label{sec:polynomial_experts}
We now focus on a specific setting in which the activation function $\sigma$ is formulated as a polynomial, i.e. $\sigma(z)=z^{p}$, for some $p \in \mathbb{N}$. Concretely,  for all $j \in [k^*]$, we set
\begin{align}
    h(x,\eta^*_{j})=((a^*_{j})^{\top}x+b^*_{j})^{p}, \quad x \in \mathcal{X},
\end{align}
and call it a polynomial expert. Notably, it can be verified that this activation function violates the strong independence condition in Definition~\ref{def:activation_conditions} for any $p\in\mathbb{N}$. For simplicity, let us consider only the setting when $p=1$, i.e., $h(x,\eta^*_{j})=(a^*_{j})^{\top}x+b^*_{j}$, with a note that the results for other settings of $p$ can be argued in a similar fashion.

Since the strong independence condition in Definition~\ref{def:activation_conditions} is not satisfied, we have to deal with an interaction among parameters, capture by following PDE:
\begin{align}
    \label{eq:PDE_polynomial}
    \frac{\partial^2 F}{\partial\beta_{1}\partial b}(x;\beta^*_{1i},a^*_{i},b^*_{i})=\frac{\partial F}{\partial a}(x;\beta^*_{1i},a^*_{i},b^*_{i}),
\end{align}
\begin{table*}[!ht]
\caption{Comparison of parameter and expert estimation rates under the probabilistic softmax gating mixture of \textbf{linear experts} \cite{nguyen2023demystifying} and the deterministic one (Ours). Here, we denote $\bar{r}_j:=\bar{r}(|\mathcal{A}^n_j|)$, where the function $\bar{r}(\cdot)$ represents for the solvability of a system of polynomial equations in \cite{nguyen2023demystifying}. Some specific values of this function are given by: $\bar{r}(2)=4$ and $\bar{r}(3)=6$.}
\textbf{}\\
\centering
\begin{tabular}{ | c | c | c |c|c|c|c|} 
\hline
\multirow{2}{4em}{\textbf{}} &  \multicolumn{2}{c|}{\textbf{Parameters} $a^*_j$} &  \multicolumn{2}{c|}{\textbf{Parameters} $b^*_j$} & \multicolumn{2}{c|}{\textbf{Experts} $(a^*_j)^{\top}x+b^*_j$} \\ \cline{2-7}
\textbf{Model Type}& $j:|\mathcal{A}^n_j|=1$ & $j:|\mathcal{A}^n_j|>1$& $j:|\mathcal{A}^n_j|=1$ & $j:|\mathcal{A}^n_j|>1$ & $j:|\mathcal{A}^n_j|=1$ & $j:|\mathcal{A}^n_j|>1$\\
\hline 
Probabilistic & {$\mathcal{O}_{P}(n^{-1/2})$}& $\mathcal{O}_{P}(n^{-1/\bar{r}_j})$ & {$\mathcal{O}_{P}(n^{-1/2})$}& $\mathcal{O}_{P}(n^{-1/2\bar{r}_j})$ & {$\mathcal{O}_{P}(n^{-1/2})$}& $\mathcal{O}_{P}(n^{-1/2\bar{r}_j})$\\
\hline
Deterministic  & \multicolumn{6}{c|}{Slower than $\mathcal{O}_{P}(n^{-1/2r}), \forall r\geq 1$}\\
\hline
\end{tabular}

\label{table:comparison}
\end{table*}

\noindent
where $F(x;\beta_{1},a,b):=\exp(\beta_{1}^{\top}x)(a^{\top}x+b)$ is the product of softmax numerator and the expert function. Though this interaction has already been observed and analyzed in previous work \cite{nguyen2023demystifying}, its effects on the expert convergence  rate in the present settings are totally different as we consider a deterministic MoE model rather than a probabilistic model. In particular, \cite{nguyen2023demystifying} argued that the interaction~\eqref{eq:PDE_polynomial} led to polynomial expert estimation rates which were determined by the solvability of a system of polynomial equations. On the other hand, we show in the Proposition~\ref{prop:linear_limit} below that such interaction makes the ratio $\normf{f_{G}-f_{G_*}}/\mathcal{D}_{3,r}(G,G_*)$ vanish when the loss $\mathcal{D}_{3,r}(G,G_*)$ goes to zero as shown in the next proposition, whose proof can be found in Appendix~\ref{appendix:linear_limit}.
\begin{proposition}
    \label{prop:linear_limit}
    Let the expert functions take the form $a^{\top}x+b$, then the following limit holds for any $r\geq 1$:
    \begin{align*}
        \lim_{\varepsilon\to0}\inf_{\substack{G\in\mathcal{G}_{k}(\Theta): \\ \mathcal{D}_{3,r}(G,G_*)\leq\varepsilon}}\normf{f_{G}-f_{G_*}}/\mathcal{D}_{3,r}(G,G_*)=0.
    \end{align*}
\end{proposition}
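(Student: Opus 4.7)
The plan is to exhibit, for each fixed $r \geq 1$, an explicit sequence $(G_n)_{n \geq 1}$ in $\mathcal{G}_{k}(\Theta)$ satisfying $\mathcal{D}_{3,r}(G_n,G_*) \to 0$ while $\normf{f_{G_n}-f_{G_*}}/\mathcal{D}_{3,r}(G_n,G_*) \to 0$, which immediately delivers the claimed limit. The driving mechanism is the PDE identity~\eqref{eq:PDE_polynomial}: it forces the first-order Taylor coefficient of $F(x;\beta_1,a,b)=\exp(\beta_1^{\top}x)(a^{\top}x+b)$ along $\partial F/\partial a$ to be exactly balanced by a second-order $(\Delta\beta_1)(\Delta b)$ coefficient, so a carefully chosen pair of components can produce leading-order cancellation.

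Concretely, I would fit the first true component $(\beta^*_{11},a^*_1,b^*_1)$ by two nearby components, leaving the remaining $k_*-1$ components of $G_n$ identical to those of $G_*$. Assign both new components weight $\exp(\beta^*_{01})/2$, so that $|\sum_{i\in\mathcal{A}_1}\exp(\beta_{0i})-\exp(\beta^*_{01})|=0$, and choose their parameters as
\begin{gather*}
    (\beta^*_{11}+\epsilon_n e_1,\; a^*_1-\epsilon_n\delta_n e_1,\; b^*_1+\delta_n), \\
    (\beta^*_{11}-\epsilon_n e_1,\; a^*_1-\epsilon_n\delta_n e_1,\; b^*_1-\delta_n),
\end{gather*}
where $e_1$ is a fixed unit vector in $\mathbb{R}^d$ and $\epsilon_n,\delta_n\to 0$ will be tuned. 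Expanding $N_G(x):=\sum_i\exp(\beta_{0i}+\beta_{1i}^{\top}x)(a_i^{\top}x+b_i)$ and $D_G(x):=\sum_i\exp(\beta_{0i}+\beta_{1i}^{\top}x)$ in Taylor series at the true parameters, summing over the two new components, the first-order $\Delta\beta_1$ and $\Delta b$ contributions cancel by symmetry, while the first-order coefficient $-2w\epsilon_n\delta_n$ of $\partial F/\partial a$ and the second-order coefficient $+2w\epsilon_n\delta_n$ of $\partial^2 F/(\partial\beta_1\partial b)$ cancel exactly through~\eqref{eq:PDE_polynomial}, with $w=\exp(\beta^*_{01})/2$.

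After these cancellations the leading surviving contribution in both $N_{G_n}-N_{G_*}$ and $D_{G_n}-D_{G_*}$ is the second-order $(\Delta\beta_1)^{\top}\partial^2_{\beta_1}F(\Delta\beta_1)$ term, of order $\epsilon_n^2$. Using the identity $f_{G_n}-f_{G_*}=(N_{G_n}-N_{G_*})/D_{G_n}-f_{G_*}(D_{G_n}-D_{G_*})/D_{G_n}$, the boundedness of $\mathcal{X}$, and a uniform positive lower bound for $D_{G_n}$ on $\mathcal{X}$, I conclude $\normf{f_{G_n}-f_{G_*}}=\mathcal{O}(\epsilon_n^2)$. Since $\|\Delta\beta_{1i}\|=\epsilon_n$, $|\Delta b_i|=\delta_n$ and $\|\Delta a_i\|=\epsilon_n\delta_n$, the Voronoi loss satisfies $\mathcal{D}_{3,r}(G_n,G_*)\asymp \delta_n^r$ whenever $\delta_n\geq\epsilon_n$. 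Setting $\epsilon_n=n^{-1}$ and $\delta_n=n^{-1/r}$ therefore gives $\mathcal{D}_{3,r}(G_n,G_*)\asymp n^{-1}$ and $\normf{f_{G_n}-f_{G_*}}=\mathcal{O}(n^{-2})$, so the ratio is $\mathcal{O}(n^{-1})\to 0$, yielding the desired conclusion for any prescribed $r\geq 1$.

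The main obstacle is the rigorous control of the Taylor remainder: one must verify that no higher-order cross term in the expansions of $N_G$ and $D_G$, and no correction induced by dividing by $D_{G_n}$ rather than $D_{G_*}$, produces a contribution of order larger than $\epsilon_n^2$. This bookkeeping at orders $\geq 3$ in $(\epsilon_n,\delta_n)$ is tedious but routine given uniform bounds on the third derivatives of $F$ and $\widetilde{F}(x;\beta_1):=\exp(\beta_1^{\top}x)$, which follow from compactness of $\Theta$ and boundedness of $\mathcal{X}$ in assumption~(A.1).
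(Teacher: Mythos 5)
Your construction is correct, and it proves the proposition by the same overall strategy as the paper (exhibit an explicit sequence $G_n$ with $\mathcal{D}_{3,r}(G_n,G_*)\to 0$ and vanishing ratio), but the sequence itself and the cancellation mechanism are genuinely different. The paper splits the first component into two pieces that keep the gating parameter $\beta^*_{11}$ and the slope $a^*_1$ \emph{unchanged} and only move the bias symmetrically, $b^n_{1,2}=b^*_1\pm 1/n$; since the expert is affine in $b$, the numerator term cancels \emph{exactly} (no Taylor expansion or remainder control is needed), and the only nonzero discrepancy comes from a deliberate inflation of the weights by $\tfrac{1}{2n^{r+1}}$, which is of order $n^{-(r+1)}\ll \mathcal{D}_{3,r}\asymp n^{-r}$. (With exactly matched weights that construction would even give $f_{G_n}=f_{G_*}$ identically, so the inflation is what keeps the example nondegenerate.) Your route instead perturbs $(\beta_1,a,b)$ jointly at two scales $\epsilon_n,\delta_n$, keeps the weights exact, and engineers the leading cancellation through the interaction PDE~\eqref{eq:PDE_polynomial}, leaving an $\mathcal{O}(\epsilon_n^2)$ residual that you kill by taking $\epsilon_n\ll \delta_n^{r/2}$. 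What your version buys is that it makes the mechanism advertised in Section~4.2 — the gate--expert interaction — do the actual work, and it is the shape of argument one would need if the expert were not exactly affine in the perturbed coordinate; what the paper's version buys is brevity and exactness. One point worth making explicit so your deferred ``bookkeeping'' does not read as a gap: because $F(x;\beta_1,a,b)=\exp(\beta_1^{\top}x)(a^{\top}x+b)$ is affine in $(a,b)$, every Taylor term of total degree at least two in $(a,b)$ vanishes identically, the odd pure-$\beta_1$ coefficients cancel by the $\pm\epsilon_n$ symmetry, and the remaining mixed terms (e.g.\ the $\beta_1^{3}b$ coefficient $2\epsilon_n^3\delta_n$) are $o(\epsilon_n^2)$ under your scaling; together with the uniform lower bound on the softmax denominator from (A.1), this makes your claimed bound $\normf{f_{G_n}-f_{G_*}}=\mathcal{O}(\epsilon_n^2)$ rigorous, and the ratio is $\mathcal{O}(n^{-1})\to 0$ as you state.
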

Just like in the previous section, we arrive at a significantly slow expert estimation rates. 

\begin{theorem}
    \label{theorem:linear_experts}
    Assume that the experts take the form $a^{\top}x+b$, then we achieve the following minimax lower bound of estimating $G_*$:
    \begin{align*}
        \inf_{\overline{G}_n\in\mathcal{G}_{k}(\Theta)}\sup_{G\in\mathcal{G}_{k}(\Theta)\setminus\mathcal{G}_{k_*-1}(\Theta)}\bbE_{f_{G}}[\mathcal{D}_{3,r}(\overline{G}_n,G)]\gtrsim n^{-1/2},
    \end{align*}
    for any $r\geq 1$, where $\bbE_{f_{G}}$ indicates the expectation taken w.r.t the product measure with $f^n_{G}$.
\end{theorem}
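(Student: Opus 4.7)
My plan is to apply Le Cam's two-point method, with Proposition~\ref{prop:linear_limit} supplying the crucial alternative. The heuristic is that, because the $L^{2}(\mu)$ distance between two softmax gating MoE regression functions $f_{G_0}$ and $f_{G_1}$ can be made to vanish arbitrarily faster than the parameter discrepancy $\mathcal{D}_{3,r}(G_0,G_1)$, a statistician observing $n$ noisy samples cannot possibly distinguish $G_0$ from $G_1$ to any polynomial-in-$n$ accuracy in the loss $\mathcal{D}_{3,r}$, so the minimax risk must be at least of order $\mathcal{D}_{3,r}(G_0,G_1)$.

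\textbf{Step 1 (Le Cam inequality).} For any pair $G_0, G_1 \in \mathcal{G}_{k}(\Theta)\setminus\mathcal{G}_{k_*-1}(\Theta)$, I would invoke the standard two-point lower bound
\begin{align*}
\inf_{\overline{G}_n}\sup_{G}\bbE_{f_{G}}[\mathcal{D}_{3,r}(\overline{G}_n,G)] \;\geq\; \tfrac{1}{2}\,\mathcal{D}_{3,r}(G_0,G_1)\bigl(1-\mathrm{TV}(P_{G_0}^{n},P_{G_1}^{n})\bigr),
\end{align*}
where $P_{G}^{n}$ denotes the joint law of $(X_1,Y_1),\ldots,(X_n,Y_n)$ under model~\eqref{eq:moe_regression_model} with mean function $f_{G}$ and Gaussian noise of variance $\nu$.

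\textbf{Step 2 (Hellinger control).} Next, since the conditional density of $Y_i\mid X_i$ is $N(f_G(X_i),\nu)$, a direct Gaussian computation gives the single-observation bound $H^{2}(P_{G_0},P_{G_1}) \lesssim \nu^{-1}\normf{f_{G_0}-f_{G_1}}^{2}$. Applying Hellinger tensorization and the inequality $\mathrm{TV}\leq H$ then yields
\begin{align*}
\mathrm{TV}(P_{G_0}^{n},P_{G_1}^{n}) \;\lesssim\; \sqrt{n}\,\normf{f_{G_0}-f_{G_1}}.
\end{align*}

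\textbf{Step 3 (Choice of alternative).} The crucial step is to pick $G_0:=G_*$ and to use Proposition~\ref{prop:linear_limit} to select $G_1:=G_n\in\mathcal{G}_{k}(\Theta)\setminus\mathcal{G}_{k_*-1}(\Theta)$ with $\mathcal{D}_{3,r}(G_n,G_*)\asymp n^{-1/2}$ while $\normf{f_{G_n}-f_{G_*}}=o(n^{-1/2})$. Because the proposition supplies a family of mixing measures along which the ratio $\normf{f_{G}-f_{G_*}}/\mathcal{D}_{3,r}(G,G_*)$ tends to zero, one can tune the perturbation parameter in that construction so that $\mathcal{D}_{3,r}(G_n,G_*)$ is proportional to $n^{-1/2}$ and the induced $L^{2}(\mu)$ distance is of strictly smaller order. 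I would also verify that the constructed $G_n$ has at least $k_*$ components so that it lies in $\mathcal{G}_{k}(\Theta)\setminus\mathcal{G}_{k_*-1}(\Theta)$; this will be a feature of the explicit construction rather than a constraint.

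\textbf{Step 4 (Conclusion).} Inserting this sequence into the bound of Step 2 gives $\mathrm{TV}(P_{G_*}^{n},P_{G_n}^{n})=o(1)$, hence $1-\mathrm{TV}\geq 1/2$ for all sufficiently large $n$. The Le Cam bound of Step 1 then delivers
\begin{align*}
\inf_{\overline{G}_n}\sup_{G\in\mathcal{G}_{k}(\Theta)\setminus\mathcal{G}_{k_*-1}(\Theta)}\bbE_{f_G}[\mathcal{D}_{3,r}(\overline{G}_n,G)] \;\gtrsim\; \mathcal{D}_{3,r}(G_*,G_n) \;\gtrsim\; n^{-1/2},
\end{align*}
which is the stated claim for every $r\geq 1$.

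\textbf{Main obstacle.} The substantive technical work has been absorbed into Proposition~\ref{prop:linear_limit}: its proof requires constructing explicit perturbations of $G_*$ that exploit the PDE~\eqref{eq:PDE_polynomial} to arrange higher-order cancellations in the Taylor expansion of $F(x;\beta_1,a,b)=\exp(\beta_1^\top x)(a^\top x + b)$ around $(\beta_{1i}^{*},a_i^{*},b_i^{*})$, so that $\normf{f_{G_n}-f_{G_*}}$ shrinks strictly faster than $\mathcal{D}_{3,r}(G_n,G_*)$ for every $r\geq 1$. Given that proposition, the present minimax lower bound falls out of a clean application of Le Cam together with the Gaussian Hellinger bound.
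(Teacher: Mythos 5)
Your proposal is correct and follows essentially the same route as the paper: the paper proves this theorem by combining Proposition~\ref{prop:linear_limit} with the two-point Le Cam argument already used for Theorem~\ref{theorem:singular_activation_experts} in Appendix~\ref{appendix:singular_activation_experts}, choosing an alternative $G'_*$ with $\mathcal{D}_{3,r}(G'_*,G_*)\asymp\varepsilon=n^{-1/2}$ and $\normf{f_{G'_*}-f_{G_*}}\lesssim\varepsilon$, and controlling the product likelihoods through the Gaussian KL divergence $(f_{G'_*}(x)-f_{G_*}(x))^2/(2\sigma^2)$, which is equivalent to your TV--Hellinger tensorization step. The only detail to make explicit in your Step 1 is that $\mathcal{D}_{3,r}$ is not symmetric and not a metric, so the two-point reduction requires the weak triangle inequality for this Voronoi loss, a point the paper invokes when applying Le Cam's lemma.
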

Proof of Theorem~\ref{theorem:linear_experts} is in Appendix~\ref{appendix:linear_experts}. A few comments regarding the above theorem are in order (see also Table~\ref{table:comparison}):

\textbf{(i)} Theorem~\ref{theorem:linear_experts} reveals that using polynomial experts will result in the same slow rates as using input-independent experts, as described in Theorem~\ref{theorem:singular_activation_experts}. More specifically, the estimation rates for parameters $\beta^*_{1i}$, $a^*_{i}$ and $b^*_{i}$ are slower than any polynomial rates, and could be of order $\mathcal{O}_{P}(1/\log(n))$ because of the interaction in equation~\eqref{eq:PDE_polynomial}. 

\textbf{(ii)} Additionally, we have that
\begin{align*}
    & \sup_{x} \Big|((\widehat{a}^n_i)^{\top}x+\widehat{b}^n_i)-((a^*_j)^{\top}x+b^*_j)\Big|\nonumber\\
    &\hspace{2cm}\leq \sup_{x} \|\widehat{a}^n_i-a^*_j\|\cdot\|x\|+|\widehat{b}^n_i-b^*_j|.
\end{align*}
Since the input space $\mathcal{X}$ is bounded, we deduce that the rates for estimating polynomial experts $(a^*_j)^{\top}x+b^*_j$ could also be as slow as $\mathcal{O}_{P}(1/\log(n))$. This is remarkable, especially in contrast to the polynomial rates of linear expert  established by \cite{nguyen2023demystifying} in probabilistic softmax gating experts. Hence, for the expert estimation problem, the performance of a mixture of linear experts cannot compare to that of a mixture of non-linear experts. It is worth noting that this claim aligns with the findings in \cite{chen2022theory}.



\section{Conclusions}
\label{sec:conclusion}
In this paper, we have analyzed the convergence rates of the least squares estimator under a deterministic softmax gating MoE model. We have shown that expert functions that satisfy a novel condition  referred to as strong identifiability enjoy estimation rates of polynomial orders. When specializing to experts of the form ridge function, polynomial rates can be guaranteed under another condition, called strongly independent activation, provided that all the expert parameters are non-zero. In contrast, when at least one of the expert parameters vanishes, we have unveiled the surprising fact that expert estimation rates become slower than any polynomial rates. Furthermore, we also prove that polynomial experts, which violate the strong identifiability condition, also experience such slow rates under any parameter settings. 

\textbf{Limitations.} Our analysis has two following limitations:

\textbf{(L.1)} The theoretical results established in the paper are under the assumption that the data are generated from a regression model where the regression function is a softmax gating MoE. This assumption can be violated in real-world settings when
the data are not necessarily generated from that model. Under those misspecified settings, the regression function is an arbitrary function $g$ which is not necessarily a mixture of experts. Then, the least square estimator $\widehat{G}_n$ defined in equation~\eqref{eq:least_squared_estimator} converges to the mixing measure $\overline{G}\in\mathcal{G}_k$ that minimizes the $L^2$-distance between $f_{G}$ and $g$. Since the current analysis of the least square estimation under the misspecified settings of statistical models is mostly conducted when the function space is convex \cite{vandeGeer-00}, it is inapplicable to our setting where the space $\mathcal{G}_k$ is non-convex. Therefore, we believe that further techniques should be developed to analyze the misspecified settings, which is beyond the scope of our work.

\textbf{(L.2) } The depth of an expert network has not been considered in capturing the convergence rate of expert estimation. In particular, we demonstrate in Theorem~\ref{theorem:general_experts} that any choice of expert network which satisfies the strong identifiability condition will lead to polynomial expert estimation rates regardless of its depth. Secondly, although ridge experts of the form $h(x,(a,b))=\sigma(a^{\top}x+b)$ are not strongly identifiable, we show in Theorem~\ref{theorem:activation_experts} that if the activation $\sigma$ satisfies the strong independence condition in Definition~\ref{def:activation_conditions}, then the expert estimation rates are also polynomial. On the other hand, for ridge experts with activation $\sigma$ violating the strong independence condition, e.g. polynomial experts, we find that increasing the depth of the expert network would not help improve the slow expert estimation rates in Theorem~\ref{theorem:linear_experts} due to an intrinsic interaction among parameters of polynomial experts (expressed in the language of partial differential equations). 
We believe that technical tools need to be further developed to understand the effects of the network depth on the expert estimation problem. As it stays beyond the scope of our work, we leave that direction for future work. 

\textbf{Future directions.} There are some potential directions to which our current theory can extend. Firstly, we can leverage our techniques to capture the convergence behavior of different types of experts under the MoE models with other gating functions, namely Top-K sparse gate \cite{shazeer2017topk}, dense-to-sparse gate \cite{nie2022evomoe}, cosine similarity gate \cite{li2023sparse}, Laplace gate \cite{han2024fusemoe}, and sigmoid gate \cite{csordas2023approximating}. Such analysis would enrich the knowledge of expert selection given a specific gating function. Additionally, we can develop our current techniques to provide a comprehensive understanding of more complex MoE models such as hierarchical MoE \cite{zhao_hierarchical_1994,jacobs_bayesian_1997} and multigate MoE \cite{ma_modeling_2018,liang_m3vit_2022}, which have remained elusive in the literature.

\section*{Acknowledgements}
NH acknowledges support from the NSF IFML 2019844 and the NSF AI Institute for Foundations of Machine Learning. 

\section*{Impact Statement}

This paper presents work whose goal is to advance the field of Machine Learning. 
Given the theoretical nature of the paper, we believe that there are no potential societal consequences of our work.

\bibliography{icml_references}
\bibliographystyle{icml2024}

\newpage
\appendix
\onecolumn
\centering
\textbf{\Large{Supplementary Material for \\ \vspace{.2em} 
``On Least Square Estimation in Softmax Gating Mixture of Experts''}}

\justifying
\setlength{\parindent}{0pt}
In this supplementary material, we provide proofs for the main results in the paper in Appendix~\ref{sec:proofs}, while we leave proofs for the identifiability of the softmax gating mixture of experts in Appendix~\ref{appendix:identifiability}. Finally, we run several numerical experiments in Appendix~\ref{sec:experiments} to empirically justify our theoretical results.
\section{Proofs of Main Results}
\label{sec:proofs}
In this appendix, we provide proofs for main results in the paper. 
\subsection{Proof of Theorem~\ref{theorem:l2_bound}}
\label{appendix:l2_bound}
For the proof of the theorem, we first introduce some notation. Firstly, we denote by $\mathcal{F}_k(\Theta)$ the set of regression functions w.r.t all mixing measures in $\mathcal{G}_k(\Theta)$, that is, $\mathcal{F}_k(\Theta):=\{f_{G}(x):G\in\mathcal{G}_{k}(\Theta)\}$.
Additionally, for each $\delta>0$, the $L^{2}$ ball centered around the regression function $f_{G_*}(x)$ and intersected with the set $ \mathcal{F}_k(\Theta)$ is defined as
\begin{align*}   \mathcal{F}_k(\Theta,\delta):=\left\{f \in \mathcal{F}_k(\Theta): \|f -f_{G_*}\|_{L^2(\mu)} \leq\delta\right\}.
\end{align*}
In order to measure the size of the above set, Geer et. al. \cite{vandeGeer-00} suggest using the following quantity:
\begin{align}
    \label{eq:bracket_size}
    \mathcal{J}_B(\delta, \mathcal{F}_k(\Theta,\delta)):=\int_{\delta^2/2^{13}}^{\delta}H_B^{1/2}(t, \mathcal{F}_k(\Theta,t),\|\cdot\|_{L^2(\mu)})~\dint t\vee \delta,
\end{align}
where $H_B(t, \mathcal{F}_k(\Theta,t),\|\cdot\|_{L^2(\mu)})$ stands for the bracketing entropy \cite{vandeGeer-00} of $ \mathcal{F}_k(\Theta,u)$ under the $L^{2}$-norm, and $t\vee\delta:=\max\{t,\delta\}$. By using the similar proof argument of Theorem 7.4 and Theorem 9.2 in \cite{vandeGeer-00} with notations being adapted to this work, we obtain the following lemma:
\begin{lemma}
    \label{lemma:density_rate}
    Take $\Psi(\delta)\geq \mathcal{J}_B(\delta, \mathcal{F}_k(\Theta,\delta))$ that satisfies $\Psi(\delta)/\delta^2$ is a non-increasing function of $\delta$. Then, for some universal constant $c$ and for some sequence $(\delta_n)$ such that $\sqrt{n}\delta^2_n\geq c\Psi(\delta_n)$, we achieve that
    \begin{align*}
        \mathbb{P}\Big(\|f_{\widehat{G}_n} - f_{G_*}\|_{L^2(\mu)} > \delta\Big)\leq c \exp\left(-\frac{n\delta^2}{c^2}\right),
    \end{align*}
    for all $\delta\geq \delta_n$.
\end{lemma}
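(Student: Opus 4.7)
The plan is to adapt the classical empirical-process proof of convergence rates for least-squares estimators over possibly non-convex function classes (van de Geer 2000, Theorems 7.4 and 9.2) to our softmax-gated mixture-of-experts setting. The starting point is the basic inequality obtained from the optimality of $\widehat{G}_n$ in \eqref{eq:least_squared_estimator}: plugging $G=G_*$ into the objective and using $Y_i = f_{G_*}(X_i)+\varepsilon_i$ yields
\begin{align*}
\frac{1}{n}\sum_{i=1}^n \bigl(f_{\widehat{G}_n}(X_i)-f_{G_*}(X_i)\bigr)^2 \;\leq\; \frac{2}{n}\sum_{i=1}^n \varepsilon_i\bigl(f_{\widehat{G}_n}(X_i)-f_{G_*}(X_i)\bigr).
\end{align*}
The left side is a squared empirical $L^2$-norm of $f_{\widehat{G}_n}-f_{G_*}$; the right side is a Gaussian multiplier process indexed by $\mathcal{F}_k(\Theta)$.

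First I would pass from empirical to population norms. Assumptions (A.1)--(A.2) imply that every $f\in\mathcal{F}_k(\Theta)$ is uniformly bounded and Lipschitz in its parameters over a compact set, so the class has finite bracketing entropy. A standard one-sided ratio inequality (van de Geer, Lemma 5.16) then shows that on an event of probability $1 - c\exp(-n\delta_n^2/c^2)$ the empirical and population $L^2$-norms are comparable uniformly on $\{f:\normf{f-f_{G_*}}\geq \delta_n\}$, which transfers the basic inequality to a population-norm inequality $\normf{f_{\widehat{G}_n}-f_{G_*}}^2 \lesssim |\langle \varepsilon, f_{\widehat{G}_n}-f_{G_*}\rangle_n|$.

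Next I would execute a peeling (slicing) argument. For fixed $\delta\geq \delta_n$, partition the range of $\normf{f-f_{G_*}}$ into shells $S_s := \{f\in\mathcal{F}_k(\Theta): 2^{s}\delta \leq \normf{f-f_{G_*}} \leq 2^{s+1}\delta\}$ for $s\geq 0$. If $f_{\widehat{G}_n}\in S_s$, the transferred basic inequality forces the multiplier supremum
\begin{align*}
\sup_{f\in\mathcal{F}_k(\Theta,2^{s+1}\delta)} \Bigl|\frac{1}{n}\sum_{i=1}^n \varepsilon_i\bigl(f(X_i)-f_{G_*}(X_i)\bigr)\Bigr|
\end{align*}
to exceed a constant multiple of $(2^s\delta)^2$. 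Conditionally on $X_{1:n}$, this supremum is a sub-Gaussian process since $\varepsilon_i$ is Gaussian and the increments are uniformly bounded, so a bracketing chaining bound in the spirit of Dudley's entropy integral gives
\begin{align*}
\mathbb{P}\Bigl(\sup_{f\in\mathcal{F}_k(\Theta,t)}\bigl|\tfrac{1}{n}\sum_{i=1}^n\varepsilon_i(f-f_{G_*})(X_i)\bigr|\geq a\Bigr) \leq c\exp\bigl(-na^2/c^2\bigr),
\end{align*}
whenever $a\gtrsim \mathcal{J}_B(t,\mathcal{F}_k(\Theta,t))/\sqrt{n}$. Applied at $t=2^{s+1}\delta$ and $a\asymp (2^s\delta)^2$, the defining relation $\sqrt{n}\delta_n^2\geq c\Psi(\delta_n)$, together with the non-increasing property of $\delta\mapsto \Psi(\delta)/\delta^2$ (which guarantees $\Psi(2^{s+1}\delta)/(2^{s+1}\delta)^2\leq \Psi(\delta_n)/\delta_n^2$), ensures the threshold condition holds on every shell. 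A union bound over $s\geq 0$ then collects a geometric series in $\exp(-c^{-2}n(2^s\delta)^2)$, which sums to a single term of order $\exp(-n\delta^2/c^2)$.

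The hard part will be the book-keeping in the peeling step: verifying that the monotonicity hypothesis on $\Psi(\delta)/\delta^2$ indeed makes the chaining threshold hold uniformly across all shells, and that the empirical-to-population comparison in the first step survives the union bound over shells without inflating the constant $c$. Once these points are handled, absorbing all universal constants yields the stated inequality in the form $\mathbb{P}(\normf{f_{\widehat{G}_n}-f_{G_*}}>\delta)\leq c\exp(-n\delta^2/c^2)$ for all $\delta\geq \delta_n$.
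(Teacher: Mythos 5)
Your proposal is correct and follows essentially the same route as the paper, which does not write out a proof of this lemma at all but simply invokes Theorems 7.4 and 9.2 of van de Geer (2000); your sketch (basic inequality, empirical-to-population norm comparison, peeling, bracketing chaining, union bound over shells) is exactly the argument underlying those cited theorems. The only slip is that your displayed tail bound for the multiplier supremum over $\mathcal{F}_k(\Theta,t)$ should read $c\exp\left(-na^2/(c^2t^2)\right)$, with the radius $t$ normalizing the exponent, but since you apply it with $a\asymp t^2\asymp(2^s\delta)^2$, your concluding geometric series $\sum_{s\geq 0}\exp(-c^{-2}n(2^s\delta)^2)$ is the correct one and the final bound follows.
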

We now demonstrate that when the expert functions are Lipschitz continuous, the following bound holds:
\begin{align}    
H_B(\varepsilon,\mathcal{F}_k(\Theta),\|.\|_{L^{2}(\mu)}) \lesssim \log(1/\varepsilon), \label{eq:bracket_entropy_bound}
\end{align}
for any $0 < \varepsilon \leq 1/2$. Indeed, for any function $f_{G} \in \mathcal{F}_k(\Theta)$, since the expert functions are bounded, we obtain that $f_{G}(x) \leq M$ for all $x$ where $M$ is bounded constant of the expert functions. Let $\tau\leq\varepsilon$ and $\{\pi_1,\ldots,\pi_N\}$ be the $\tau$-cover under the $L^{2}$ norm of the set $\mathcal{F}_k(\Theta)$ where $N:={N}(\tau,\mathcal{F}_k(\Theta),\|\cdot\|_{L^{2}(\mu)})$ is the $\eta$-covering number of the metric space $(\mathcal{F}_k(\Theta),\|\cdot\|_{L^{2}(\mu)})$. Then, we construct the brackets of the form $[L_i(x),U_i(x)]$ for all $i\in[N]$ as follows:
    \begin{align*}
        L_i(x)&:=\max\{\pi_i(x)-\tau,0\},\\
        U_i(x)&:=\max\{\pi_i(x)+\tau, M \}.
    \end{align*}
From the above construction, we can validate that $\mathcal{F}_{k}(\Theta)\subset\cup_{i=1}^{N}[L_i(x),U_i(x)]$ and $U_i(x)-L_i(x)\leq 2\min\{2\tau,M\}$. Therefore, it follows that 
\begin{align*}
    \normf{U_i-L_i}^2=\int(U_i-L_i)^2\dint\mu(x)\leq\int 16\tau^2\dint\mu(x)=16\tau^2,
\end{align*}
which implies that $\normf{U_i-L_i}\leq 4\tau$. By definition of the bracketing entropy, we deduce that
\begin{align}
    \label{eq:bracketing_covering}
    H_B(4\tau,\mathcal{F}_{k}(\Theta),\normf{\cdot})\leq\log N=\log {N}(\tau,\mathcal{F}_k(\Theta),\|\cdot\|_{L^{2}(\mu)}).
\end{align}
Therefore, we need to provide an upper bound for the covering number $N$. In particular, we denote $\Delta:=\{(\beta_0,\beta_1)\in\mathbb{R}\times\mathbb{R}^d:(\beta_0,\beta_1,\eta)\in\Theta\}$ and $\Omega:=\{\eta\in\mathbb{R}^q:(\beta_{0},\beta_{1},\eta)\in\Theta\}$. Since $\Theta$ is a compact set, $\Delta$ and $\Omega$ are also compact. Therefore, we can find $\tau$-covers $\Delta_{\tau}$ and ${\Omega}_{\tau}$ for $\Delta$ and $\Omega$, respectively. We can check that 
\begin{align*}
    |\Delta_{\tau}|\leq \mathcal{O}_{P}(\tau^{-(d+1)k}), \quad |\Omega_{\tau}|\lesssim \mathcal{O}_{P}(\tau^{-qk}).
\end{align*}
For each mixing measure $G=\sum_{i=1}^{k}\exp(\beta_{0i})\delta_{(\beta_{1i},\eta_i)}\in\mathcal{G}_k(\Theta)$, we consider other two mixing measures:
\begin{align*}
    \widetilde{G}:=\sum_{i=1}^k\exp(\beta_{0i})\delta_{({\beta}_{1i},\overline{\eta}_i)}, \qquad \overline{G}:=\sum_{i=1}^k\exp(\overline{\beta}_{0i})\delta_{({\overline{\beta}}_{1i},\overline{\eta}_i)}.
\end{align*}
Here, $\overline{\eta}_i\in{\Omega}_{\tau}$ such that $\overline{\eta}_i$ is the closest to $\eta_i$ in that set, while $(\overline{\beta}_{0i},\overline{\beta}_{1i})\in\Delta_{\tau}$ is the closest to $(\beta_{0i},\beta_{1i})$ in that set. From the above formulations, we get that
\begin{align*}
    \normf{f_{G}-f_{\widetilde{G}}}^2&=\int\Bigg[\sum_{i=1}^{k}\frac{\exp((\beta_{1i})^{\top}x+\beta_{0i})}{\sum_{j=1}^{k}\exp((\beta_{1j})^{\top}x+\beta_{0j})}\cdot[h(x,\eta_i)-h(x,\overline{\eta}_i)\Bigg]^2~\dint\mu(x)\\
    &\leq k\int\sum_{i=1}^{k}\Bigg[\frac{\exp((\beta_{1i})^{\top}x+\beta_{0i})}{\sum_{j=1}^{k}\exp((\beta_{1j})^{\top}x+\beta_{0j})}\cdot[h(x,\eta_i)-h(x,\overline{\eta}_i)\Bigg]^2~\dint\mu(x)\\
    &\leq k\int\sum_{i=1}^{k}~[h(x,\eta_i)-h(x,\overline{\eta}_i)]^2~\dint\mu(x)\\
    &\leq k\int\sum_{i=1}^{k}~[L_1\cdot\|\eta_i-\overline{\eta}_i\|]^2~\dint\mu(x)\\
    &\leq k^2(L_1\tau)^2,
\end{align*}
which indicates that $\normf{f_{G}-f_{\widetilde{G}}}\leq L_1k\tau$. Here, the second inequality is according to the Cauchy-Schwarz inequality, the third inequality occurs as the softmax weight is bounded by 1, and the fourth inequality follows from the fact that the expert $h(x,\cdot)$ is a Lipschitz function with Lipschitz constant $L_1$. Next, we have
\begin{align*}
    \normf{f_{\widetilde{G}}-f_{\overline{G}}}^2&\leq k\int\sum_{i=1}^{k}\Bigg[\Bigg(\frac{\exp((\beta_{1i})^{\top}x+\beta_{0i})}{\sum_{j=1}^{k}\exp((\beta_{1j})^{\top}x+\beta_{0j})}-\frac{\exp((\overline{\beta}_{1i})^{\top}x+\overline{\beta}_{0i})}{\sum_{j=1}^{k}\exp((\overline{\beta}_{1j})^{\top}x+\overline{\beta}_{0j})}\Bigg)\cdot h(x,\overline{\eta}_i)\Bigg]^2~\dint\mu(x)\\
    &\leq kM^2L^2\int\sum_{i=1}^{k}\Big[\|\beta_{1i}-\overline{\beta}_{1i}\|\cdot\|x\|+|\beta_{0i}-\overline{\beta}_{0i}|\Big]^2\dint\mu(x)\\
    &\leq kM^2L^2\int\sum_{i=1}^{k}(\tau\cdot B+\tau)^2\dint\mu(x)\\
    &\leq [kML\tau(B+1)]^2,
\end{align*}
where $L\geq 0$ is a Lipschitz constant of the softmax weight. This result implies that $ \normf{f_{\widetilde{G}}-f_{\overline{G}}}\leq kML(B+1)\tau$. According to the triangle inequality, we have
\begin{align*}
    \normf{f_{G}-f_{\overline{G}}}\leq \normf{f_{G}-f_{\widetilde{G}}}+\normf{f_{\widetilde{G}}-f_{\overline{G}}}\leq [L_1k+kML(B+1)]\cdot\tau.
\end{align*}
By definition of the covering number, we deduce that
\begin{align}
    \label{eq:covering_bound}
    {N}(\tau,\mathcal{F}_k(\Theta),\normf{\cdot})\leq |\Delta_{\tau}|\times|\Omega_{\tau}|\leq \mathcal{O}_{P}(n^{-(d+1)k})\times\mathcal{O}(n^{-qk})\leq\mathcal{O}(n^{-(d+1+q)k}).
\end{align}
Combine equations~\eqref{eq:bracketing_covering} and \eqref{eq:covering_bound}, we achieve that
\begin{align*}
    H_B(4\tau,\mathcal{F}_{k}(\Theta),\normf{\cdot})\lesssim \log(1/\tau).
\end{align*}
Let $\tau=\varepsilon/4$, then we obtain that 
\begin{align*}
    H_B(\varepsilon,\mathcal{F}_k(\Theta),\|.\|_{L^{2}(\mu)}) \lesssim \log(1/\varepsilon).
\end{align*}
As a result, it follows that 
\begin{align}
    \label{eq:bracketing_integral}
    \mathcal{J}_B(\delta, \mathcal{F}_k(\Theta,\delta))= \int_{\delta^2/2^{13}}^{\delta}H_B^{1/2}(t, \mathcal{F}_k(\Theta,t),\normf{\cdot})~\dint t\vee \delta\lesssim \int_{\delta^2/2^{13}}^{\delta}\log(1/t)dt\vee\delta.
\end{align}
Let $\Psi(\delta)=\delta\cdot[\log(1/\delta)]^{1/2}$, then $\Psi(\delta)/\delta^2$ is a non-increasing function of $\delta$. Furthermore, equation~\eqref{eq:bracketing_integral} indicates that $\Psi(\delta)\geq \mathcal{J}_B(\delta,\mathcal{F}_k(\Theta,\delta))$. In addition, let $\delta_n=\sqrt{\log(n)/n}$, then we get that $\sqrt{n}\delta^2_n\geq c\Psi(\delta_n)$ for some universal constant $c$. Finally, by applying Lemma~\ref{lemma:density_rate}, we achieve the desired conclusion of the theorem.

\subsection{Proof of Theorem~\ref{theorem:general_experts}}
\label{appendix:general_experts}
In this proof, we aim to establish the following inequality:
\begin{align}
    \label{eq:general_universal_inequality}
    \inf_{G\in\mathcal{G}_{k}(\Theta)}\normf{f_{G}-f_{G_*}}/\mathcal{D}_1(G,G_*)>0.
\end{align}
For that purpose, we divide the proof of the above inequality into local and global parts in the sequel.

\textbf{Local part:} In this part, we demonstrate that
\begin{align}
    \label{eq:general_local_inequality}
    \lim_{\varepsilon\to0}\inf_{G\in\mathcal{G}_{k}(\Theta):\mathcal{D}_1(G,G_*)\leq\varepsilon}\normf{f_{G}-f_{G_*}}/\mathcal{D}_1(G,G_*)>0.
\end{align}
Assume by contrary that the above inequality does not hold true, then there exists a sequence of mixing measures $G_n=\sum_{i=1}^{k_*}\exp(\beta^n_{0i})\delta_{(\beta^n_{1i},\eta^n_i)}$ in $\mathcal{G}_{k}(\Theta)$ such that $\mathcal{D}_{1n}:=\mathcal{D}_1(G_n,G_*)\to0$ and
\begin{align}
    \label{eq:general_ratio_limit}
    \normf{f_{G_n}-f_{G_*}}/\mathcal{D}_{1n}\to0,
\end{align}
as $n\to\infty$. Let us denote by $\mathcal{A}^n_j:=\mathcal{A}_j(G_n)$ a Voronoi cell of $G_n$ generated by the $j$-th components of $G_*$. Since our arguments are asymptotic, we may assume that those Voronoi cells do not depend on the sample size, i.e. $\mathcal{A}_j=\mathcal{A}^n_j$. Thus, the Voronoi loss $\mathcal{D}_{1n}$ can be represented as
\begin{align}
    \label{eq:general_loss_proof}
   \mathcal{D}_{1n}&:=\sum_{j=1}^{k_*}\Big|\sum_{i\in\mathcal{A}_j}\exp(\bzin)-\exp(\beta^*_{0j})\Big|+\sum_{j:|\mathcal{A}_j|>1}\sum_{i\in\mathcal{A}_j}\exp(\bzin)\Big[\|\dboijn\|^2+\|\deijn\|^2\Big]\nonumber\\
    &\hspace{6cm}+\sum_{j:|\mathcal{A}_j|=1}\sum_{i\in\mathcal{A}_j}\exp(\beta^n_{0i})\Big[\|\dboijn\|+\|\deijn\|\Big],
\end{align}
where we denote $\dboijn:=\beta^n_{1i}-\beta^*_{1j}$ and $\deijn:=\eta^n_i-\eta^*_j$.

Since $\mathcal{D}_{1n}\to0$, we get that $(\boin,\ein)\to(\boj,\ej)$ and $\sum_{i\in\mathcal{A}_j}\exp(\bzin)\to\exp(\bzj)$ as $n\to\infty$ for any $i\in\mathcal{A}_j$ and $j\in[k_*]$. Now, we divide the proof of local part into three steps as follows:

\textbf{Step 1.} In this step, we decompose the term $Q_n(x):=[\sum_{j=1}^{k_*}\exp((\beta^*_{1j})^{\top}x+\beta^*_{0j})]\cdot[f_{G_n}(x)-f_{G_*}(x)]$ into a combination of linearly independent elements using Taylor expansion. In particular, let us denote $F(x;\beta_1,\eta):=\exp(\beta_1^{\top}x)h(x,\eta)$ and $H(x;\beta_1)=\exp(\beta_1^{\top}x)f_{G_n}(x)$, then we have
\begin{align}
    \label{eq:general_Q_n}
    Q_n(x)&=\sum_{j=1}^{k_*}\sum_{i\in\mathcal{A}_j}\exp(\beta^n_{0i})\Big[F(x;\boin,\ein)-F(x;\boj,\ej)\Big]\nonumber\\
    &-\sum_{j=1}^{k_*}\sum_{i\in\mathcal{A}_j}\exp(\beta^n_{0i})\Big[H(x;\boin)-H(x;\boj)\Big]\nonumber\\
    &+\sum_{j=1}^{k_*}\Big(\sum_{i\in\mathcal{A}_j}\exp(\bzin)-\exp(\bzj)\Big)\Big[F(x;\boj,\ej)-H(x;\boj)\Big]\nonumber\\
    &:=A_n(x)-B_n(x)+E_{n}(x).
\end{align}
\textbf{Decomposition of $A_n$.} Next, we continue to separate the term $A_n$ into two parts as follows:
\begin{align*}
    A_n(x)&:=\sum_{j:|\mathcal{A}_j|=1}\sum_{i\in\mathcal{A}_j}\exp(\beta^n_{0i})\Big[F(x;\boin,\ein)-F(x;\boj,\ej)\Big]\\
    &+\sum_{j:|\mathcal{A}_j|>1}\sum_{i\in\mathcal{A}_j}\exp(\beta^n_{0i})\Big[F(x;\boin,\ein)-F(x;\boj,\ej)\Big]\\
    &:=A_{n,1}(x)+A_{n,2}(x).
\end{align*}
By means of the first-order Taylor expansion, we have
\begin{align*}
    A_{n,1}(x)=\sum_{j:|\mathcal{A}_j|=1}\sum_{i\in\mathcal{A}_j}\exp(\beta^n_{0i})\sum_{|\alpha|=1}(\dboijn)^{\alpha_1}(\deijn)^{\alpha_2}\cdot\frac{\partial F}{\partial\beta_1^{\alpha_1}\partial \eta^{\alpha_2}}(x;\boj,\ej)+R_1(x),
\end{align*}
where $R_1(x)$ is a Taylor remainder such that $R_1(x)/\mathcal{D}_{1n}\to0$ as $n\to\infty$. By taking the first derivatives of $F$ w.r.t its parameters, we get
\begin{align*}
    \frac{\partial F}{\partial\beta_1}(x;\boj,\ej)&=x\exp((\boj)^{\top}x)h(x,\ej)=x\cdot F(x;\boj,\ej),\\
    \frac{\partial F}{\partial \eta}(x;\boj,\ej)&=\exp((\boj)^{\top}x)\cdot\frac{\partial h}{\partial\eta}(x,\ej):=F_1(x;\boj,\ej).
\end{align*}
Thus, we can rewrite $A_{n,1}(x)$ as
\begin{align}
    \label{eq:general_A_n_1}
    A_{n,1}(x)&=\sum_{j:|\mathcal{A}_j|=1}C_{n,1,j}(x)+R_1(x),
\end{align}
where
\begin{align*}
    C_{n,1,j}(x)=\sum_{i\in\mathcal{A}_j}\exp(\beta^n_{0i})\Big[(\dboijn)^{\top}x\cdot F(x;\boj,\ej)+(\deijn)^{\top} F_1(x;\boj,\ej)\Big].
\end{align*}
Next, by applying the second-order Taylor expansion, $A_{n,2}(x)$ can be represented as
\begin{align*}
    A_{n,2}(x)=\sum_{j:|\mathcal{A}_j|>1}\sum_{i\in\mathcal{A}_j}\exp(\beta^n_{0i})\sum_{|\alpha|=1}^{2}\frac{1}{\alpha!}(\dboijn)^{\alpha_1}(\deijn)^{\alpha_2}\cdot\frac{\partial^{|\alpha_1|+|\alpha_2|} F}{\partial\beta_1^{\alpha_1}\partial \eta^{\alpha_2}}(x;\boj,\ej)+R_2(x),
\end{align*}
where $R_2(x)$ is a Taylor remainder such that $R_2(x)/\mathcal{D}_{1n}\to0$ as $n\to\infty$. The second derivatives of $F$ w.r.t its parameters are given by
\begin{align*}
    \frac{\partial^2F}{\partial\beta\partial\beta^{\top}}(x;\boj,\ej)&=xx^{\top}\cdot F(x;\boj,\ej),\quad \frac{\partial^2F}{\partial\beta\partial\eta^{\top}}(x;\boj,\ej)=x\cdot [F_1(x;\boj,\ej)]^{\top},\\
    \frac{\partial^2F}{\partial\eta\partial\eta^{\top}}(x;\boj,\ej)&=\exp((\boj)^{\top}x)\cdot\frac{\partial^2 h}{\partial\eta\partial\eta^{\top}}:=F_2(x;\boj,\ej)
\end{align*}
Therefore, the term $A_{n,2}(x)$ becomes
\begin{align}
     \label{eq:general_A_n_2}
    A_{n,2}(x)=\sum_{j:|\mathcal{A}_j|>1}[C_{n,1,j}(x)+C_{n,2,j}(x)]+R_2(x),
\end{align}
where 
\begin{align*}
    &C_{n,2,j}(x):=\sum_{i\in\mathcal{A}_j}\exp(\bzin)\Bigg\{\Big[x^{\top}\Big(M_{d}\odot (\dboijn)(\dboijn)^{\top}\Big)x\Big]\cdot F(x;\boj,\aj,\bj)\\
    &+\Big[x^{\top}(\dboijn)(\deijn)^{\top}F_1(x;\boj,\ej)\Big]+\Big[(\deijn)^{\top}\Big(M_{d}\odot F_2(x;\boj,\ej)\Big)(\deijn)\Big]\Bigg\},
\end{align*}
with $M_d$ being an $d\times d$ matrix whose diagonal entries are $\frac{1}{2}$ while other entries are 1.

\textbf{Decomposition of $B_n$.} Subsequently, we also divide $B_n$ into two terms based on the Voronoi cells as

\begin{align*}
    B_n(x)&=\sum_{j:|\mathcal{A}_j|=1}\sum_{i\in\mathcal{A}_j}\exp(\beta^n_{0i})\Big[H(x;\boin)-H(x;\boj)\Big]\\
    &+\sum_{j:|\mathcal{A}_j|>1}\sum_{i\in\mathcal{A}_j}\exp(\beta^n_{0i})\Big[H(x;\boin)-H(x;\boj)\Big]\\
    &:=B_{n,1}(x)+B_{n,2}(x).
\end{align*}
By means of the first-order Taylor expansion, we have
\begin{align}
    \label{eq:general_B_n_1}
    B_{n,1}(x)=\sum_{j:|\mathcal{A}_j|=1}\sum_{i\in\mathcal{A}_j}\exp(\bzin)(\dboijn)^{\top}x\cdot H(x;\boj)+R_3(x),
\end{align}
where $R_3(x)$ is a Taylor remainder such that $R_3(x)/\mathcal{D}_{1n}\to0$ as $n\to\infty$. Meanwhile, by applying the second-order Taylor expansion, we get
\begin{align}
      \label{eq:general_B_n_2}
    B_{n,2}(x)=\sum_{j:|\mathcal{A}_j|>1}\sum_{i\in\mathcal{A}_j}\exp(\bzin)\Big[(\dboijn)^{\top}x+(\dboijn)^{\top}\Big(M_{d}\odot xx^{\top}\Big)(\dboijn)\Big]\cdot H(x;\boj) + R_4(x),
\end{align}
where $R_4(x)$ is a Taylor remainder such that $R_4(x)/\mathcal{D}_{1n}\to0$ as $n\to\infty$.

Putting the above results together, we see that $[A_n(x)-R_1(x)-R_2(x)]/\mathcal{D}_{1n}$, $[B_n(x)-R_3(x)-R_4(x)]/\mathcal{D}_{1n}$ and $E_n(x)/\mathcal{D}_{1n}$ can be written as a combination of elements from the following set
\begin{align*}
    &\Big\{F(x;\boj,\ej), \ x^{(u)}F(x;\boj,\ej), \ x^{(u)}x^{(v)}F(x;\boj,\ej):u,v\in[d], \ j\in[k_*]\Big\},\\
    \cup&~\Big\{[F_1(x;\boj,\ej)]^{(u)}, \ x^{(u)}[F_1(x;\boj,\ej)]^{(v)}:u,v\in[d], \ j\in[k_*]\Big\},\\
    \cup&~\Big\{[F_2(x;\boj,\ej)]^{(uv)}:u,v\in[d], \ j\in[k_*]\Big\},\\
    \cup&~\Big\{H(x;\boj), \ x^{(u)}H(x;\boj), \ x^{(u)}x^{(v)}H(x;\boj):u,v\in[d], \ j\in[k_*]\Big\}.
\end{align*}
\textbf{Step 2.} In this step, we prove by contradiction that at least one among coefficients in the representations of $[A_n-R_1(x)-R_2(x)]/\mathcal{D}_{2n}$, $[B_n-R_3(x)-R_4(x)]/\mathcal{D}_{2n}$ and $E_n(x)/\mathcal{D}_{2n}$ does not go to zero as $n$ tends to infinity. Indeed, assume that all of them converge to zero. Then, by considering the coefficients of 
\begin{itemize}
    \item $F(x;\boj,\ej)$ for $j\in[k_*]$, we get that $\frac{1}{\mathcal{D}_{1n}}\cdot\sum_{j=1}^{k_*}\Big|\sum_{i\in\mathcal{A}_j}\exp(\bzin)-\exp(\bzj)\Big|\to0$;
    \item $x^{(u)}F(x;\boj,\ej)$ for $u\in[d]$ and $j:|\mathcal{A}_j|=1$, we get that $\frac{1}{\mathcal{D}_{1n}}\cdot\sum_{j:|\mathcal{A}_j|=1}\sum_{i\in\mathcal{A}_j}\exp(\bzin)\|\dboijn\|_1\to0$;
    \item $[F_1(x;\boj,\ej)]^{(u)}$ for $u\in[d]$ and $j:|\mathcal{A}_j|=1$, we get that $\frac{1}{\mathcal{D}_{1n}}\cdot\sum_{j:|\mathcal{A}_j|=1}\sum_{i\in\mathcal{A}_j}\exp(\bzin)\|\deijn\|_1\to0$;
    \item $[x^{(u)}]^2F(x;\boj,\ej)$ for $u\in[d]$ and $j:|\mathcal{A}_j|>1$, we get that $\frac{1}{\mathcal{D}_{1n}}\cdot\sum_{j:|\mathcal{A}_j|>1}\sum_{i\in\mathcal{A}_j}\exp(\bzin)\|\dboijn\|^2\to0$;
    \item $[F_2(x;\boj,\ej)]^{(uu)}$ for $u\in[d]$ and $j:|\mathcal{A}_j|>1$, we get that $\frac{1}{\mathcal{D}_{1n}}\cdot\sum_{j:|\mathcal{A}_j|>1}\sum_{i\in\mathcal{A}_j}\exp(\bzin)\|\deijn\|^2\to0$;
\end{itemize}
By taking the summation of the above limits, we obtain that $1=\mathcal{D}_{1n}/\mathcal{D}_{1n}\to0$ as $n\to\infty$, which is a contradiction. Therefore, not all the coefficients in the representations of $[A_n(x)-R_1(x)-R_2(x)]/\mathcal{D}_{1n}$, $[B_n(x)-R_3(x)-R_4(x)]/\mathcal{D}_{1n}$ and $E_n(x)/\mathcal{D}_{1n}$ go to zero.

\textbf{Step 3.} In this step, we point out a contradiction following from the result in Step 2. Let us denote by $m_n$ the maximum of the absolute values of the coefficients in the representations of $[A_n(x)-R_1(x)-R_2(x)]/\mathcal{D}_{1n}$, $[B_n(x)-R_3(x)-R_4(x)]/\mathcal{D}_{1n}$ and $E_n(x)/\mathcal{D}_{1n}$. Since at least one among those coefficients does not approach zero, we obtain that $1/m_n\not\to\infty$. 

Recall the hypothesis in equation~\eqref{eq:general_ratio_limit} that $\normf{f_{G_n}-f_{G_*}}/\mathcal{D}_{1n}\to0$ as $n\to\infty$, which indicates that $\|f_{G_n}-f_{G_*}\|_{L^1(\mu)}/\mathcal{D}_{1n}\to0$. By means of the Fatou's lemma, we have
\begin{align*}
    0=\lim_{n\to\infty}\frac{\|f_{G_n}-f_{G_*}\|_{L^1(\mu)}}{m_n\mathcal{D}_{1n}}\geq \int \liminf_{n\to\infty}\frac{|f_{G_n}(x)-f_{G_*}(x)|}{m_n\mathcal{D}_{1n}}\dint\mu(x)\geq 0.
\end{align*}
This result implies that $[f_{G_n}(x)-f_{G_*}(x)]/[m_n\mathcal{D}_{1n}]\to0$ for almost every $x$. Since the term $\sum_{j=1}^{k_*}\exp((\beta^*_{1j})^{\top}x+\beta^*_{0j})$ is bounded, we deduce that $Q_n(x)/[m_n\mathcal{D}_{1n}]\to0$, or equivalently, 
\begin{align}
    \label{eq:general_zero_limit}
    \frac{1}{m_n\mathcal{D}_{1n}}\cdot\Big[(A_{n,1}(x)-R_1(x)+A_{n,2}(x)-R_2(x))-(B_{n,1}(x)-R_3(x)+B_{n,2}(x)-R_4(x))+E_n(x)\Big]\to0.
\end{align}
Let us denote
\begin{align*}
    \frac{1}{m_n\mathcal{D}_{1n}}\cdot\sum_{i\in\mathcal{A}_j}\exp(\bzin)(\dboijn)\to\phi_{1,j}&,\quad  \frac{1}{m_n\mathcal{D}_{1n}}\cdot\sum_{i\in\mathcal{A}_j}\exp(\bzin)(\dboijn)(\dboijn)^{\top}\to\phi_{2,j},\\
    \frac{1}{m_n\mathcal{D}_{1n}}\cdot\sum_{i\in\mathcal{A}_j}\exp(\bzin)(\deijn)\to\varphi_{1,j}&,\quad \frac{1}{m_n\mathcal{D}_{1n}}\cdot\sum_{i\in\mathcal{A}_j}\exp(\bzin)(\deijn)(\deijn)^{\top}\to\varphi_{2,j},\\
    \frac{1}{m_n\mathcal{D}_{1n}}\cdot\sum_{i\in\mathcal{A}_j}\exp(\bzin)(\dboijn)(\deijn)^{\top}\to\zeta_{j}&, \quad \frac{1}{m_n\mathcal{D}_{1n}}\cdot\Big(\sum_{i\in\mathcal{A}_j}\exp(\bzin)-\exp(\bzj)\Big)\to\xi_j&.
\end{align*}
Here, at least one among $\phi^{(u)}_{1,j}$, $\phi^{(uu)}_{2,j}$, $\varphi^{(u)}_{1,j}$, $\varphi^{(uu)}_{2,j}$ and $\xi_j$, for $j\in[k_*]$, is different from zero, which results from Step 2. Additionally, let us denote $F_{\tau j}:=F_{\tau}(x;\boj,\ej)$ and $H_j=H(x;\boj)$ for short, then from the formulation of 
\begin{itemize}
    \item $A_{n,1}$ in equation~\eqref{eq:general_A_n_1}, we get 
    \begin{align}
        \label{eq:general_limit_1}
        \frac{A_{n,1}-R_1(x)}{m_n\mathcal{D}_{1n}}\to\sum_{j:|\mathcal{A}_j|=1}\Big[\phi^{\top}_{1,j}x\cdot F_j+\varphi_{1,j}^{\top}F_{1j}\Big].
    \end{align}
    \item $A_{n,2}$ in equation~\eqref{eq:general_A_n_2}, we get
    \begin{align}
    \label{eq:general_limit_2}
        &\frac{A_{n,2}-R_2(x)}{m_n\mathcal{D}_{2n}}\to\sum_{j:|\mathcal{A}_j|>1}\Bigg\{\Big[\phi^{\top}_{1,j}x+x^{\top}\Big(M_{d}\odot \phi_{2,j}\Big)x\Big]\cdot F_j+[\varphi_{1,j}^{\top}+x^{\top}\zeta_{j}]\cdot F_{1j}+\Big[M_{d}\odot \varphi_{2,j}\Big]\odot F_{2j}\Bigg\}.
    \end{align}
    \item $B_{n,1}$ in equation~\eqref{eq:general_B_n_1}, we get
    \begin{align}
        \label{eq:general_limit_3}
        \frac{B_{n,1}-R_3(x)}{m_n\mathcal{D}_{2n}}\to\sum_{j:|\mathcal{A}_j|=1}[\phi^{\top}_{1,j}x\cdot H_j].
    \end{align}
    \item $B_{n,2}$ in equation~\eqref{eq:general_B_n_2}, we get
    \begin{align}
        \label{eq:general_limit_4}
        \frac{B_{n,2}-R_4(x)}{m_n\mathcal{D}_{2n}}\to\sum_{j:|\mathcal{A}_j|>1}\Big[\phi^{\top}_{1,j}x+x^{\top}\Big(M_{d}\odot \phi_{2,j}\Big)x\Big]\cdot H_j.
    \end{align}
    \item $E_n(x)$ in equation~\eqref{eq:general_Q_n}, we get
    \begin{align}
        \label{eq:general_limit_5}
        \frac{E_n(x)}{m_n\mathcal{D}_{2n}}\to\sum_{j=1}^{k_*}\xi_j[F_j-H_j].
    \end{align}
\end{itemize}
Due to the result in equation~\eqref{eq:general_zero_limit}, we deduce that the limits in equations~\eqref{eq:general_limit_1}, \eqref{eq:general_limit_2}, \eqref{eq:general_limit_3}, \eqref{eq:general_limit_4} and \eqref{eq:general_limit_5} sum up to zero. 

Now, we show that all the values of $\phi^{(u)}_{1,j}$, $\phi^{(uu)}_{2,j}$, $\varphi^{(u)}_{1,j}$, $\varphi^{(uu)}_{2,j}$ and $\xi_j$, for $j\in[k_*]$, are equal to zero. For that purpose, we first denote $J_1,J_2,\ldots,J_{\ell}$ as the partition of the set $\{\exp((\boj)^{\top}x):j\in[k_*]\}$ for some $\ell\leq k_*$ such that 
\begin{itemize}
    \item[(i)] $\beta^*_{1j}=\beta^*_{1j'}$ for any $j,j'\in J_i$ and $i\in[\ell]$;
    \item[(ii)] $\beta^*_{1j}\neq\beta^*_{1j'}$ when $j$ and $j'$ do not belong to the same set $J_i$ for any $i\in[\ell]$.
\end{itemize}
Then, the set $\{\exp((\beta^*_{1j_1})^{\top}x),\ldots,\exp((\beta^*_{1j_\ell})^{\top}x)\}$, where $j_i\in J_i$, is linearly independent. Since the limits in equations~\eqref{eq:general_limit_1}, \eqref{eq:general_limit_2}, \eqref{eq:general_limit_3}, \eqref{eq:general_limit_4} and \eqref{eq:general_limit_5} sum up to zero, we get for any $i\in[\ell]$ that
\begin{align*}
    &\sum_{j\in J_i:|\mathcal{A}_j|=1}\Big[(\xi_j+\phi^{\top}_{1,j}x)\cdot h_j+\varphi_{1,j}^{\top}h_{1j}\Big]+\sum_{j\in J_i:|\mathcal{A}_j|>1}\Bigg\{\Big[\phi^{\top}_{1,j}x+x^{\top}\Big(M_{d}\odot \phi_{2,j}\Big)x\Big]\cdot h_j\\
    &+[\varphi_{1,j}^{\top}+x^{\top}\zeta_{j}]\cdot h_{1j}+\Big[M_{d}\odot \varphi_{2,j}\Big]\odot h_{2j}\Bigg\}-\sum_{j\in J_i:|\mathcal{A}_j|=1}[(\phi^{\top}_{1,j}x+\xi_j)\cdot f_{G_*}(x)]\\
    &-\sum_{j\in J_i:|\mathcal{A}_j|>1}\Big[\xi_j+\phi^{\top}_{1,j}x+x^{\top}\Big(M_{d}\odot \phi_{2,j}\Big)x\Big]\cdot f_{G_*}(x)=0,
\end{align*}
where we denote $h_{j}:=h(x,\ej)$, $h_{1j}:=\frac{\partial h}{\partial \eta}(x,\ej)$ and $h_{2j}:=\frac{\partial^2 h}{\partial\eta\partial\eta^{\top}}(x,\ej)$. Recall that the expert function $h$ satisfies conditions in Definition~\ref{def:general_conditions}, then the following set is linearly independent
\begin{align*}
    \Big\{x^{\nu}\cdot\frac{\partial^{|\tau_1|+|\tau_2|}h}{\partial\eta^{\tau_1}\partial\eta^{\tau_2}}(x,\ej), \  x^{\nu}\cdot f_{G_*}(x):\nu\in\mathbb{N}^d, \ \tau_1,\tau_2\in\mathbb{N}^{q}, \ 0\leq |\nu|+|\tau_1|+|\tau_2|\leq 2, \ j\in[k_*]\Big\}.
\end{align*}
is linearly independent. Therefore, we obtain that $\xi_j=0$, $\phi_{1,j}=\varphi_{1,j}=\zerod$ and $\phi_{2,j}=\varphi_{2,j}=\zeta_{j}=\mathbf{0}_{d\times d}$ for any $j\in J_i$ and $i\in[\ell]$. In other words, those results hold true for any $j\in[k_*]$, which contradicts to the fact that at least one among $\phi^{(u)}_{1,j}$, $\phi^{(uu)}_{2,j}$, $\varphi^{(u)}_{1,j}$, $\varphi^{(uu)}_{2,j}$ and $\xi_j$, for $j\in[k_*]$, is different from zero. Thus, we achieve the inequality~\eqref{eq:general_local_inequality}, i.e.
\begin{align*}
    \lim_{\varepsilon\to0}\inf_{G\in\mathcal{G}_{k}(\Theta):\mathcal{D}_1(G,G_*)\leq\varepsilon}\normf{f_{G}-f_{G_*}}/\mathcal{D}_1(G,G_*)>0.
\end{align*}
As a consequence, there exists some $\varepsilon'>0$ such that
\begin{align*}
    \inf_{G\in\mathcal{G}_{k}(\Theta):\mathcal{D}_1(G,G_*)\leq\varepsilon'}\normf{f_{G}-f_{G_*}}/\mathcal{D}_1(G,G_*)>0.
\end{align*}
\textbf{Global part:} Given the above result, it suffices to demonstrate that 
\begin{align}
    \label{eq:general_global_inequality}
     \inf_{G\in\mathcal{G}_{k}(\Theta):\mathcal{D}_1(G,G_*)>\varepsilon'}\normf{f_{G}-f_{G_*}}/\mathcal{D}_1(G,G_*)>0.
\end{align}
Assume by contrary that the inequality~\eqref{eq:general_global_inequality} does not hold true, then we can find a sequence of mixing measures $G'_n\in\mathcal{G}_{k}(\Theta)$ such that $\mathcal{D}_1(G'_n,G_*)>\varepsilon'$ and
\begin{align*}
    \lim_{n\to\infty}\frac{\normf{f_{G'_n}-f_{G_*}}}{\mathcal{D}_1(G'_n,G_*)}=0,
\end{align*}
which indicates that $\normf{f_{G'_n}-f_{G_*}}\to0$ as $n\to\infty$. Recall that $\Theta$ is a compact set, therefore, we can replace the sequence $G'_n$ by one of its subsequences that converges to a mixing measure $G'\in\mathcal{G}_{k}(\Omega)$. Since $\mathcal{D}_1(G'_n,G_*)>\varepsilon'$, we deduce that $\mathcal{D}_1(G',G_*)>\varepsilon'$. 

Next, by invoking the Fatou's lemma, we have that
\begin{align*}
    0=\lim_{n\to\infty}\normf{f_{G'_n}-f_{G_*}}^2\geq \int\liminf_{n\to\infty}\Big|f_{G'_n}(x)-f_{G_*}(x)\Big|^2~\dint\mu(x).
\end{align*}
Thus, we get that $f_{G'}(x)=f_{G_*}(x)$ for almost every $x$. 
From Proposition~\ref{prop:general_identifiability}, we deduce that $G'\equiv G_*$. Consequently, it follows that $\mathcal{D}_1(G',G_*)=0$, contradicting the fact that $\mathcal{D}_1(G',G_*)>\varepsilon'>0$. 

Hence, the proof is completed.

\subsection{Proof of Theorem~\ref{theorem:activation_experts}}
\label{appendix:activation_experts}
In this proof, we focus on demonstrating the following inequality:
\begin{align}
    \label{eq:activation_universal_inequality}
    \inf_{G\in\mathcal{G}_{k}(\Theta)}\normf{f_{G}-f_{G_*}}/\mathcal{D}_2(G,G_*)>0.
\end{align}
To this end, we divide the proof of the above inequality into local and global parts in the sequel.

\textbf{Local part:} In this part, we show that
\begin{align}
    \label{eq:activation_local_inequality}
    \lim_{\varepsilon\to0}\inf_{G\in\mathcal{G}_{k}(\Theta):\mathcal{D}_2(G,G_*)\leq\varepsilon}\normf{f_{G}-f_{G_*}}/\mathcal{D}_2(G,G_*)>0.
\end{align}
Assume by contrary that the above inequality does not hold true, then there exists a sequence of mixing measures $G_n=\sum_{i=1}^{k_*}\exp(\beta^n_{0i})\delta_{(\beta^n_{1i},a^n_i,b^n_i)}$ in $\mathcal{G}_{k}(\Theta)$ such that $\mathcal{D}_{2n}:=\mathcal{D}_2(G_n,G_*)\to0$ and
\begin{align}
    \label{eq:activation_ratio_limit}
    \normf{f_{G_n}-f_{G_*}}/\mathcal{D}_{2n}\to0,
\end{align}
as $n\to\infty$. Let us denote by $\mathcal{A}^n_j:=\mathcal{A}_j(G_n)$ a Voronoi cell of $G_n$ generated by the $j$-th components of $G_*$. Since our arguments are assymptotic, we may assume that those Voronoi cells do not depend on the sample size, i.e. $\mathcal{A}_j=\mathcal{A}^n_j$. Thus, the Voronoi loss $\mathcal{D}_{2n}$ can be represented as
\begin{align}
    \label{eq:activation_loss_proof}
   \mathcal{D}_{2n}&:=\sum_{j=1}^{k_*}\Big|\sum_{i\in\mathcal{A}_j}\exp(\bzin)-\exp(\beta^*_{0j})\Big|+\sum_{j:|\mathcal{A}_j|>1}\sum_{i\in\mathcal{A}_j}\exp(\bzin)\Big[\|\dboijn\|^2+\|\daijn\|^2+|\dbijn|^2\Big]\nonumber\\
    &\hspace{6cm}+\sum_{j:|\mathcal{A}_j|=1}\sum_{i\in\mathcal{A}_j}\exp(\beta_{0i})\Big[\|\dboijn\|+\|\daijn\|+|\dbijn|\Big],
\end{align}
where we denote $\dboijn:=\beta^n_{1i}-\beta^*_{1j}$, $\daijn:=a^n_i-a^*_j$ and $\dbijn:=b^n_i-b^*_j$.

Since $\mathcal{D}_{2n}\to0$, we get that $(\boin,\ain,\bin)\to(\boj,\aj,\bj)$ and $\exp(\bzin)\to\exp(\bzj)$ as $n\to\infty$ for any $i\in\mathcal{A}_j$ and $j\in[k_*]$. Now, we divide the proof of local part into three steps as follows:

\textbf{Step 1.} In this step, we decompose the term $Q_n(x):=[\sum_{j=1}^{k_*}\exp((\beta^*_{1j})^{\top}x+\beta^*_{0j})]\cdot[f_{G_n}(x)-f_{G_*}(x)]$ into a combination of linearly independent elements using Taylor expansion. In particular, let us denote $F(x;\beta_1,a,b):=\exp(\beta_1^{\top}x)\sigma(a^{\top}x+b)$ and $H(x;\beta_1)=\exp(\beta_1^{\top}x)f_{G_n}(x)$, then we have
\begin{align}
    \label{eq:activation_Q_n}
    Q_n(x)&=\sum_{j=1}^{k_*}\sum_{i\in\mathcal{A}_j}\exp(\beta^n_{0i})\Big[F(x;\boin,\ain,\bin)-F(x;\boj,\aj,\bj)\Big]\nonumber\\
    &-\sum_{j=1}^{k_*}\sum_{i\in\mathcal{A}_j}\exp(\beta^n_{0i})\Big[H(x;\boin)-H(x;\boj)\Big]\nonumber\\
    &+\sum_{j=1}^{k_*}\Big(\sum_{i\in\mathcal{A}_j}\exp(\bzin)-\exp(\bzj)\Big)\Big[F(x;\boj,\aj,\bj)-H(x;\boj)\Big]\nonumber\\
    &:=A_n(x)-B_n(x)+E_n(x).
\end{align}
\textbf{Decomposition of $A_n(x)$.} Next, we continue to separate the term $A_n(x)$ into two parts as follows:
\begin{align*}
    A_n(x)&:=\sum_{j:|\mathcal{A}_j|=1}\sum_{i\in\mathcal{A}_j}\exp(\beta^n_{0i})\Big[F(x;\boin,\ain,\bin)-F(x;\boj,\aj,\bj)\Big]\\
    &+\sum_{j:|\mathcal{A}_j|>1}\sum_{i\in\mathcal{A}_j}\exp(\beta^n_{0i})\Big[F(x;\boin,\ain,\bin)-F(x;\boj,\aj,\bj)\Big]\\
    &:=A_{n,1}+A_{n,2}.
\end{align*}
By means of the first-order Taylor expansion, we have
\begin{align*}
    A_{n,1}=\sum_{j:|\mathcal{A}_j|=1}\sum_{i\in\mathcal{A}_j}\exp(\beta^n_{0i})\sum_{|\alpha|=1}(\dboijn)^{\alpha_1}(\daijn)^{\alpha_2}(\dbijn)^{\alpha_3}\cdot\frac{\partial F}{\partial\beta_1^{\alpha_1}\partial a^{\alpha_2}\partial b^{\alpha_3}}(x;\boj,\aj,\bj)+R_1(x),
\end{align*}
where $R_1(x)$ is a Taylor remainder such that $R_1(x)/\mathcal{D}_{2n}\to0$ as $n\to\infty$. By taking the first derivatives of $F$ w.r.t its parameters, we get
\begin{align*}
    \frac{\partial F}{\partial\beta_1}(x;\boj,\aj,\bj)&=x\exp((\boj)^{\top}x)\cdot\sigma((\aj)^{\top}x+\bj)=x\cdot F(x;\boj,\aj,\bj),\\
    \frac{\partial F}{\partial a}(x;\boj,\aj,\bj)&=x\exp((\boj)^{\top}x)\cdot\sigma^{(1)}((\aj)^{\top}x+\bj)=x\cdot F_1(x;\boj,\aj,\bj),\\
    \frac{\partial F}{\partial b}(x;\boj,\aj,\bj)&=\exp((\boj)^{\top}x)\cdot\sigma^{(1)}((\aj)^{\top}x+\bj)=F_1(x;\boj,\aj,\bj),
\end{align*}
where we denote $F_{\tau}(x;\boj,\aj,\bj)=\exp((\boj)^{\top}x)\cdot\sigma^{(\tau)}((\aj)^{\top}x+\bj)$. Thus, we can rewrite $A_{n,1}$ as
\begin{align}
    \label{eq:activation_A_n_1}
    A_{n,1}&=\sum_{j:|\mathcal{A}_j|=1}C_{n,1,j}(x)+R_1(x),
\end{align}
where
\begin{align*}
    C_{n,1,j}(x)=\sum_{i\in\mathcal{A}_j}\exp(\beta^n_{0i})\Big[(\dboijn)^{\top}x\cdot F(x;\boj,\aj,\bj)+((\daijn)^{\top}x+(\dbijn))\cdot F_1(x;\boj,\aj,\bj)\Big].
\end{align*}

Next, by applying the second-order Taylor expansion, $A_{n,2}$ can be represented as
\begin{align*}
    A_{n,2}=\sum_{j:|\mathcal{A}_j|>1}\sum_{i\in\mathcal{A}_j}\exp(\beta^n_{0i})\sum_{|\alpha|=1}^{2}\frac{1}{\alpha!}(\dboijn)^{\alpha_1}(\daijn)^{\alpha_2}(\dbijn)^{\alpha_3}\cdot\frac{\partial^{|\alpha_1|+|\alpha_2|+\alpha_3} F}{\partial\beta_1^{\alpha_1}\partial a^{\alpha_2}\partial b^{\alpha_3}}(x;\boj,\aj,\bj)+R_2(x),
\end{align*}
where $R_2(x)$ is a Taylor remainder such that $R_2(x)/\mathcal{D}_{2n}\to0$ as $n\to\infty$. The second derivatives of $F$ w.r.t its parameters are given by
\begin{align*}
    \frac{\partial^2F}{\partial\beta\partial\beta^{\top}}(x;\boj,\aj,\bj)=xx^{\top}\cdot F(x;\boj,\aj,\bj)&,\quad \frac{\partial^2F}{\partial\beta_1\partial a^{\top}}=xx^{\top}\cdot F_1(x;\boj,\aj,\bj),\\
    \frac{\partial^2F}{\partial\beta_1\partial b}=x\cdot F_1(x;\boj,\aj,\bj)&, \quad \frac{\partial^2F}{\partial a\partial a^{\top}}=xx^{\top}\cdot F_2(x,\boj,\aj,\bj),\\
    \frac{\partial^2F}{\partial a\partial b}=x\cdot F_2(x,\boj,\aj,\bj)&, \quad \frac{\partial^2F}{\partial b^2}=F_2(x,\boj,\aj,\bj).
\end{align*}
Therefore, the term $A_{n,2}$ becomes
\begin{align}
     \label{eq:activation_A_n_2}
    A_{n,2}=\sum_{j:|\mathcal{A}_j|>1}[C_{n,1,j}(x)+C_{n,2,j}(x)]+R_2(x),
\end{align}
where 
\begin{align*}
    &C_{n,2,j}(x):=\sum_{i\in\mathcal{A}_j}\exp(\bzin)\Bigg\{\Big[x^{\top}\Big(M_{d}\odot (\dboijn)(\dboijn)^{\top}\Big)x\Big]\cdot F(x;\boj,\aj,\bj)\\
    &+\Big[x^{\top}\Big(M_{d}\odot (\daijn)(\daijn)^{\top}\Big)x+(\dbijn)(\dboijn)^{\top}x+x^{\top}(\dboijn)(\daijn)^{\top}x\Big]\cdot F_1(x;\boj,\aj,\bj)\\
    &\hspace{7cm}+\Big[\frac{1}{2}(\dbijn)^2+(\dbijn)(\daijn)^{\top}x\Big]\cdot F_2(x;\boj,\aj,\bj)\Bigg\}.
\end{align*}
\textbf{Decomposition of $B_n(x)$.} Subsequently, we also divide $B_n(x)$ into two terms based on the Voronoi cells as
\begin{align*}
    B_n(x)&=\sum_{j:|\mathcal{A}_j|=1}\sum_{i\in\mathcal{A}_j}\exp(\beta^n_{0i})\Big[H(x;\boin)-H(x;\boj)\Big]\\
    &+\sum_{j:|\mathcal{A}_j|>1}\sum_{i\in\mathcal{A}_j}\exp(\beta^n_{0i})\Big[H(x;\boin)-H(x;\boj)\Big]\\
    &:=B_{n,1}+B_{n,2}.
\end{align*}
By means of the first-order Taylor expansion, we have
\begin{align}
    \label{eq:activation_B_n_1}
    B_{n,1}=\sum_{j:|\mathcal{A}_j|=1}\sum_{i\in\mathcal{A}_j}\exp(\bzin)(\dboijn)^{\top}x\cdot H(x;\boj)+R_3(x),
\end{align}
where $R_3(x)$ is a Taylor remainder such that $R_3(x)/\mathcal{D}_{2n}\to0$ as $n\to\infty$. Meanwhile, by applying the second-order Taylor expansion, we get
\begin{align}
      \label{eq:activation_B_n_2}
    B_{n,2}=\sum_{j:|\mathcal{A}_j|>1}\sum_{i\in\mathcal{A}_j}\exp(\bzin)\Big[(\dboijn)^{\top}x+x^{\top}\Big(M_{d}\odot(\dboijn)(\dboijn)^{\top}\Big)x\Big]\cdot H(x;\boj) + R_4(x),
\end{align}
where $R_4(x)$ is a Taylor remainder such that $R_4(x)/\mathcal{D}_{2n}\to0$ as $n\to\infty$.

Putting the above results together, we see that $[A_n(x)-R_1(x)-R_2(x)]/\mathcal{D}_{2n}$, $[B_n(x)-R_3(x)-R_4(x)]/\mathcal{D}_{2n}$ and $E_n(x)/\mathcal{D}_{2n}$ can be written as a combination of elements from set $\mathcal{S}:=\cup_{\tau=0}^{3}\mathcal{S}_{\tau}$ in which
\begin{align*}
    \mathcal{S}_{0}&:=\Big\{F(x;\boj,\aj,\bj), \ x^{(u)}F(x;\boj,\aj,\bj), \ x^{(u)}x^{(v)}F(x;\boj,\aj,\bj):u,v\in[d], \ j\in[k_*]\Big\},\\
    \mathcal{S}_{1}&:=\Big\{F_1(x;\boj,\aj,\bj), \ x^{(u)}F_1(x;\boj,\aj,\bj), \ x^{(u)}x^{(v)}F_1(x;\boj,\aj,\bj):u,v\in[d], \ j\in[k_*]\Big\},\\
    \mathcal{S}_{2}&:=\Big\{F_2(x;\boj,\aj,\bj), \ x^{(u)}F_2(x;\boj,\aj,\bj), \ x^{(u)}x^{(v)}F_2(x;\boj,\aj,\bj):u,v\in[d], \ j\in[k_*]\Big\},\\
    \mathcal{S}_{3}&:=\Big\{H(x;\boj), \ x^{(u)}H(x;\boj), \ x^{(u)}x^{(v)}H(x;\boj):u,v\in[d], \ j\in[k_*]\Big\}.
\end{align*}
\textbf{Step 2.} In this step, we prove by contradiction that at least one among coefficients in the representations of $[A_n(x)-R_1(x)-R_2(x)]/\mathcal{D}_{2n}$, $[B_n(x)-R_3(x)-R_4(x)]/\mathcal{D}_{2n}$ and $E_n(x)/\mathcal{D}_{2n}$ does not go to zero as $n$ tends to infinity. Indeed, assume that all of them converge to zero. Then, by considering the coefficients of 
\begin{itemize}
    \item $F(x;\boj,\aj,\bj)$ for $j\in[k_*]$, we get that $\frac{1}{\mathcal{D}_{2n}}\cdot\sum_{j=1}^{k_*}\Big|\sum_{i\in\mathcal{A}_j}\exp(\bzin)-\exp(\bzj)\Big|\to0$;
    \item $x^{(u)}F(x;\boj,\aj,\bj)$ for $u\in[d]$ and $j:|\mathcal{A}_j|=1$, we get that $\frac{1}{\mathcal{D}_{2n}}\cdot\sum_{j:|\mathcal{A}_j|=1}\sum_{i\in\mathcal{A}_j}\exp(\bzin)\|\dboijn\|_1\to0$;
    \item $x^{(u)}F_1(x;\boj,\aj,\bj)$ for $u\in[d]$ and $j:|\mathcal{A}_j|=1$, we get that $\frac{1}{\mathcal{D}_{2n}}\cdot\sum_{j:|\mathcal{A}_j|=1}\sum_{i\in\mathcal{A}_j}\exp(\bzin)\|\daijn\|_1\to0$;
    \item $F_1(x;\boj,\aj,\bj)$ for $j:|\mathcal{A}_j|=1$, we get that $\frac{1}{\mathcal{D}_{2n}}\cdot\sum_{j:|\mathcal{A}_j|=1}\sum_{i\in\mathcal{A}_j}\exp(\bzin)|\dbijn|_1\to0$;
    \item $[x^{(u)}]^2F(x;\boj,\aj,\bj)$ for $u\in[d]$ and $j:|\mathcal{A}_j|>1$, we get that $\frac{1}{\mathcal{D}_{2n}}\cdot\sum_{j:|\mathcal{A}_j|>1}\sum_{i\in\mathcal{A}_j}\exp(\bzin)\|\dboijn\|^2\to0$;
    \item $[x^{(u)}]^2F_2(x;\boj,\aj,\bj)$ for $u\in[d]$ and $j:|\mathcal{A}_j|>1$, we get that $\frac{1}{\mathcal{D}_{2n}}\cdot\sum_{j:|\mathcal{A}_j|>1}\sum_{i\in\mathcal{A}_j}\exp(\bzin)\|\daijn\|^2\to0$;
    \item $F_2(x;\boj,\aj,\bj)$ for $j:|\mathcal{A}_j|>1$, we get that $\frac{1}{\mathcal{D}_{2n}}\cdot\sum_{j:|\mathcal{A}_j|>1}\sum_{i\in\mathcal{A}_j}\exp(\bzin)\|\dbijn\|^2\to0$.
\end{itemize}
By taking the summation of the above limits, we obtain that $1=\mathcal{D}_{2n}/\mathcal{D}_{2n}\to0$ as $n\to\infty$, which is a contradiction. Therefore, not all the coefficients in the representations of $[A_n(x)-R_1(x)-R_2(x)]/\mathcal{D}_{2n}$, $[B_n(x)-R_3(x)-R_4(x)]/\mathcal{D}_{2n}$ and $E_n(x)/\mathcal{D}_{2n}$ go to zero.

\textbf{Step 3.} In this step, we point out a contradiction following from the result in Step 2. Let us denote by $m_n$ the maximum of the absolute values of the coefficients in the representations of $[A_n(x)-R_1(x)-R_2(x)]/\mathcal{D}_{2n}$, $[B_n(x)-R_3(x)-R_4(x)]/\mathcal{D}_{2n}$ and $E_n(x)/\mathcal{D}_{2n}$. Since at least one among those coefficients does not approach zero, we obtain that $1/m_n\not\to\infty$. 

Recall the hypothesis in equation~\eqref{eq:activation_ratio_limit} that $\normf{f_{G_n}-f_{G_*}}/\mathcal{D}_{2n}\to0$ as $n\to\infty$, which indicates that $\|f_{G_n}-f_{G_*}\|_{L^1(\mu)}/\mathcal{D}_{2n}\to0$. By means of the Fatou's lemma, we have
\begin{align*}
    0=\lim_{n\to\infty}\frac{\|f_{G_n}-f_{G_*}\|_{L^1(\mu)}}{m_n\mathcal{D}_{2n}}\geq \int \liminf_{n\to\infty}\frac{|f_{G_n}(x)-f_{G_*}(x)|}{m_n\mathcal{D}_{2n}}\dint\mu(x)\geq 0.
\end{align*}
This result implies that $[f_{G_n}(x)-f_{G_*}(x)]/[m_n\mathcal{D}_{2n}]$ for almost every $x$. Since the term $\sum_{j=1}^{k_*}\exp((\beta^*_{1j})^{\top}x+\beta^*_{0j})$ is bounded, we deduce that $Q_n(x)/[m_n\mathcal{D}_{2n}]\to0$, or equivalently, 
\begin{align}
    \label{eq:activation_zero_limit}
    \lim_{n\to\infty}\frac{1}{m_n\mathcal{D}_{2n}}\cdot\Big[(A_{n,1}-R_1(x)+A_{n,2}-R_2(x))-(B_{n,1}-R_3(x)+B_{n,2}-R_4(x))+E_n(x)\Big]\to0.
\end{align}
Let us denote
\begin{align*}
    \frac{1}{m_n\mathcal{D}_{2n}}\cdot\sum_{i\in\mathcal{A}_j}\exp(\bzin)(\dboijn)\to\phi_{1,j}&,\quad  \frac{1}{m_n\mathcal{D}_{2n}}\cdot\sum_{i\in\mathcal{A}_j}\exp(\bzin)(\dboijn)(\dboijn)^{\top}\to\phi_{2,j},\\
    \frac{1}{m_n\mathcal{D}_{2n}}\cdot\sum_{i\in\mathcal{A}_j}\exp(\bzin)(\daijn)\to\varphi_{1,j}&,\quad \frac{1}{m_n\mathcal{D}_{2n}}\cdot\sum_{i\in\mathcal{A}_j}\exp(\bzin)(\daijn)(\daijn)^{\top}\to\varphi_{2,j},\\
    \frac{1}{m_n\mathcal{D}_{2n}}\cdot\sum_{i\in\mathcal{A}_j}\exp(\bzin)(\dbijn)\to\kappa_{1,j}&,\quad \frac{1}{m_n\mathcal{D}_{2n}}\cdot\sum_{i\in\mathcal{A}_j}\exp(\bzin)(\dbijn)^2\to\kappa_{2,j},\\
     \frac{1}{m_n\mathcal{D}_{2n}}\cdot\sum_{i\in\mathcal{A}_j}\exp(\bzin)(\dboijn)(\daijn)^{\top}\to\zeta_{1,j}&,\quad \frac{1}{m_n\mathcal{D}_{2n}}\cdot\sum_{i\in\mathcal{A}_j}\exp(\bzin)(\dbijn)(\dboijn)\to\zeta_{2,j},\\
     \frac{1}{m_n\mathcal{D}_{2n}}\cdot\sum_{i\in\mathcal{A}_j}\exp(\bzin)(\dbijn)(\daijn)\to\zeta_{3,j}&,\quad \frac{1}{m_n\mathcal{D}_{2n}}\cdot\Big(\sum_{i\in\mathcal{A}_j}\exp(\bzin)-\exp(\bzj)\Big)\to\xi_j&.
\end{align*}
Here, at least one among $\phi^{(u)}_{1,j}$, $\phi^{(uu)}_{2,j}$, $\varphi^{(u)}_{1,j}$, $\varphi^{(uu)}_{2,j}$, $\kappa_{1,j}$, $\kappa_{2,j}$ and $\xi_j$, for $j\in[k_*]$, is different from zero, which results from Step 2. Additionally, let us denote $F_{\tau j}:=F_{\tau}(x;\boj,\aj,\bj)$ and $H_j=H(x;\boj)$ for short, then from the formulation of 
\begin{itemize}
    \item $A_{n,1}$ in equation~\eqref{eq:activation_A_n_1}, we get 
    \begin{align}
        \label{eq:activation_limit_1}
        \frac{A_{n,1}-R_1(x)}{m_n\mathcal{D}_{2n}}\to\sum_{j:|\mathcal{A}_j|=1}\Big[\phi^{\top}_{1,j}x\cdot F_j+(\kappa_{1,j}+\varphi_{1,j}^{\top}x)\cdot F_{1j}\Big].
    \end{align}
    \item $A_{n,2}$ in equation~\eqref{eq:activation_A_n_2}, we get
    \begin{align}
        &\frac{A_{n,2}-R_2(x)}{m_n\mathcal{D}_{2n}}\to\sum_{j:|\mathcal{A}_j|>1}\Bigg\{\Big[\phi^{\top}_{1,j}x+x^{\top}\Big(M_{d}\odot \phi_{2,j}\Big)x\Big]\cdot F_j+[\kappa_{1,j}+(\varphi_{1,j}+\zeta_{2,j})^{\top}x+x^{\top}\zeta_{1,j}x]\cdot F_{1j}\nonumber\\
        \label{eq:activation_limit_2}
        &\hspace{8cm}+\Big[\frac{1}{2}\kappa_{2,j}+\zeta^{\top}_{3,j}x+x^{\top}\Big(M_{d}\odot \varphi_{2,j}\Big)x\Big]\cdot F_{2j}
    \end{align}
    \item $B_{n,1}$ in equation~\eqref{eq:activation_B_n_1}, we get
    \begin{align}
        \label{eq:activation_limit_3}
        \frac{B_{n,1}-R_3(x)}{m_n\mathcal{D}_{2n}}\to\sum_{j:|\mathcal{A}_j|=1}[\phi^{\top}_{1,j}x\cdot H_j].
    \end{align}
    \item $B_{n,2}$ in equation~\eqref{eq:activation_B_n_2}, we get
    \begin{align}
        \label{eq:activation_limit_4}
        \frac{B_{n,2}-R_4(x)}{m_n\mathcal{D}_{2n}}\to\sum_{j:|\mathcal{A}_j|>1}\Big[\phi^{\top}_{1,j}x+x^{\top}\Big(M_{d}\odot \phi_{2,j}\Big)x\Big]\cdot H_j.
    \end{align}
    \item $E_n(x)$ in equation~\eqref{eq:activation_Q_n}, we get
    \begin{align}
        \label{eq:activation_limit_5}
        \frac{E_n(x)}{m_n\mathcal{D}_{2n}}\to\sum_{j=1}^{k_*}\xi_j[F_j-H_j].
    \end{align}
\end{itemize}
Due to the result in equation~\eqref{eq:activation_zero_limit}, we deduce that the limits in equations~\eqref{eq:activation_limit_1}, \eqref{eq:activation_limit_2}, \eqref{eq:activation_limit_3}, \eqref{eq:activation_limit_4} and \eqref{eq:activation_limit_5} sum up to zero. 

Now, we show that all the values of $\phi^{(u)}_{1,j}$, $\phi^{(uu)}_{2,j}$, $\varphi^{(u)}_{1,j}$, $\varphi^{(uu)}_{2,j}$, $\kappa_{1,j}$, $\kappa_{2,j}$ and $\xi_j$, for $j\in[k_*]$, are equal to zero. For that purpose, we first denote $J_1,J_2,\ldots,J_{\ell}$ as the partition of the set $\{\exp((\boj)^{\top}x):j\in[k_*]\}$ for some $\ell\leq k_*$ such that 
\begin{itemize}
    \item[(i)] $\beta^*_{0j}=\beta^*_{0j'}$ for any $j,j'\in J_i$ and $i\in[\ell]$;
    \item[(ii)] $\beta^*_{0j}\neq\beta^*_{0j'}$ when $j$ and $j'$ do not belong to the same set $J_i$ for any $i\in[\ell]$.
\end{itemize}
Then, the set $\{\exp((\beta^*_{0j_1})^{\top}x),\ldots,\exp((\beta^*_{0j_\ell})^{\top}x)\}$, where $j_i\in J_i$, is linearly independent. Since the limits in equations~\eqref{eq:activation_limit_1}, \eqref{eq:activation_limit_2}, \eqref{eq:activation_limit_3}, \eqref{eq:activation_limit_4} and \eqref{eq:activation_limit_5} sum up to zero, we get for any $i\in[\ell]$ that
\begin{align*}
    &\sum_{j\in J_i:|\mathcal{A}_j|=1}\Big[(\phi^{\top}_{1,j}x+\xi_j)\cdot \sigma_j+(\kappa_{1,j}+\varphi_{1,j}^{\top}x)\cdot \sigma^{(1)}_j\Big]+\sum_{j\in J_i:|\mathcal{A}_j|>1}\Bigg\{\Big[\xi_j+\phi^{\top}_{1,j}x+x^{\top}\Big(M_{d}\odot \phi_{2,j}\Big)x\Big]\cdot \sigma_j\\
    &+[\kappa_{1,j}+(\varphi_{1,j}+\zeta_{2,j})^{\top}x+x^{\top}\zeta_{1,j}x]\cdot \sigma^{(1)}_j+\Big[\frac{1}{2}\kappa_{2,j}+\zeta^{\top}_{3,j}x+x^{\top}\Big(M_{d}\odot \varphi_{2,j}\Big)x\Big]\cdot \sigma^{(2)}_j\Bigg\}\\
    &-\sum_{j\in J_i:|\mathcal{A}_j|=1}[(\phi^{\top}_{1,j}x+\xi_j)\cdot f_{G_*}(x)]-\sum_{j\in J_i:|\mathcal{A}_j|>1}\Big[\xi_j+\phi^{\top}_{1,j}x+x^{\top}\Big(M_{d}\odot \phi_{2,j}\Big)x\Big]\cdot f_{G_*}(x)=0,
\end{align*}
where we denote $\sigma^{(\tau)}_j:=\sigma^{(\tau)}((\aj)^{\top}x+\bj)$.
Additionally, as $(a^*_1,b^*_1),\ldots,(a^*_{k_*},b^*_{k_*})$ are pairwise distinct, the experts $(a^*_1)^{\top}x+b^*_1,\ldots,(a^*_{k_*})^{\top}x+b^*_{k_*}$ are also pairwise distinct. Recall that the function $\sigma$ satisfies conditions in Definition~\ref{def:activation_conditions}, then the following set is linearly independent
\begin{align*}
    \Big\{x^{\nu}\sigma^{(\tau)}_j, x^{\nu}f_{G_*}(x):\nu\in\mathbb{N}^d,\tau\in\mathbb{N}, \ 0\leq |\nu|,\tau\leq 2, \ j\in[k_*]\Big\}.
\end{align*}
is linearly independent. Therefore, we obtain that $\kappa_{1,j}=\kappa_{2,j}=\xi_j=0$, $\phi_{1,j}=\varphi_{1,j}=\zeta_{2,j}=\zeta_{3,j}=\zerod$ and $\phi_{2,j}=\varphi_{2,j}=\zeta_{1,j}=\mathbf{0}_{d\times d}$ for any $j\in J_i$ and $i\in[\ell]$. In other words, those results hold true for any $j\in[k_*]$, which contradicts to the fact that at least one among $\phi^{(u)}_{1,j}$, $\phi^{(uu)}_{2,j}$, $\varphi^{(u)}_{1,j}$, $\varphi^{(uu)}_{2,j}$, $\kappa_{1,j}$, $\kappa_{2,j}$ and $\xi_j$, for $j\in[k_*]$, is different from zero. Thus, we achieve the inequality~\eqref{eq:activation_local_inequality}, i.e.
\begin{align*}
    \lim_{\varepsilon\to0}\inf_{G\in\mathcal{G}_{k}(\Theta):\mathcal{D}_2(G,G_*)\leq\varepsilon}\normf{f_{G}-f_{G_*}}/\mathcal{D}_2(G,G_*)>0.
\end{align*}
As a consequence, there exists some $\varepsilon'>0$ such that
\begin{align*}
    \inf_{G\in\mathcal{G}_{k}(\Theta):\mathcal{D}_2(G,G_*)\leq\varepsilon'}\normf{f_{G}-f_{G_*}}/\mathcal{D}_2(G,G_*)>0.
\end{align*}
\textbf{Global part:} Given the above result, it suffices to demonstrate that 
\begin{align}
    \label{eq:activation_global_inequality}
     \inf_{G\in\mathcal{G}_{k}(\Theta):\mathcal{D}_2(G,G_*)>\varepsilon'}\normf{f_{G}-f_{G_*}}/\mathcal{D}_2(G,G_*)>0.
\end{align}
Assume by contrary that the inequality~\eqref{eq:activation_global_inequality} does not hold true, then we can find a sequence of mixing measures $G'_n\in\mathcal{G}_{k}(\Theta)$ such that $\mathcal{D}_2(G'_n,G_*)>\varepsilon'$ and
\begin{align*}
    \lim_{n\to\infty}\frac{\normf{f_{G'_n}-f_{G_*}}}{\mathcal{D}_2(G'_n,G_*)}=0,
\end{align*}
which indicates that $\normf{f_{G'_n}-f_{G_*}}\to0$ as $n\to\infty$. Recall that $\Theta$ is a compact set, therefore, we can replace the sequence $G'_n$ by one of its subsequences that converges to a mixing measure $G'\in\mathcal{G}_{k}(\Omega)$. Since $\mathcal{D}_2(G'_n,G_*)>\varepsilon'$, we deduce that $\mathcal{D}_2(G',G_*)>\varepsilon'$. 

Next, by invoking the Fatou's lemma, we have that
\begin{align*}
    0=\lim_{n\to\infty}\normf{f_{G'_n}-f_{G_*}}^2\geq \int\liminf_{n\to\infty}\Big|f_{G'_n}(x)-f_{G_*}(x)\Big|^2~\dint\mu(x).
\end{align*}
Thus, we get that $f_{G'}(x)=f_{G_*}(x)$ for almost every $x$. 
From Proposition~\ref{prop:general_identifiability}, we deduce that $G'\equiv G_*$. Consequently, it follows that $\mathcal{D}_2(G',G_*)=0$, contradicting the fact that $\mathcal{D}_2(G',G_*)>\varepsilon'>0$. 

Hence, the proof is completed.

\subsection{Proof of Proposition~\ref{prop:activation_limit}}
\label{appendix:activation_limit}

It is sufficient to show that the following limit holds true for any $r\geq 1$:
\begin{align}
    \label{eq:activation_ratio_zero_limit}
    \lim_{\varepsilon\to0}\inf_{G\in\mathcal{G}_{k}(\Theta):\mathcal{D}_{3,r}(G,G_*)\leq\varepsilon}\frac{\normf{f_{G}-f_{G_*}}}{\mathcal{D}_{3,r}(G,G_*)}=0.
\end{align}
To this end, we need to construct a sequence of mixing measures $(G_n)$ that satisfies $\mathcal{D}_{3,r}(G_n,G_*)\to 0$ and
\begin{align*}
    \frac{\normf{f_{G_n}-f_{G_*}}}{\mathcal{D}_{3,r}(G_n,G_*)}\to0,
\end{align*}
as $n\to\infty$. Recall that under the Regime 2, at least one among parameters $a^*_1,\ldots,a^*_{k_*}$ is equal to $\zerod$. Without loss of generality, we may assume that $a^*_1=\zerod$. Next, let us take into account the sequence $G_n=\sum_{i=1}^{k_*+1}\exp(\beta^n_{0i})\delta_{(\beta^n_{1i},a^n_i,b^n_i)}$ in which
\begin{itemize}
    \item $\exp(\beta^n_{01})=\exp(\beta^n_{02})=\frac{1}{2}\exp(\beta^*_{01})$ and  $\exp(\beta^n_{0i})=\exp(\beta^*_{0(i-1)})$ for any $3\leq i\leq k_*+1$;
    \item $\beta^n_{11}=\beta^n_{12}=\beta^*_{11}$ and  $\beta^n_{1i}=\beta^*_{1(i-1)}$ for any $3\leq i\leq k_*+1$;
    \item $a^n_1=a^n_2=a^*_1=\zerod$ and $a^n_i=a^*_{i-1}$ for any $3\leq i\leq k_*+1$;
    \item $b^n_1=b^*_1+\frac{c}{n}$, $b^n_2=b^*_1+\frac{2c}{n}$ and  $b^n_{i}=b^*_{i-1}$ for any $3\leq i\leq k_*+1$,
\end{itemize}
where $c\in\mathbb{R}$ will be chosen later. Consequently, we get that
\begin{align*}
    \mathcal{D}_{3,r}(G_n,G_*)=\frac{1}{2}\exp(\beta^*_{01})\Big[\frac{c^r}{n^r}+\frac{(2c)^r}{n^r}\Big]=\mathcal{O}(n^{-r}).
\end{align*}
Next, we demonstrate that $\normf{f_{G_n}-f_{G_*}}/\mathcal{D}_{3,r}(G_n,G_*)\to0$. To this end, consider the quantity $Q_n(x):=[\sum_{j=1}^{k_*}\exp((\beta^*_{1j})^{\top}x+\beta^*_{0j})]\cdot[f_{G_n}(x)-f_{G_*}(x)]$, and decompose it as follows:
\begin{align*}
    Q_n(x)&=\sum_{j=1}^{k_*}\sum_{i\in\mathcal{A}_j}\exp(\beta^n_{0i})\Big[\exp((\boin)^{\top}x)\sigma((\ain)^{\top}x+\bin)-\exp((\boj)^{\top}x)\sigma((\aj)^{\top}x+\bj)\Big]\\
    &-\sum_{j=1}^{k_*}\sum_{i\in\mathcal{A}_j}\exp(\beta^n_{0i})\Big[\exp((\boin)^{\top}x)f_{G_n}(x)-\exp((\boj)^{\top}x)f_{G_n}(x)\Big]\\
    &+\sum_{j=1}^{k_*}\Big(\sum_{i\in\mathcal{A}_j}\exp(\bzin)-\exp(\bzj)\Big)\Big[\exp((\boj)^{\top}x)\sigma((\aj)^{\top}x+\bj)-\exp((\boj)^{\top}x)f_{G_n}(x)\Big]\\
    &:=A_n(x)-B_n(x)+E_n(x).
\end{align*}
From the definitions of $\beta^n_{1i},a^n_i$ and $b^n_i$, we can verify that $B_n(x)=E_n(x)=0$. Additionally, we can represent $A_n(x)$ as
\begin{align*}
    A_n(x)&=\sum_{i=1}^{2}\exp(\beta^*_{01})\exp((\beta^*_{11})^{\top}x)\Big[\sigma(\bin)-\sigma(b^*_1)\Big].
\end{align*}
\textbf{When $r$ is odd:} By applying the Taylor expansion up to order $r$-th, we get that 
\begin{align*}
    A_n(x)&=\sum_{i=1}^{2}\exp(\beta^*_{01})\exp((\beta^*_{11})^{\top}x)\sum_{\alpha=1}^{r}\frac{(b^n_i-b^*_1)^{\alpha}}{\alpha!}\cdot \sigma^{(\alpha)}(b^*_1)+ R_1(x)\\
    &=\Big[\sum_{\alpha=1}^{r}\frac{(1+2^{\alpha})\sigma^{(\alpha)}(b^*_1)}{\alpha!n^r}\cdot c^{\alpha}\Big]\exp((\beta^*_{11})^{\top}x+\beta^*_{01}) + R_1(x),
\end{align*}
where $R_1(x)$ is a Taylor remainder such that $R_1(x)/\mathcal{D}_{3,r}(G_n,G_*)\to0$. Note that $\Big[\sum_{\alpha=1}^{r}\frac{(1+2^{\alpha})\sigma^{(\alpha)}(b^*_1)}{\alpha!n^{\alpha}}\cdot c^{\alpha}\Big]$ is an odd-order polynomial of $c$. Thus, we can choose $c$ as a root of this polynomial, which leads to the fact that $A_n(x)=0$. From the above results, we deduce that $Q_n(x)/\mathcal{D}_{3,r}(G_n,G_*)\to0$, or equivalently, $[f_{G_n}(x)-f_{G_*}(x)]/\mathcal{D}_{3,r}(G_n,G_*)\to0$ as $n\to\infty$ for almost every $x$. As a consequence, we achieve that $\normf{f_{G_n}-f_{G_*}}/\mathcal{D}_{3,r}(G_n,G_*)\to0$.

\textbf{When $r$ is even:} By means of the Taylor expansion of order $(r+1)$-th, we have
\begin{align*}
    A_n(x)&=\sum_{i=1}^{2}\exp(\beta^*_{01})\exp((\beta^*_{11})^{\top}x)\sum_{\alpha=1}^{r+1}\frac{(b^n_i-b^*_1)^{\alpha}}{\alpha!}\cdot \sigma^{(\alpha)}(b^*_1)+ R_2(x)\\
    &=\Big[\sum_{\alpha=1}^{r+1}\frac{(1+2^{\alpha})\sigma^{(\alpha)}(b^*_1)}{\alpha!n^r}\cdot c^{\alpha}\Big]\exp((\beta^*_{11})^{\top}x+\beta^*_{01}) + R_2(x),
\end{align*}
where $R_2(x)$ is a Taylor remainder such that $R_2(x)/\mathcal{D}_{3,r}(G_n,G_*)\to0$. Since $\Big[\sum_{\alpha=1}^{r+1}\frac{(1+2^{\alpha})\sigma^{(\alpha)}(b^*_1)}{\alpha!n^{\alpha}}\cdot c^{\alpha}\Big]$ is an odd-degree polynomial of variable $c$, we can argue in a similar fashion to the scenario when $r$ is odd to obtain that $\normf{f_{G_n}-f_{G_*}}/\mathcal{D}_{3,r}(G_n,G_*)\to0$.

Combine results from the above two cases of $r$, we reach the conclusion of claim~\eqref{eq:activation_ratio_zero_limit}.

\subsection{Proof of Theorem~\ref{theorem:singular_activation_experts}}
\label{appendix:singular_activation_experts}
Based on the result of Proposition~\ref{prop:activation_limit}, we demonstrate that the following minimax lower bound holds true for any $r\geq 1$:
\begin{align}
    \label{eq:minimax_activation_experts}
    \inf_{\overline{G}_n\in\mathcal{G}_{k}(\Theta)}\sup_{G\in\mathcal{G}_{k}(\Theta)\setminus\mathcal{G}_{k_*-1}(\Theta)}\bbE_{f_{G}}[\mathcal{D}_{3,r}(\overline{G}_n,G)]\gtrsim n^{-1/2}.
\end{align}
Indeed, from the Gaussian assumption on the noise variables, we obtain that $Y_{i}|X_{i} \sim \mathcal{N}(f_{G_{*}}(x_{i}), \sigma^2)$ for all $i \in [n]$. Now, from Proposition~\ref{prop:activation_limit}, for sufficiently small $\varepsilon>0$ and a fixed constant $C_1>0$ that we will choose later, we can find a mixing measure $G'_* \in \mathcal{G}_{k}(\Theta)$ such that $\mathcal{D}_{3,r}(G'_*,G_*)=2 \varepsilon$ and $\|f_{G'_*} - f_{G_*}\|_{L^2(\mu)} \leq C_1\varepsilon$. From Le Cam's lemma~\cite{yu97lecam}, as the Voronoi loss function $\mathcal{D}_{3,r}$ satisfies the weak triangle inequality, we obtain that
\begin{align}
    \inf_{\overline{G}_n\in\mathcal{G}_{k}(\Theta)}\sup_{G\in\mathcal{G}_{k}(\Theta)\setminus\mathcal{G}_{k_*-1}(\Theta)}\bbE_{f_{G}}[\mathcal{D}_{3,r}(\overline{G}_n,G)] & \gtrsim \frac{\mathcal{D}_{3,r}(G'_*,G_*)}{8} \text{exp}(- n \mathbb{E}_{X \sim \mu}[\text{KL}(\mathcal{N}(f_{G'_{*}}(x), \sigma^2),\mathcal{N}(f_{G_{*}}(x), \sigma^2))]) \nonumber \\
    & \gtrsim \varepsilon \cdot \text{exp}(-n \|f_{G'_*} - f_{G_*}\|_{L^2(\mu)}^2), \nonumber \\
    & \gtrsim \varepsilon \cdot \text{exp}(-C_{1} n \varepsilon^2), \label{eq:LeCam_inequality}
\end{align}
where the second inequality is due to the fact that $\text{KL}(\mathcal{N}(f_{G'_{*}}(x), \sigma^2),\mathcal{N}(f_{G_{*}}(x), \sigma^2)) = \dfrac{(f_{G'_*}(x) - f_{G_*}(x))^2}{2 \sigma^2}$. 

By choosing $\varepsilon=n^{-1/2}$, we obtain that $\varepsilon \cdot \text{exp}(-C_{1} n \varepsilon^2)=n^{-1/2}\exp(-C_1)$. As a consequence, we achieve the desired minimax lower bound in equation~\eqref{eq:minimax_activation_experts}.
\subsection{Proof of Proposition~\ref{prop:linear_limit}}
\label{appendix:linear_limit}
We need to prove that the following limit holds true for any $r\geq 1$:
\begin{align}
    \label{eq:ratio_zero_limit}
    \lim_{\varepsilon\to0}\inf_{G\in\mathcal{G}_{k}(\Theta):\mathcal{D}_{3,r}(G,G_*)\leq\varepsilon}\frac{\normf{f_{G}-f_{G_*}}}{\mathcal{D}_{3,r}(G,G_*)}=0.
\end{align}
For that purpose, it suffices to build a sequence of mixing measures $(G_n)$ such that both $\mathcal{D}_{3,r}(G_n,G_*)\to0$ and 
\begin{align*}
    \frac{\normf{f_{G_n}-f_{G_*}}}{\mathcal{D}_{3,r}(G_n,G_*)}\to0,
\end{align*}
as $n\to\infty$. To this end, we consider the sequence  $G_n=\sum_{i=1}^{k_*+1}\exp(\bzin)\delta_{(\boin,\ain,\bin)}$, where 
\begin{itemize}
    \item $\exp(\beta^n_{01})=\exp(\beta^n_{02})=\frac{1}{2}\exp(\beta^*_{01})+\frac{1}{2n^{r+1}}$ and  $\exp(\beta^n_{0i})=\exp(\beta^n_{0(i-1)})$ for any $3\leq i\leq k_*+1$;
    \item $\beta^n_{11}=\beta^n_{12}=\beta^*_{11}$ and  $\beta^n_{1i}=\beta^n_{1(i-1)}$ for any $3\leq i\leq k_*+1$;
    \item $a^n_1=a^n_2=a^*_1$ and $a^n_i=a^n_{i-1}$ for any $3\leq i\leq k_*+1$;
    \item $b^n_1=b^*_1+\frac{1}{n}$, $b^n_2=b^*_1-\frac{1}{n}$ and  $b^n_{i}=b^*_{i-1}$ for any $3\leq i\leq k_*+1$.
\end{itemize}
As a result, the loss function $\mathcal{D}_{3,r}(G_n,G_*)$ is reduced to
\begin{align}
    \label{eq:D_r_formulation}
    \mathcal{D}_{3,r}(G_n,G_*)=\frac{1}{n^{r+1}}+\Big[\exp(\beta^*_{01})+\frac{1}{n^{r+1}}\Big]\cdot\frac{1}{n^r}=\mathcal{O}(n^{-r}).
\end{align}
which indicates indicates that $\mathcal{D}_{3,r}(G_n,G_*)\to0$ as $n\to\infty$. Now, we prove that $\normf{f_{G_n}-f_{G_*}}/\mathcal{D}_{3,r}(G_n,G_*)\to0$. For that purpose, let us consider the quantity $Q_n(x):=[\sum_{j=1}^{k_*}\exp((\beta^*_{1j})^{\top}x+\beta^*_{0j})]\cdot[f_{G_n}(x)-f_{G_*}(x)]$. Then, we decompose $Q_n(x)$ as follows:
\begin{align*}
    Q_n(x)&=\sum_{j=1}^{k_*}\sum_{i\in\mathcal{A}_j}\exp(\beta^n_{0i})\Big[\exp((\boin)^{\top}x)((\ain)^{\top}x+\bin)-\exp((\boj)^{\top}x)((\aj)^{\top}x+\bj)\Big]\\
    &-\sum_{j=1}^{k_*}\sum_{i\in\mathcal{A}_j}\exp(\beta^n_{0i})\Big[\exp((\boin)^{\top}x)f_{G_n}(x)-\exp((\boj)^{\top}x)f_{G_n}(x)\Big]\\
    &+\sum_{j=1}^{k_*}\Big(\sum_{i\in\mathcal{A}_j}\exp(\bzin)-\exp(\bzj)\Big)\Big[\exp((\boj)^{\top}x)((\aj)^{\top}x+\bj)-\exp((\boj)^{\top}x)f_{G_n}(x)\Big]\\
    &:=A_n(x)-B_n(x)+E_n(x).
\end{align*}
From the definitions of $\beta^n_{1i},a^n_i$ and $b^n_i$, we can rewrite $A_n(x)$ as follows:
\begin{align*}
    A_n(x)=\sum_{i=1}^{2}\frac{1}{2}\exp(\beta^n_{01})\exp((\beta^*_{11})^{\top}x)(b^n_i-b^*_1)=\frac{1}{2}\exp(\beta^n_{01})\exp((\beta^*_{11})^{\top}x)[(b^n_1-b^*_1)+(b^n_2-b^*_1)]=0.
\end{align*}
Additionally, it can also be checked that $B_n(x)=0$. Next, we have $E_n(x)=\mathcal{O}(n^{-(r+1)})$, therefore, it follows that $E_n(x)/\mathcal{D}_{3,r}(G_n,G_*)\to0$. As a consequence, $Q_n(x)/\mathcal{D}_{3,r}(G_n,G_*)\to0$ as $n\to\infty$ for almost every $x$. Since the term $\sum_{j=1}^{k_*}\exp((\boj)^{\top}x+\bzj)$ is bounded, we deduce that $[f_{G_n}(x)-f_{G_*}(x)]/\mathcal{D}_{3,r}\to0$ for almost every $x$. This result indicates that $\normf{f_{G_n}-f_{G_*}}/\mathcal{D}_{3,r}\to0$ as $n\to\infty$. Hence, the proof of claim~\eqref{eq:ratio_zero_limit} is completed.

\subsection{Proof of Theorem~\ref{theorem:linear_experts}}
\label{appendix:linear_experts}
By leveraging the result of Proposition~\ref{prop:linear_limit} and the arguments for Theorem~\ref{theorem:singular_activation_experts} in Appendix~\ref{appendix:singular_activation_experts}, we achieve the following minimax lower bound for any $r\geq 1$:
\begin{align}
    \label{eq:minimax_linear_experts}
    \inf_{\overline{G}_n\in\mathcal{G}_{k}(\Theta)}\sup_{G\in\mathcal{G}_{k}(\Theta)\setminus\mathcal{G}_{k_*-1}(\Theta)}\bbE_{f_{G}}[\mathcal{D}_{3,r}(\overline{G}_n,G)]\gtrsim n^{-1/2}.
\end{align}

\section{Identifiability of the Softmax Gating Mixture of Experts}
\label{appendix:identifiability}
\begin{proposition}
    \label{prop:general_identifiability}
    If $f_{G}(x)=f_{G_*}(x)$ holds true for almost every $x$, then we get that $G\equiv G'$.
\end{proposition}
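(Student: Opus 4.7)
The plan is to convert the functional equality $f_{G}(x)=f_{G_{*}}(x)$ into a polynomial-exponential identity and then disentangle it using linear independence of distinct exponentials together with identifiability of the expert function $h$. Clearing the softmax denominators on both sides and rearranging yields
\[
\sum_{i=1}^{k}\sum_{j=1}^{k_{*}}\exp\bigl((\beta_{1i}+\beta_{1j}^{*})^{\top}x+\beta_{0i}+\beta_{0j}^{*}\bigr)\bigl[h(x,\eta_{i})-h(x,\eta_{j}^{*})\bigr]=0,
\]
for almost every $x$. Grouping the pairs $(i,j)$ by the distinct values of $\gamma:=\beta_{1i}+\beta_{1j}^{*}\in\mathbb{R}^{d}$ and invoking the linear independence of $\{x\mapsto\exp(\gamma^{\top}x)\}$ over distinct $\gamma$ (with bounded continuous coefficients, justified via an extremal-direction argument combined with the boundedness of $h$ from Assumption A.2), each group-sum must vanish:
\[
\sum_{(i,j):\,\beta_{1i}+\beta_{1j}^{*}=\gamma}\exp(\beta_{0i}+\beta_{0j}^{*})\bigl[h(x,\eta_{i})-h(x,\eta_{j}^{*})\bigr]=0\quad\text{a.e.},
\]
for every realized $\gamma$.

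To isolate one matched pair, I would use an extremality argument. Pick a direction $v\in\mathbb{R}^{d}$ in general position such that $\{v^{\top}\beta_{1i}\}_{i\in[k]}$ and $\{v^{\top}\beta_{1j}^{*}\}_{j\in[k_{*}]}$ are each pairwise distinct, and let $i^{*},j^{*}$ be the unique maximizers of $v^{\top}\beta_{1i}$ and $v^{\top}\beta_{1j}^{*}$ respectively. Then $\beta_{1i^{*}}+\beta_{1j^{*}}^{*}$ strictly maximizes $v^{\top}(\beta_{1i}+\beta_{1j}^{*})$ over all pairs, so its group reduces to the singleton $(i^{*},j^{*})$. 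The corresponding group identity collapses to $h(x,\eta_{i^{*}})=h(x,\eta_{j^{*}}^{*})$ a.e., and the injectivity of $\eta\mapsto h(\cdot,\eta)$ (built into Assumption A.2 and reinforced by the strong identifiability or strong independence conditions used elsewhere in the paper) forces $\eta_{i^{*}}=\eta_{j^{*}}^{*}$.

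The remainder of the argument is a double induction on $(k,k_{*})$. Having matched $(\eta_{i^{*}},\eta_{j^{*}}^{*})$, I would peel off this pair, reduce the identity to a system with one fewer effective component on each side, and re-apply the extremal argument in the same generic direction $v$ to match the next pair, iterating until both systems are exhausted. This delivers $k=k_{*}$, a bijection pairing $(\beta_{1i},\eta_{i})\leftrightarrow(\beta_{1j}^{*},\eta_{j}^{*})$, and finally equality of the offsets $\exp(\beta_{0i})$ with $\exp(\beta_{0j}^{*})$ by comparison of the coefficients of the surviving exponentials under the gauge fixed in Assumption A.3.

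The main obstacle I anticipate is the combinatorial bookkeeping when several distinct pairs share the same vector sum $\beta_{1i}+\beta_{1j}^{*}$: outside the extremal stage the $\gamma$-group is no longer a singleton and the peel-off step is not immediate. My preferred workaround is the observation that once a pair $(i^{*},j^{*})$ has been matched, the factor $h(x,\eta_{i^{*}})-h(x,\eta_{j^{*}}^{*})$ vanishes identically, so this pair drops out of every group to which it belongs and the induction closes on a strictly smaller problem, with the generic-direction argument guaranteeing a fresh singleton at each new extremal stage. A cleaner but stronger alternative would be to upgrade scalar linear independence of $\{e^{\gamma^{\top}x}\}$ to joint linear independence of $\{e^{\gamma^{\top}x}h(x,\eta)\}$ over admissible $(\gamma,\eta)$, which is essentially the content of the strong identifiability condition used in the body of the paper.
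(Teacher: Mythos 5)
Your route — cross-multiplying by both softmax denominators and grouping the resulting exponentials $e^{(\beta_{1i}+\beta^*_{1j})^\top x}$ — is genuinely different from the paper's proof, which never clears denominators: it first uses the linear independence of $\{h(\cdot,\eta)\}$ over distinct $\eta$ (the Definition~\ref{def:general_conditions}/\ref{def:activation_conditions} hypothesis) to force $k=k_*$ and match the experts, then matches the softmax weights as functions and pins down the gating parameters via translation invariance of softmax plus the gauge $\beta^*_{0k_*}=0$, $\beta^*_{1k_*}=0_d$ of Assumption A.3. Unfortunately, as written your argument has two genuine gaps. First, the group-separation step is not justified: linear independence of $x\mapsto e^{\gamma^\top x}$ over distinct $\gamma$ holds against \emph{constant} coefficients, not against bounded continuous coefficient functions — take $e^{\gamma_1^\top x}c_1(x)+e^{\gamma_2^\top x}c_2(x)=0$ with $c_1(x)=e^{(\gamma_2-\gamma_1)^\top x}$ and $c_2\equiv -1$, both bounded on the bounded input space $\mathcal{X}$ of Assumption A.1. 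The extremal-direction justification you invoke needs $x\to\infty$ along a ray, which is unavailable here because the identity holds only $\mu$-a.e.\ on a bounded set. What actually licenses the separation is the identifiability condition on the experts (linear independence of the family $\{h(\cdot,\eta_j)\}$ combined with the distinct exponentials), i.e.\ exactly the hypothesis the paper's proof invokes in its first line; in your proposal this appears only as an optional ``stronger alternative'' and is never used in the main argument. Note also that the proposition is not an unconditional fact about softmax MoE — without such a condition on $h$ it can fail — so any correct proof must use it.

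Second, the peel-off induction does not close. After matching the extremal pair $(i^*,j^*)$ and deleting it, the next extremal group is in general \emph{not} a singleton, and no choice of direction $v$ can make it one, because the ties are between equal exponent vectors rather than accidental collisions of projections: whenever a fitted slope equals a true slope (which is exactly what happens at the solution $G=G_*$, where the identity certainly holds), two distinct pairs share the same $\gamma$. Concretely, take $k=k_*=2$ with $\beta_{11}=\beta^*_{11}\neq 0_d$ and $\beta_{12}=\beta^*_{12}=0_d$: the exponents are $2\beta^*_{11}$ for $(1,1)$, $\beta^*_{11}$ for both $(1,2)$ and $(2,1)$, and $0_d$ for $(2,2)$, so after removing $(1,1)$ the maximal group is $\{(1,2),(2,1)\}$, and your claim of ``a fresh singleton at each new extremal stage'' is false. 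Untangling such multi-element groups (to conclude $\eta_2=\eta^*_2$ and to match the $\beta_0$'s) again requires linear independence of the expert functions, at which point you are essentially reconstructing the paper's argument. As it stands, the proposal therefore establishes neither $k=k_*$ nor the full matching of $(\beta_{0i},\beta_{1i},\eta_i)$ with $(\beta^*_{0j},\beta^*_{1j},\eta^*_j)$.
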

\begin{proof}[Proof of Proposition~\ref{prop:general_identifiability}]
    Since $f_{G}(x)=f_{G_*}(x)$ for almost every $x$, we have
    \begin{align}
        \label{eq:general_identifiable_equation}
        \sum_{i=1}^{k}\softmax\Big((\beta_{1i})^{\top}x+\beta_{0i}\Big)\cdot h(x,\eta_i)=\sum_{i=1}^{k_*}\softmax\Big((\boi)^{\top}x+\bzi\Big)\cdot h(x,\eta^*_i).
    \end{align}
    As the expert function $h$ satisfies the conditions in Definition~\ref{def:general_conditions}, the set $\{h(x,\eta'_i):i\in[k']\}$, where $\eta'_1,\ldots,\eta'_{k'}$ are distinct vectors for some $k'\in\mathbb{N}$, is linearly independent. If $k\neq k_*$, then there exists some $i\in[k]$ such that $\eta_i\neq\eta^*_j$ for any $j\in[k_*]$. This implies that $\softmax((\beta_{1i})^{\top}x+\beta_{0i})=0$, which is a contradiction. Thus, we must have that $k=k_*$. As a result, 
    \begin{align*}
        \Big\{\softmax\Big((\beta_{1i})^{\top}x+\beta_{0i}\Big):i\in[k]\Big\}=\Big\{\softmax\Big((\boi)^{\top}x+\bzi\Big):i\in[k_*]\Big\},
    \end{align*}
    for almost every $x$. WLOG, we may assume that 
    \begin{align}
        \label{eq:general_soft-soft}
        \softmax\Big((\beta_{1i})^{\top}x+\beta_{0i}\Big)=\softmax\Big((\boi)^{\top}x+\bzi\Big),
    \end{align}
    for almost every $x$ for any $i\in[k_*]$. It is worth noting that the $\softmax$ function is invariant to translations, then equation~\eqref{eq:general_soft-soft} indicates that $\beta_{1i}=\boi+v_1$ and $\beta_{0i}=\bzi+v_0$ for some $v_1\in\mathbb{R}^d$ and $v_0\in\mathbb{R}$. However, from the assumptions $\beta_{1k}=\beta^*_{1k}=\zerod$ and $\beta_{0k}=\beta^*_{0k}=0$, we deduce that $v_1=\zerod$ and $v_0=0$. Consequently, we get that $\beta_{1i}=\boi$ and $\beta_{0i}=\bzi$ for any $i\in[k_*]$. Then, equation~\eqref{eq:general_identifiable_equation} can be rewritten as
    \begin{align}
        \label{eq:general_new_identifiable_equation}
        \sum_{i=1}^{k_*}\exp(\beta_{0i})\exp\Big((\beta_{1i})^{\top}x\Big)h(x,\eta_i)=\sum_{i=1}^{k_*}\exp(\bzi)\exp\Big((\boi)^{\top}x\Big)h(x,\eta^*_i),
    \end{align}
    for almost every $x$. Next, we denote $P_1,P_2,\ldots,P_m$ as a partition of the index set $[k_*]$, where $m\leq k$, such that $\exp(\beta_{0i})=\exp(\beta^*_{0i'})$ for any $i,i'\in P_j$ and $j\in[k_*]$. On the other hand, when $i$ and $i'$ do not belong to the same set $P_j$, we let $\exp(\beta_{0i})\neq\exp(\beta_{0i'})$. Thus, we can reformulate equation~\eqref{eq:general_new_identifiable_equation} as
    \begin{align*}
        \sum_{j=1}^{m}\sum_{i\in{P}_j}\exp(\beta_{0i})\exp\Big((\beta_{1i})^{\top}x\Big)h(x,\eta_i)=\sum_{j=1}^{m}\sum_{i\in{P}_j}\exp(\bzi)\exp\Big((\boi)^{\top}x\Big)h(x,\eta^*_i),
    \end{align*}
    for almost every $x$. Recall that $\beta_{1i}=\boi$ and $\beta_{0i}=\bzi$ for any $i\in[k_*]$, then the above leads to
    \begin{align*}
        \{\eta_i:i\in P_j\}\equiv\{\eta^*_i:i\in P_j\},
    \end{align*}
    for almost every $x$ for any $j\in[m]$. 
    As a consequence, 
    \begin{align*}
        G=\sum_{j=1}^{m}\sum_{i\in P_j}\exp(\beta_{0i})\delta_{(\beta_{1i},\eta_i)}=\sum_{j=1}^{m}\sum_{i\in P_j}\exp(\beta_{0i})\delta_{(\boi,\eta^*_i)}=G_*.
    \end{align*}
    Hence, we reach the conclusion of this proposition.
\end{proof}

\section{Numerical Experiments}
\label{sec:experiments}
In this section, we conduct a simulation study to empirically demonstrate that the convergence rates of least square estimation under the softmax gating MoE model with ridge experts $h_1(x,(a,b))=\mathrm{sigmoid}(ax+b)$ are significantly faster than those obtained when using linear experts $h_2(x,(a,b))=ax+b$. We conduct those experiments under both the exact-specified setting (\emph{when the true number of experts $k_*$ is known}) and the over-specified setting (\emph{when the true number of experts $k_*$ is unknown}).

\textbf{Synthetic Data.} 
First, we assume that the true mixing measure $G_*=\sum_{i=1}^{k_*}\exp(\beta^*_{0i})\delta_{(\beta^*_{1i},a^*_i,b^*_i)}$ is of order $k_* = 2$ and associated with the following ground-truth parameters: 

$$\beta^*_{01} = 0.0, \quad  \beta^*_{11} = 1.0, \quad a^*_1 = -1.0, \quad b_1^* = 2.0,$$

$$\beta^*_{02} = 0.0, \quad \beta^*_{12} = 0.0, \quad a^*_2 = 1.0, \quad b_2^* = 2.0.$$

Then, we generate i.i.d samples $\{(X_i, Y_i)\}_{i=1}^n$ by first sampling $X_i$'s from the uniform distribution $\mathrm{Uniform}[0, 1]$ and then sampling $Y_i$'s from the regression equation

$$Y_i=f_{G_*}(X_i)+\varepsilon_i,$$

where $\varepsilon_1,\ldots,\varepsilon_n$ are independent Gaussian noise variables such that $\mathbb{E}[\varepsilon_i|X_i]=0$ and $\var(\varepsilon_i|X_i)=1$.

\textbf{Initialization.} 
For each $k\in\{k_*,k_*+1\}$, we randomly distribute elements of the set $\{1, 2, ..., k\}$ into $k_*$ different Voronoi cells $\mathcal{A}_1, \mathcal{A}_2, \ldots, \mathcal{A}_{k_*}$, each contains at least one element. Moreover, we repeat this process for each replication. Subsequently, for each $j\in[k_*]$, we initialize parameters $\beta_{1i}$ by sampling from a Gaussian distribution centered around its true counterpart $\beta^*_{1j}$ with a small variance, where $i\in \mathcal{A}_j$. Other parameters $\beta_{0i},a_i,b_i$ are also initialized similarly.

\textbf{Training.} We use the stochastic gradient descent algorithm to minimize the mean square losses. We conduct 20 sample generations for each configuration, across a spectrum of 20 different sample sizes $n$ ranging from $10^4$ to $10^5$. Finally, we generate log-log scaled plots for the Voronoi loss functions. For ridge experts, we use the Voronoi loss $\mathcal{D}_2$ given in Section~\ref{sec:input-dependent_experts}, while for linear experts, we use the Voronoi loss $\mathcal{D}_{3,r}$ in Section~\ref{sec:polynomial_experts}.

\begin{itemize}
    \item \textit{Exact-specified setting:} Under this setting, as the true number of experts $k_*$ is known, we set $k=k_*=2$.
    \item \textit{Over-specified setting:} Under this setting, as $k_*$ is unknown, we over-specified the true model by setting $k=3$.
\end{itemize}

\textbf{Remark.} From Figure~\ref{fig:linear}, it can be seen that under the exact-specified and over-specified settings, the convergence rates of least square estimators $\widehat{G}_n$ when using linear experts are significantly slow, at orders $\mathcal{O}(n^{-0.06})$ and $\mathcal{O}(n^{-0.04})$, respectively. This observation totally aligns with our theoretical result in Theorem~\ref{theorem:linear_experts}.

On the other hand, Figure~\ref{fig:sigmoid} indicates that when using ridge experts, the least square estimator $\widehat{G}_n$ converges to $G_*$ at much faster rates, at order $\mathcal{O}(n^{-0.54})$ under the exact-specified setting, and at order $\mathcal{O}(n^{-0.57})$ under the over-specified setting. These empirical rates match the theoretical rate $\mathcal{O}(n^{-0.5})$ captured in Theorem~\ref{theorem:activation_experts}.

\begin{figure}[!ht]
    \centering
    \begin{subfigure}{.45\textwidth}
        \centering
        \includegraphics[scale = .5]{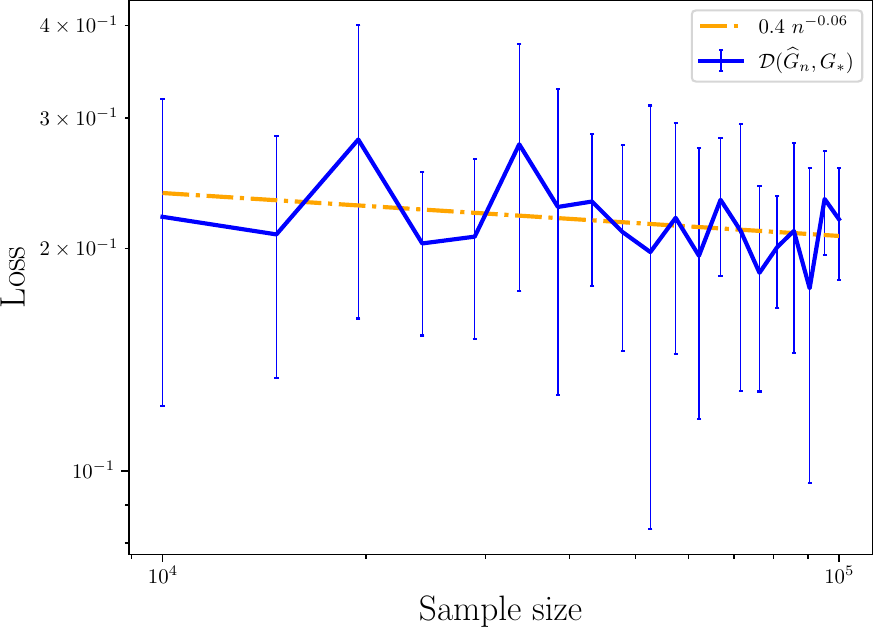}
        \caption{Exact-specified setting}	
        \label{fig:linear_exact}
    \end{subfigure}
    \hspace{0.6cm}
    \begin{subfigure}{.45\textwidth}
        \centering
        \includegraphics[scale = .5]{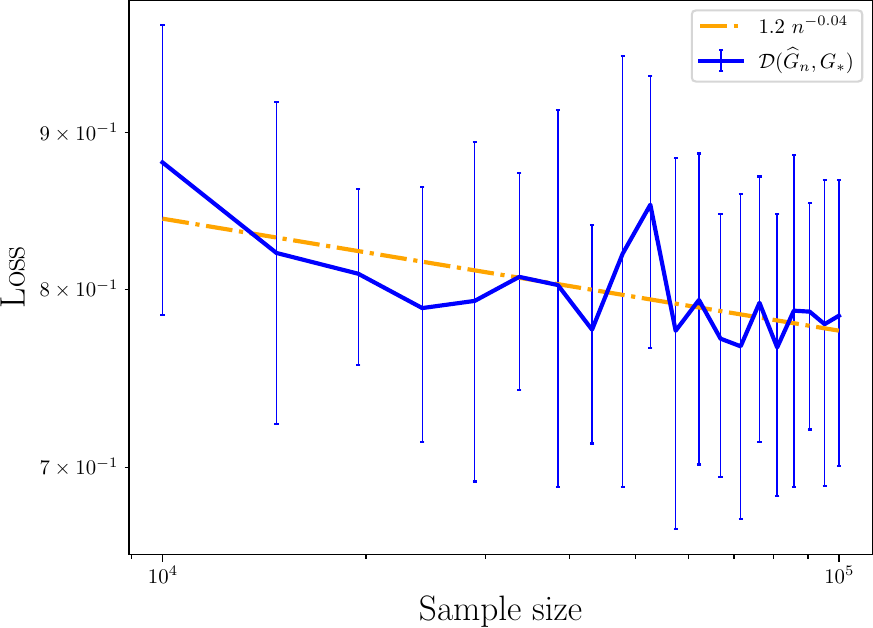}
        \caption{Over-specified setting}	
        \label{fig:linear_over}
    \end{subfigure}%
    \caption{Log-log scaled plots illustrating empirical convergence rates of parameter estimation in the softmax gating mixture of \textbf{linear experts} under the exact-specified setting (Figure~\ref{fig:linear_exact}) and the over-specified setting (Figure~\ref{fig:linear_over}). The blue curves depict the mean discrepancy between the least squares estimator $\widehat{G}_n$ and the true mixing measure $G_*$ under the loss $\mathcal{D}_{3,r}$, accompanied by error bars signifying two empirical standard deviations. Additionally, an orange dash-dotted line represents the least-squares fitted linear regression line for these data points.}
    \label{fig:linear}
\end{figure}

\begin{figure}[!ht]
    \centering
    \begin{subfigure}{.45\textwidth}
        \centering
        \includegraphics[scale = .5]{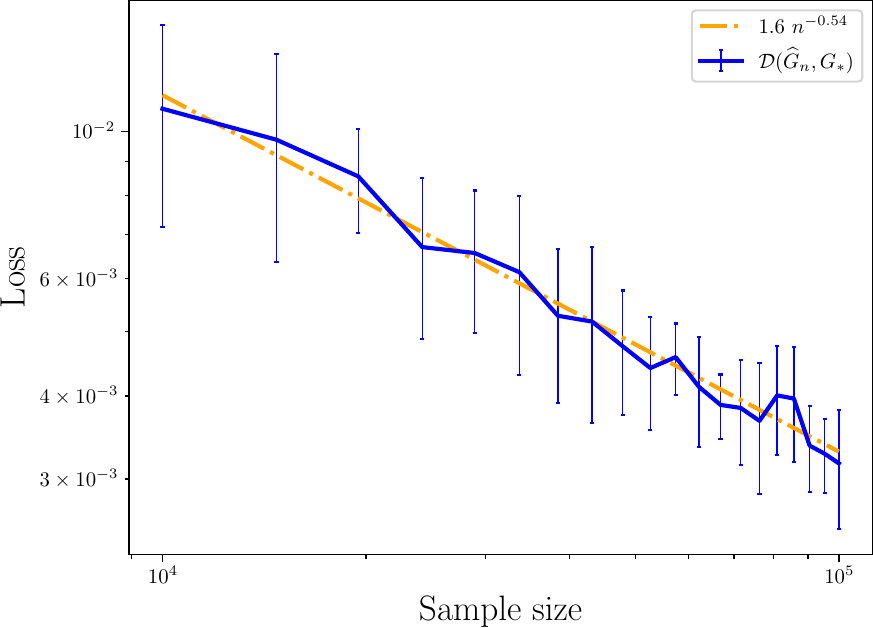}
        \caption{Exact-specified setting}	
        \label{fig:sigmoid_exact}
    \end{subfigure}
    \hspace{0.6cm}
    \begin{subfigure}{.45\textwidth}
        \centering
        \includegraphics[scale = .5]{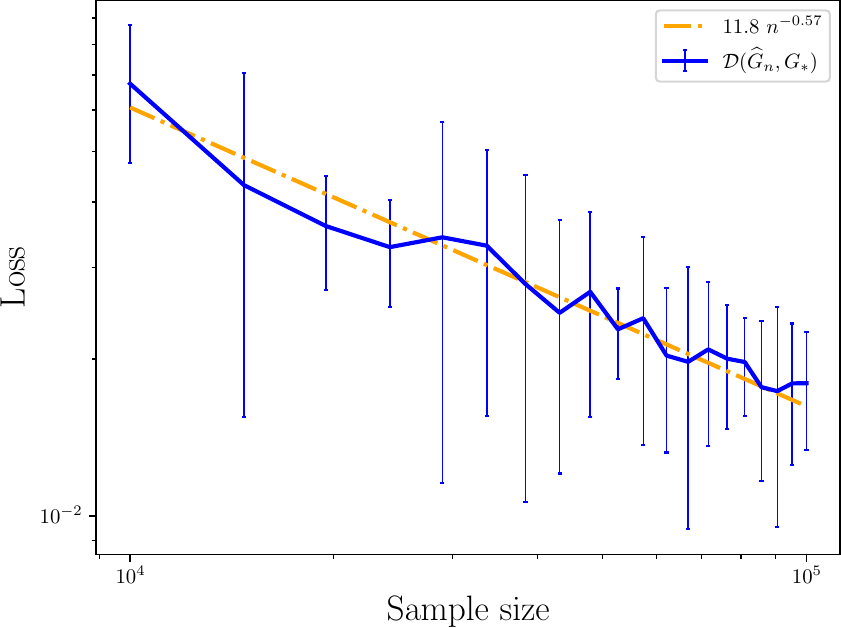}
        \caption{Over-specified setting}	
        \label{fig:sigmoid_over}
    \end{subfigure}%
    \caption{Log-log scaled plots illustrating empirical convergence rates of parameter estimation in the softmax gating mixture of \textbf{ridge experts} with the sigmoid activation under the exact-specified setting (Figure~\ref{fig:sigmoid_exact}) and the over-specified setting (Figure~\ref{fig:sigmoid_over}). The blue curves depict the mean discrepancy between the least squares estimator $\widehat{G}_n$ and the true mixing measure $G_*$ under the loss $\mathcal{D}_{2}$, accompanied by error bars signifying two empirical standard deviations. Additionally, an orange dash-dotted line represents the least-squares fitted linear regression line for these data points.}
    \label{fig:sigmoid}
\end{figure}

\end{document}